\documentclass[11pt]{article}
\usepackage[margin=1in]{geometry}
\usepackage{authblk}

\usepackage[utf8]{inputenc}
\usepackage[T1]{fontenc}
\usepackage{hyperref}
\usepackage{url}
\usepackage{booktabs}
\usepackage{amsfonts}
\usepackage{nicefrac}
\usepackage{microtype}
\usepackage{xcolor}

\usepackage{graphicx}
\usepackage{algorithm}
\usepackage{algorithmic}

\usepackage{amsmath}

\usepackage{amssymb}
\usepackage{mathtools}
\usepackage{amsthm}

\definecolor{algcomment}{RGB}{100,100,160}

\usepackage{enumitem}
\usepackage{multirow}
\usepackage{wrapfig}

\usepackage{natbib}

\usepackage[capitalize,noabbrev]{cleveref}

\theoremstyle{plain}
\newtheorem{theorem}{Theorem}[section]
\newtheorem{proposition}[theorem]{Proposition}
\newtheorem{lemma}[theorem]{Lemma}
\newtheorem{corollary}[theorem]{Corollary}
\theoremstyle{definition}

\newtheorem{assumption}[theorem]{Assumption}
\theoremstyle{remark}
\newtheorem{remark}[theorem]{Remark}

\usepackage{tikz}

\title{Flow Matching with Missing Data}

\author[]{Fairoz Nower Khan}
\author[]{Nabuat Zaman Nahim}
\author[]{Peizhong~Ju}
\affil[]{Department of Computer Science, University of Kentucky}
\date{}

\begin{document}

\maketitle

\begin{abstract}
Flow matching assumes fully observed training data, which many real-world applications
rarely provide. We propose Missing-Data Flow Matching, which treats the
missing coordinates of training samples as latent variables and averages the flow matching loss
over the values they could take. We first prove the correction is
exact rather than approximate. Under missing completely at random with true
completions, the incomplete-data objective equals the complete-data objective,
so missingness changes nothing about what flow matching learns and the entire
difficulty relocates to the completion model. Our finite-sample analysis then
answers design questions that the algorithm leaves open, and the answers are
not the ones intuition suggests. Missingness transfers estimator variance
rather than adding it, one completion per example already matches complete-data
variance exactly, and under a fixed evaluation budget one completion is
optimal. A learned completion model contributes a single irreducible bias, which we bound
by its expected conditional Wasserstein distance to the true completion law. Experiments
numerically validate the theoretical predictions, show that deterministic rather than frozen imputation is what collapses
the generated distribution, and place our method alongside strong classical and deep imputation baselines on real
tabular data.

\end{abstract}

\section{Introduction}

Flow matching has become an effective approach to generative modeling. It learns
a time-dependent vector field that transports a base distribution to the data
distribution and trains it through a stable regression objective without
simulating the generative dynamics during training
\citep{lipman2023flow,tong2024improving}. Standard flow matching, however,
assumes that every training endpoint is fully observed.

However, real-world data are often incomplete. Medical records omit tests that were not
performed, sensors drop readings, and surveys leave questions unanswered. Since the training loss needs a complete data point as the endpoint of its
interpolation path, missingness prevents standard flow matching from constructing the endpoint, its
interpolation path, and the corresponding regression target.

\begin{figure}[t]
\centering
\begin{tikzpicture}[scale=1, every node/.style={font=\small}]

\begin{scope}
  \node[font=\bfseries] at (0,2.75) {Point imputation};
  \draw[gray!60,thick,dashed,domain=-2.2:2.2,samples=60,smooth]
        plot (\x,{1.5*exp(-\x*\x)});
  \node[gray!70] at (1.95,1.0) {\footnotesize true spread};
  \draw[red,line width=1.4pt,->] (0,0) -- (0,2.15);
  \filldraw[red] (0,0) circle (1.6pt);
  \node[red] at (0,2.30) {\footnotesize point mass at $\mu$};
  \draw[->,thick] (-2.4,0) -- (2.4,0) node[right]{$x_{-\Lambda}$};
  \node[align=center] at (0,-1.0)
        {generated $\operatorname{Var}(x_{-\Lambda}\mid x_\Lambda)=0$};
\end{scope}

\begin{scope}[shift={(8,0)}]
  \node[font=\bfseries] at (0,2.75) {Resample (ours)};
  \fill[blue!12,domain=-2.2:2.2,samples=60]
        plot (\x,{1.5*exp(-\x*\x)}) -- (2.2,0) -- (-2.2,0) -- cycle;
  \draw[blue!60!black,thick,domain=-2.2:2.2,samples=60,smooth]
        plot (\x,{1.5*exp(-\x*\x)});
  \node[blue!60!black] at (0,1.9) {\footnotesize target $=$ truth};
  \draw[->,thick] (-2.4,0) -- (2.4,0) node[right]{$x_{-\Lambda}$};
  \node[align=center] at (0,-1.0)
        {generated $\operatorname{Var}(x_{-\Lambda}\mid x_\Lambda)=s^2$};
\end{scope}

\end{tikzpicture}
\caption{Point imputation collapses the conditional distribution of a missing
coordinate. Flow matching learns from the distribution of its training endpoints.
Left, deterministic imputation sets every endpoint to the conditional mean $\mu$,
so the generated coordinate has conditional variance $0$ and loses the true spread
(dashed). Right, resampling from the true conditional preserves that spread.
Both preserve the mean, but only resampling preserves conditional variability.}
\label{fig:idea}
\end{figure}

A standard solution is point imputation, which fills each missing value with a
single estimate such as its conditional mean
\citep{little2019statistical}. This can be adequate for predictive tasks that
depend mainly on the mean, but it is poorly suited to generative learning. Flow matching learns to reproduce the
distribution of the endpoints it trains on, so filling each missing part with one
value collapses a whole range of plausible values into a single point that the
model then learns to generate. The very reason multiple imputation was
introduced, that a single fill understates variability \citep{rubin1976inference},
is the reason it breaks a generative model.
The left subfigure of Figure~\ref{fig:idea} shows this collapse.

To solve that problem, we treat the missing coordinates as latent variables.
Instead of committing to one value, we draw fresh completions of the missing
part from its conditional distribution given the observed part and average the
flow matching loss over them. We name our method Missing-Data Flow Matching (MDFM).

An important result we prove is that this correction is exact, not approximate. When data are missing completely at random (MCAR) \citep{rubin1976inference}, and
the completions come from the true conditional distribution, training flow
matching from incomplete data is exactly equivalent to training it from complete
data. The two objectives are the same function of the model, so wherever they are
differentiable their gradients agree. Missingness does not change what flow matching is
trying to learn. It relocates
the difficulty to estimating the conditional completion distribution.

Although the theory we prove shows that the algorithm does not need a huge modification from vanilla flow matching, we find that some key design choices will greatly affect the performance and thus needs a more careful investigation. How many completions
should each example receive, and does heavier missingness demand more? How should
a fixed evaluation budget be split between completions and examples? How much
does an imperfect completion model cost? Our finite-sample analysis answers these
at a fixed parameter setting, and our trained-model experiments answer the
separate question of whether completions should be drawn fresh every epoch or
drawn once and frozen.

Missing data does not make the loss estimate noisier. It moves the fluctuation
into a place that averaging over completions can remove, and our bound on the
total estimator variance does not depend on how much data is missing. One
completion per example turns out to be enough, and under a fixed evaluation
budget drawing more is worse than collecting more incomplete examples. What
matters once the sample is reasonably large is the quality of the completion
model alone, and we identify the distance between that model and the truth that
controls the remaining bias.

Our work sits between two recent lines. Score matching with missing data
corrects the training objective for incomplete data
\citep{givens2025score}, while methods like conditional flow matching for imputation (CFMI) use flow matching to impute
missing values well \citep{simkus2025cfmi}. We use a completion model as the
engine, but our target is the full generative flow, and our theory says exactly
when and why plugging completions into flow matching recovers the complete data
model.

\noindent\textbf{Contributions.} We summarize our contributions as follows.
\begin{itemize}
    \item We introduce Missing-Data Flow Matching, which trains flow matching from partially observed data by treating missing coordinates as
    resampled latent variables. With oracle
    completions, its population objective equals complete-data flow matching
    under MCAR, and we extend the analysis to MAR and MNAR.
    \item We derive a finite-sample variance decomposition showing that one
    oracle completion matches complete-data variance and is optimal under a
    fixed evaluation budget. For learned completions, we bound the
    population-objective bias by their conditional Wasserstein distance to the
    truth.
    \item We validate the theoretical predictions on synthetic data and show
    that stochastic completion preserves generated conditional variability.
    On real tabular data, MDFM is competitive with strong imputation pipelines
    and performs best on covariance preservation.
\end{itemize}

\section{Related Work}
\noindent\textbf{Generative modeling from incomplete data.}
Flow matching and related continuous generative models assume fully observed
training endpoints \citep{lipman2023flow,tong2024improving,chen2018neural,
liu2023flow,albergo2023building}. Deep latent variable models learn from
incomplete data directly through importance weighted or arbitrary conditioning
autoencoders, adversarial models, and conditional normalizing flows
\citep{mattei2019miwae,ivanov2019variational,li2019misgan,li2020acflow}, and a
parallel line learns from lossy measurements without ever seeing a clean sample
\citep{bora2018ambientgan,daras2023ambient}. Those works change the generative
training signal to account for the corruption operator. We leave the flow
matching loss untouched and repair only the endpoint.

\noindent\textbf{Imputation and objective correction.}
Classical methods such as MICE and MissForest remain strong on tabular data
\citep{van2011mice,stekhoven2012missforest}, deep imputers scale further
\citep{yoon2018gain,tashiro2021csdi,jolicoeur2024generating}, and flow based
imputers such as CFMI report state of the art results \citep{simkus2025cfmi}.
These produce completions rather than a joint model, so in our framework they are
candidates for $q_\phi$ rather than competitors. A separate line changes the
training objective itself. MissDiff masks the denoising score matching loss on
the missing coordinates and proves consistency of the learned score
\citep{ouyang2023missdiff}, and missing data score matching marginalizes the
unobserved coordinates through a marginal score identity
\citep{hyvarinen2005estimation,givens2025score}, both within the classical
framework of Rubin \citep{rubin1976inference,little2019statistical}. MDFM changes
neither the loss nor the target. It supplies the endpoint the ordinary loss
already needs, which is why its correction is an exact equality rather than a
consistency statement.

\section{Preliminaries and Problem Setup}

We write the full data vector as $X=(X^1,\dots,X^d)\in\mathbb R^d$, where
$[d]=\{1,\dots,d\}$ indexes the coordinates. Throughout the paper, superscripts
index coordinates and subscripts index flow time, which keeps a coordinate $X^j$
separate from the flow endpoints defined below. The data follow an unknown
distribution $p^\star$ on $\mathbb R^d$, and $\|\cdot\|$ is the Euclidean norm.
We write a set of observed coordinates as $\Lambda\subseteq[d]$ and its
complement, the missing coordinates, as $-\Lambda:=[d]\setminus\Lambda$. For a vector $x\in\mathbb R^d$, the observed part
$x_\Lambda\in\mathbb R^{|\Lambda|}$ and the missing part
$x_{-\Lambda}\in\mathbb R^{d-|\Lambda|}$ are sub-vectors that do not record where
those coordinates sit inside $x$, so we always carry $\Lambda$ alongside
$x_\Lambda$, and $(x_\Lambda,x_{-\Lambda})$ denotes the full vector reassembled
in its original coordinates. Appendix~\ref{app:notation} collects the notation, together with an explicit
statement of which variables each expectation and variance in the analysis
averages over and which it holds fixed.

\subsection{Flow Matching}

Flow matching learns a vector field that transports a base distribution to the
data distribution. It draws a base sample $X_0\sim p_0$ and a data sample
$X_1\sim p^\star$, then connects them with the straight line path
$X_t=(1-t)X_0+tX_1$ for $t\in[0,1]$.  A neural vector field $v_\theta(x,t)$ with
parameters $\theta$ is trained to predict the constant velocity $X_1-X_0$ of
this path. Writing the per-sample loss as
\[
\begin{aligned}
    \ell_\theta(x_0,x_1,t):=\big\|v_\theta(x_t,t)-(x_1-x_0)\big\|^2,
    \; \\x_t=(1-t)x_0+tx_1,
\end{aligned}
\]
the complete-data flow matching objective is
\[
    \mathcal L_{\mathrm{FM}}(\theta) :=
    \mathbb E_{X_0\sim p_0,\ X_1\sim p^\star,\ T\sim\mathrm{Unif}[0,1]}
    \big[\ell_\theta(X_0,X_1,T)\big],
\]
where the three variables are drawn independently. It will be useful to average
out the base point and the time while holding the endpoint fixed. We write
\[
    g_\theta(x_1) :=
    \mathbb E_{X_0\sim p_0,\ T\sim\mathrm{Unif}[0,1]}\big[\ell_\theta(X_0,x_1,T)\big],
\]
which is a deterministic function of $x_1$, so that
$\mathcal L_{\mathrm{FM}}(\theta)=\mathbb E_{X_1\sim p^\star}[g_\theta(X_1)]$.
Every expectation and variance in this paper is taken at a fixed parameter
$\theta$, and carries a subscript naming the variables it averages over.
Appendix~\ref{app:notation} states these for each quantity in the analysis.
Minimizing $\mathcal L_{\mathrm{FM}}$ gives a field whose flow map pushes $p_0$
toward $p^\star$ \citep{lipman2023flow,tong2024improving}. We use $T$ for the
random time and $t$ for a fixed value.

\subsection{Missing Data and the MCAR Mechanism}

With missing data we never see the full vector $X$, only a partially observed
version of it. A binary mask $M\in\{0,1\}^d$ records which coordinates are
present, and the observed set is $\Lambda=\{j\in[d]:M^j=1\}$. Each training
example is therefore an incomplete observation
\[
    O=(\Lambda,X_\Lambda),
\]
which reveals the observed coordinates $X_\Lambda$ and hides the rest. The
endpoint that flow matching needs is $X_1=(X_\Lambda,X_{-\Lambda})$, but only
$X_\Lambda$ is available, so the missing part $X_{-\Lambda}$ must be supplied
before the path $X_t$ and the target $X_1-X_0$ can be formed. This gap is what
our method fills.

We work in the missing completely at random regime, the simplest mechanism in the
framework of Rubin \citep{rubin1976inference,little2019statistical}. Under MCAR
the mask is independent of the data, $M\perp X$, so whether a coordinate is
observed carries no information about its value.

Writing $p^\star_{X_\lambda}$ for the marginal of $p^\star$ on a coordinate set
$\lambda$, MCAR gives the property we use throughout. For every $\lambda$ with
$\mathbb P(\Lambda=\lambda)>0$, the observed part follows exactly that marginal,
$X_\Lambda\mid\{\Lambda=\lambda\}\sim p^\star_{X_\lambda}$, so conditioning on
which coordinates are seen does not distort the distribution of what is seen.

Two quantities describe how much is missing and how we handle it. For an
observed set $\Lambda$, the number of missing coordinates is
$m(\Lambda)=|-\Lambda|$, and the average missing fraction across the data is
\[
    \rho=\frac{\mathbb E[m(\Lambda)]}{d}\in[0,1],
    \qquad\text{so}\quad \mathbb E[m(\Lambda)]=\rho d.
\]
To fill the missing part we use a completion model that samples plausible values
of $X_{-\Lambda}$ from the observed part. The ideal completion is the true posterior of the missing coordinates given
everything observed, including the pattern. Under MCAR the mask is independent of
the data, so this posterior reduces to the conditional law
$p^\star_{X_{-\Lambda}\mid X_\Lambda}(x_{-\Lambda}\mid x_\Lambda)$, which we call
the oracle and abbreviate to $p^\star(x_{-\Lambda}\mid x_\Lambda)$.
Proposition~\ref{prop:general} treats the general case. In practice we use a
learned model $q_\phi(x_{-\Lambda}\mid x_\Lambda,\Lambda)$, which retains
$\Lambda$ not for statistical reasons but because one network serving many
patterns must be told which coordinates it is looking at. Our analysis studies the oracle first and then studies the realistic situation using $q_{\phi}$.

\section{Missing-Data Flow Matching}

We now build an objective that trains flow matching from incomplete data. We keep the flow matching loss $\ell_\theta$ untouched and change
only how we supply the endpoint it needs.

\subsection{The Oracle Objective}

The problem is a missing endpoint, so we treat it as a latent variable. For an
incomplete observation $O=(\Lambda,X_\Lambda)$, the observed part $X_\Lambda$ is
fixed and the missing part $X_{-\Lambda}$ is unknown. A single guess for
$X_{-\Lambda}$ pretends there is only one plausible value when there are usually
many. Instead we average over the plausible values, weighting each one by how
likely it is under the true conditional $p^\star(x_{-\Lambda}\mid X_\Lambda)$.

This turns one incomplete example into an expectation over completions. Given
$X_\Lambda$, we draw $X_{-\Lambda}\sim p^\star(\cdot\mid X_\Lambda)$, form the
completed endpoint $X_1=(X_\Lambda,X_{-\Lambda})$, and evaluate the usual flow
matching loss on it. The completed endpoint is constructed, not observed. The training procedure never requires an example to arrive fully observed, although Section~\ref{sec:learnedest} notes a separate identifiability condition. Averaging over the missing part, the base point, the time,
and the observed pattern gives the oracle missing-data flow matching objective
\begin{equation}
\label{eq:mdfm}
    \mathcal L_{\mathrm{MDFM}}(\theta)
    =
    \mathbb E_{\Lambda}
    \mathbb E_{X_\Lambda}
    \mathbb E_{X_{-\Lambda}\sim p^\star(\cdot\mid X_\Lambda)}
    \mathbb E_{X_0,T}
    \big[\ell_\theta(X_0,X_1,T)\big],
\end{equation}
with $X_1=(X_\Lambda,X_{-\Lambda})$.

We call this the oracle objective because it assumes the true conditional. The
word oracle marks the one thing we do not have in practice, which is access to
$p^\star(x_{-\Lambda}\mid x_\Lambda)$. We study it first because it isolates the
effect of missingness cleanly, and we bring in a learned model afterward.

\subsection{Exact Correction at the Population Level}

Our central result is that the oracle objective equals the complete-data
objective. Averaging over true completions does not merely approximate
complete-data flow matching, it reproduces it exactly. We state this after one
regularity condition, since MCAR was already fixed in the setup.

\begin{assumption}[Regularity]
\label{ass:reg}
For every $\lambda$ with $\mathbb P(\Lambda=\lambda)>0$, the conditional
$p^\star(x_{-\lambda}\mid x_\lambda)$ exists as a regular conditional
distribution, $p_0$ and $p^\star$ have finite second moment, and $v_\theta$ is
measurable and square integrable under the induced path distribution.
\end{assumption}

Appendix~\ref{app:reg} gives simple sufficient conditions, including linear
growth of $v_\theta$, under which Assumption~\ref{ass:reg} holds.

\begin{theorem}[Oracle equivalence]
\label{thm:oracle}
Under MCAR and Assumption~\ref{ass:reg}, for every $\theta$,
\[
    \mathcal L_{\mathrm{MDFM}}(\theta)=\mathcal L_{\mathrm{FM}}(\theta).
\]
\end{theorem}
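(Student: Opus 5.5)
The plan is to reduce both objectives to expectations of the deterministic function $g_\theta$ of the endpoint and then show that the endpoint distributions coincide. First, because $(X_0,T)$ is drawn independently of $(\Lambda,X_\Lambda,X_{-\Lambda})$, I will use Fubini/Tonelli to integrate out $X_0$ and $T$ first in \eqref{eq:mdfm}. This gives
\[
\mathcal L_{\mathrm{MDFM}}(\theta)=\mathbb E_{\Lambda}\,\mathbb E_{X_\Lambda}\,\mathbb E_{X_{-\Lambda}\sim p^\star(\cdot\mid X_\Lambda)}\big[g_\theta(X_\Lambda,X_{-\Lambda})\big].
\]
Since $\ell_\theta\ge 0$, Tonelli applies without any integrability check. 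Assumption~\ref{ass:reg} is only needed to know that the common value is finite and that $g_\theta$ is a measurable function of $x_1$.

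Next, I will fix a pattern $\lambda$ with $\mathbb P(\Lambda=\lambda)>0$. By the MCAR property recorded in the setup, $X_\Lambda\mid\{\Lambda=\lambda\}\sim p^\star_{X_\lambda}$. The completion is then drawn from the regular conditional $p^\star(\cdot\mid x_\lambda)$, whose existence is guaranteed by Assumption~\ref{ass:reg}. The key step is the disintegration identity: the joint law obtained by pairing the marginal $p^\star_{X_\lambda}$ with the kernel $p^\star(x_{-\lambda}\mid x_\lambda)$ is exactly $p^\star$, after reassembling the coordinates in their original positions. This is the defining property of a regular conditional distribution. So, conditionally on $\Lambda=\lambda$, the completed endpoint $X_1=(X_\lambda,X_{-\lambda})$ has law $p^\star$, and the inner expectation equals $\mathbb E_{X_1\sim p^\star}[g_\theta(X_1)]=\mathcal L_{\mathrm{FM}}(\theta)$. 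For the degenerate patterns, $\lambda=[d]$ is trivial (there is nothing to complete), and when $\lambda=\emptyset$ the kernel is just $p^\star$ itself.

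Finally, this inner value does not depend on $\lambda$, so averaging over $\Lambda$ (a finite sum over patterns of positive probability) returns $\mathcal L_{\mathrm{FM}}(\theta)$. I expect the main obstacle to be bookkeeping rather than depth. One point is the reassembly map $(x_\lambda,x_{-\lambda})\mapsto x$, a fixed coordinate permutation for each $\lambda$, which must be handled so that the disintegration statement is literally about $p^\star$ on $\mathbb R^d$. The other is the use of MCAR: conditioning on $\Lambda$ must not distort the law of $X_\Lambda$, and the kernel used is the unconditional $p^\star(\cdot\mid x_\lambda)$ rather than a pattern-dependent one. I would state the disintegration step as a short lemma, namely that for any nonnegative measurable $h$,
\[
\int\!\!\int h(x_\lambda,x_{-\lambda})\,p^\star(dx_{-\lambda}\mid x_\lambda)\,p^\star_{X_\lambda}(dx_\lambda)=\mathbb E_{p^\star}[h(X)],
\]
and then apply it with $h=g_\theta$.
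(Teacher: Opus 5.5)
Your proposal is correct and matches the paper's proof step for step: integrate out $(X_0,T)$ by Fubini/Tonelli, use MCAR to identify the law of $X_\lambda$ given $\Lambda=\lambda$ with $p^\star_\lambda$, apply the disintegration identity with $h=g_\theta$ to get $\mathcal L_{\mathrm{FM}}(\theta)$ for each admissible pattern, and average that constant over $\Lambda$. Your Tonelli justification for nonnegative $h$ is a slightly cleaner way to handle integrability than the main proof's, and it agrees with the paper's own remark in the regularity appendix.
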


\begin{proof}[Proof sketch]
Fix an observed pattern $\lambda$. Under MCAR the observed part follows the true
marginal $p^\star_\lambda$, so marginalizing the completion recovers
$\mathbb E_{X\sim p^\star}[g_\theta(X)]=\mathcal L_{\mathrm{FM}}(\theta)$ for that
pattern by the tower property (Lemma~\ref{lem:condexp}). The result is a constant
free of $\lambda$, so averaging over $\Lambda$ leaves it unchanged. Full proof in
Appendix~\ref{app:proof:oracle}.
\end{proof}

\begin{corollary}[Gradient agreement]
\label{cor:grad}
Under the conditions of Theorem~\ref{thm:oracle}, at every $\theta$ where both
objectives are differentiable,
$\nabla_\theta\mathcal L_{\mathrm{MDFM}}(\theta)=\nabla_\theta\mathcal
L_{\mathrm{FM}}(\theta)$. If in addition $\nabla_\theta\ell_\theta$ is integrable
and differentiation and expectation may be exchanged, the expected stochastic
gradients of the two objectives coincide.
\end{corollary}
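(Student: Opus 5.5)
The plan is to take the first claim, equality of gradients wherever both objectives are differentiable, directly from Theorem~\ref{thm:oracle}. That theorem says $\mathcal L_{\mathrm{MDFM}}$ and $\mathcal L_{\mathrm{FM}}$ coincide as functions of $\theta$ on the whole parameter space. Two functions that agree everywhere have the same difference quotients at every point, so wherever one is differentiable the other is too, with the same derivative. No probabilistic argument is needed for this part; it only uses that the identity holds for \emph{every} $\theta$, and not just at one value.

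For the second claim I would write both objectives as expectations of the same integrand $\ell_\theta(X_0,X_1,T)$ over two joint laws of $(X_0,X_1,T)$. The complete-data objective uses $p_0\otimes p^\star\otimes\mathrm{Unif}[0,1]$. The MDFM objective uses $\Lambda$, then $X_\Lambda$, then the completion $X_{-\Lambda}\sim p^\star(\cdot\mid X_\Lambda)$, with $X_0,T$ drawn independently. The stochastic gradients are $\nabla_\theta\ell_\theta(X_0,X_1,T)$ evaluated under each law. Integrability of $\nabla_\theta\ell_\theta$ together with the assumed exchange of differentiation and expectation gives $\mathbb E[\nabla_\theta\ell_\theta]=\nabla_\theta\mathbb E[\ell_\theta]$ under each sampling scheme. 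Combined with the first part, the two expected stochastic gradients are equal.

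As a check that does not rely on the exchange step, I would also argue through the mechanism of Theorem~\ref{thm:oracle}. Under MCAR, and with regular conditional distributions from Assumption~\ref{ass:reg}, the completed endpoint $X_1=(X_\Lambda,X_{-\Lambda})$ has law $p^\star$ conditionally on each $\Lambda=\lambda$. This is the tower property of Lemma~\ref{lem:condexp} applied to the indicator functions that characterize the law. Averaging over $\Lambda$, $X_1$ has law $p^\star$ and is independent of $(X_0,T)$. So the two sampling schemes induce the same joint law of $(X_0,X_1,T)$, and any integrable functional, including $\nabla_\theta\ell_\theta$, has the same expectation under both.

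The main obstacle is justifying integrability and the exchange of derivative and integral, which the statement simply assumes. If I had to verify it, I would first try dominated convergence with a local envelope $\sup_{\theta'\in U}\|\nabla_\theta\ell_{\theta'}\|$ on a neighborhood $U$. That envelope should be controllable by the linear-growth conditions of Appendix~\ref{app:reg} together with the finite second moments of $p_0$ and $p^\star$.
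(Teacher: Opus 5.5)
Your proof is correct and follows the paper's route: Theorem~\ref{thm:oracle} makes the two objectives the same function of $\theta$, so their derivatives agree wherever they exist, and the exchange hypothesis turns this into equality of the expected stochastic gradients. Your supplementary joint-law argument is also valid and slightly stronger, since showing that the completed endpoint has law $p^\star$ independently of $(X_0,T)$ gives equality of $\mathbb E[\nabla_\theta\ell_\theta]$ from integrability alone, without the exchange step; the paper relies on the same marginal fact in proving Theorem~\ref{thm:vardecomp}.
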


\begin{proof}
Immediate from Theorem~\ref{thm:oracle}, since equal functions have equal
derivatives wherever both exist.
\end{proof}

\noindent\textbf{1) No approximation is required, unlike the score matching
analogue.} Score matching with missing data must introduce a marginal score
identity and then approximate an intractable integral over the missing
coordinates \citep{givens2025score}. Flow matching needs neither, because its
regression target $X_1-X_0$ becomes explicit the moment an endpoint is completed.

\noindent\textbf{2) The difficulty relocates rather than disappears.}
Corollary~\ref{cor:grad} transfers the equality to gradients, but it does not
follow that finite-sample optimization trajectories or the resulting trained
fields coincide. Exactness also holds only for completions drawn from the correct
conditional, so everything that was hard about missing data now sits in the
completion model, and the next section charts what that costs. 

\begin{remark}[Why point imputation generally fails]
\label{rem:pointimp}
Deterministic imputation replaces the missing part of each endpoint by a fixed
value $\hat x_{-\Lambda}(x_\Lambda)$, so its objective averages $g_\theta$ over
the point mass $\delta_{\hat x_{-\Lambda}(x_\Lambda)}$ rather than over
$p^\star(\cdot\mid x_\Lambda)$. When the rule is the conditional mean and
$g_\theta$ is affine in the missing coordinates, the two agree for every
conditional. For flow matching $g_\theta$ is generally nonlinear in the endpoint,
so the imputed objective generally differs and trains the model to reproduce a
point mass in place of the true conditional spread.
Appendix~\ref{app:pointimp} makes the gap explicit.
\end{remark}

\begin{proposition}[Equivalence beyond MCAR]
\label{prop:general}
Let $P^\star$ be the joint law of $(X,\Lambda)$, let $O=(\Lambda,X_\Lambda)$, and
let $r^\star_\lambda(\cdot\mid x_\lambda)=P^\star(X_{-\lambda}\in\cdot\mid
X_\lambda=x_\lambda,\Lambda=\lambda)$ be the true posterior of the missing part.
If completions are drawn from $r^\star_\Lambda(\cdot\mid X_\Lambda)$, then for
every integrable $g_\theta$,
\[
    \mathbb E_O\,\mathbb E_{X_{-\Lambda}\sim r^\star_\Lambda(\cdot\mid X_\Lambda)}
    \big[g_\theta(X)\big]
    =\mathbb E_{X\sim p^\star}\big[g_\theta(X)\big],
\]
regardless of the missingness mechanism. When data are missing at random (MAR),
$r^\star_\lambda(\cdot\mid x_\lambda)=p^\star(x_{-\lambda}\mid x_\lambda)$ almost
everywhere, so Theorem~\ref{thm:oracle} holds with the same objective. When data
are missing not at random (MNAR), the posterior depends on $\Lambda$ and is
generally not identifiable from observed data without further structure
\citep{ipsen2020not}. Appendix~\ref{app:mechanisms}
develops all three mechanisms in full and shows exactly how each one acts on the
flow matching objective.
\end{proposition}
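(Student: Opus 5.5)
The identity is the tower property applied to the joint law $P^\star$ of $(X,\Lambda)$. I will first record that the pattern $\Lambda$ takes finitely many values, since it is a subset of $[d]$. Writing $\mathbb E_O$ as a sum over patterns, the left-hand side becomes
\[
\sum_{\lambda}P^\star(\Lambda=\lambda)\,\mathbb E\Big[\int g_\theta(X_\lambda,x_{-\lambda})\,r^\star_\lambda(dx_{-\lambda}\mid X_\lambda)\,\Big|\,\Lambda=\lambda\Big].
\]
Here the outer conditional expectation is over $X_\lambda\mid\{\Lambda=\lambda\}$. The sum can be restricted to patterns with positive probability.

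For each such $\lambda$, $r^\star_\lambda$ is by definition a regular version of the conditional law of $X_{-\lambda}$ given $(X_\lambda,\Lambda=\lambda)$. The disintegration theorem, which is the content I expect Lemma~\ref{lem:condexp} to supply, then gives
\[
\int g_\theta(X_\lambda,x_{-\lambda})\,r^\star_\lambda(dx_{-\lambda}\mid X_\lambda)=\mathbb E[g_\theta(X)\mid X_\lambda,\Lambda=\lambda]
\]
almost surely. Taking the expectation given $\Lambda=\lambda$ yields $\mathbb E[g_\theta(X)\mid\Lambda=\lambda]$. Summing against $P^\star(\Lambda=\lambda)$ gives $\mathbb E[g_\theta(X)]$, and since the $X$-marginal of $P^\star$ is $p^\star$, this equals $\mathbb E_{X\sim p^\star}[g_\theta(X)]$. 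No property of the mechanism is used anywhere, which gives the ``regardless'' claim.

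\textbf{Integrability.} This is the one technical point. The hypothesis that $g_\theta$ is integrable under $p^\star$ is exactly what makes every conditional expectation above well defined and finite almost surely. It also justifies Fubini when the inner integral is interchanged with the outer one. If one prefers, the argument can be run first for $g_\theta^\pm$, which are nonnegative, so that Tonelli applies directly, and then the two parts subtracted.

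\textbf{MAR.} MAR means $P^\star(\Lambda=\lambda\mid X)$ depends on $X$ only through $X_\lambda$. By Bayes' rule on the joint,
\[
P^\star(X_{-\lambda}\in A\mid X_\lambda,\Lambda=\lambda)=\frac{\mathbb E[\mathbf 1_A(X_{-\lambda})\,\pi_\lambda(X_\lambda)\mid X_\lambda]}{\pi_\lambda(X_\lambda)},
\]
where $\pi_\lambda(X_\lambda)=P^\star(\Lambda=\lambda\mid X)$. Because $\pi_\lambda(X_\lambda)$ is $X_\lambda$-measurable, it pulls out of the conditional expectation, and the ratio collapses to $p^\star(X_{-\lambda}\in A\mid X_\lambda)$. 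This holds on the set $\{\pi_\lambda>0\}$, which carries the law of $X_\lambda$ given $\Lambda=\lambda$; hence the identity holds almost everywhere with respect to the relevant measure. Substituting this into the general identity, with $g_\theta$ as in the setup, turns the left-hand side into $\mathcal L_{\mathrm{MDFM}}$, while the right-hand side is $\mathcal L_{\mathrm{FM}}$. This reproduces Theorem~\ref{thm:oracle}.

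\textbf{MNAR.} Here the statement is only a remark. I would justify it with the standard two-model example: two joint laws with the same observed-data distribution $O$ but different posteriors $r^\star_\lambda$. The main obstacle I expect is the measure-theoretic care in the MAR step, namely the choice of versions and the null set where $\pi_\lambda=0$. The general identity itself is routine.
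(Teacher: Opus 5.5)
Your proof is correct and follows the paper's route. The paper identifies the inner expectation with $\mathbb E[g_\theta(X)\mid O]$ through the regular conditional distribution $r^\star_\Lambda$ and then applies the tower property; your pattern-by-pattern sum simply unrolls that conditioning on $O$. It obtains the MAR case from the same Bayes-rule cancellation of $\pi_\lambda$, written with densities as $r^\star_\lambda=p^\star(\cdot\mid x_\lambda)\,\pi_\lambda/\bar\pi_\lambda$, whereas your density-free version and your explicit two-model construction for MNAR slightly strengthen the paper, which supports the non-identifiability remark only informally and by citation.
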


\begin{proof}[Proof sketch]
The inner expectation is $\mathbb E[g_\theta(X)\mid O]$ by definition of the
posterior, and the tower property (Lemma~\ref{lem:condexp}) gives the claim. The
MAR specialization follows from cancellation of the mask factors in Bayes' rule.
Appendix~\ref{app:mechanisms} gives the full argument together with the exact
form of the objective under each mechanism.
\end{proof}

\subsection{Learning with a Completion Model}

In practice we replace the oracle with a learned completion model. We train a
mask-conditioned model $q_\phi(x_{-\Lambda}\mid x_\Lambda,\Lambda)$ to sample
plausible missing parts from the observed part and the observed set, since one
model must handle many missingness patterns and needs to know which coordinates
are present. So that the later analysis can treat $q_\phi$ as fixed and
independent of the evaluation sample, we fit it on an independent auxiliary
sample and freeze $\phi$ before training the flow matching objective. Our analysis does not fix the form of $q_\phi$, since Lemma~\ref{lem:wbound}
charges only its Wasserstein distance to the true conditional. Flow matching is a
natural choice, and conditional flow matching imputers already exist for this task
\citep{simkus2025cfmi}. We train $q_\phi$ by self supervised masking, hiding a further subset of the
observed entries and predicting them from the rest. Under MCAR the retained entries are a random subset, so this supervises $q_\phi$ with genuine draws from conditionals of $p^\star$. Using $q_\phi$ on the patterns training actually needs extrapolates across patterns, which is why Section~\ref{sec:learnedest} states an identifiability condition.
Appendix~\ref{app:algorithm} gives the details. Substituting $q_\phi$ for the oracle in \eqref{eq:mdfm}
gives the objective we can actually optimize,

\[
    \mathcal L_\phi(\theta)
    =
    \mathbb E_{O}\,
    \mathbb E_{\widetilde X_{-\Lambda}\sim q_\phi(\cdot\mid O)}\,
    \mathbb E_{X_0,T}
    \big[\ell_\theta\big(X_0,\widetilde X_1,T\big)\big],
\]
with $\widetilde X_1=(X_\Lambda,\widetilde X_{-\Lambda})$.

We estimate this objective with a finite sample and a finite number of
completions. Suppose we have $n$ incomplete examples
$O_i=(\Lambda_i,X_{\Lambda_i}^{(i)})$. For each one we draw $K$ completions
$\widetilde X_{-\Lambda_i}^{(i,k)}\sim q_\phi(\cdot\mid X_{\Lambda_i}^{(i)},\Lambda_i)$,
together with independent $X_0^{(i,k)}\sim p_0$ and
$T^{(i,k)}\sim\mathrm{Unif}[0,1]$. Each completion yields an endpoint
$\widetilde X_1^{(i,k)}=(X_{\Lambda_i}^{(i)},\widetilde X_{-\Lambda_i}^{(i,k)})$
and a loss
$Y_\theta^{\phi,(i,k)}=\ell_\theta(X_0^{(i,k)},\widetilde X_1^{(i,k)},T^{(i,k)})$.
The learned missing-data flow matching estimator averages these,
\[
    \widehat{\mathcal L}^\phi_{n,K}(\theta)
    =
    \frac1n\sum_{i=1}^n\frac1K\sum_{k=1}^K Y_\theta^{\phi,(i,k)}.
\]
This estimator is unbiased for $\mathcal L_\phi$, not in general for $\mathcal
L_{\mathrm{FM}}$. Replacing $q_\phi$ by the true conditional gives the oracle
estimator $\widehat{\mathcal L}^\star_{n,K}$, which we analyze first in the next
section, before charging the extra cost of a learned model.

Algorithm~\ref{alg:mdfm} in Appendix~\ref{app:algorithm} states the full
procedure. We resample $X_0$ and $T$ for every completion so the losses stay
conditionally independent given the observation, which is the property the
variance analysis relies on.



\section{Finite-Sample Analysis}
\label{sec:analysis}

Theorem~\ref{thm:oracle} is a population statement, and real training uses
finitely many examples and completions. This section measures the gap. Proofs are
sketched here and given in full in Appendix~\ref{app:proofs}, which relies on the
standard tools collected in Appendix~\ref{app:tools}.

We analyze the oracle estimator
\[
    \widehat{\mathcal L}^\star_{n,K}(\theta)
    =\frac1n\sum_{i=1}^n\frac1K\sum_{k=1}^K
    \ell_\theta\big(X_0^{(i,k)},X_1^{\star,(i,k)},T^{(i,k)}\big),
\]
whose completed endpoint uses oracle draws
$X_1^{\star,(i,k)}=(X_{\Lambda_i}^{(i)},X_{-\Lambda_i}^{\star,(i,k)})$ with
$X_{-\Lambda_i}^{\star,(i,k)}\sim p^\star(\cdot\mid X_{\Lambda_i}^{(i)})$, and
$X_0^{(i,k)}\sim p_0$, $T^{(i,k)}\sim\mathrm{Unif}[0,1]$. For one observation
$O=(\Lambda,X_\Lambda)$ we write the per-completion loss as $Y_\theta$ and its
conditional mean as $\mu_\theta(O)=\mathbb E[Y_\theta\mid O]$.

\subsection{Unbiasedness and Variance Decomposition}

Three quantities describe the fluctuation of the estimator. Recall
$g_\theta(x_1)$ from the preliminaries. We define the between observation
variance, the base Monte Carlo variance, and the conditional completion variance
as
\begin{align*}
    \tau_\theta^2&=\operatorname{Var}_{O}\!\big(\mu_\theta(O)\big),\\
    \sigma_{\theta,\mathrm{base}}^2&=\mathbb E_{X_1\sim p^\star}
        \big[\operatorname{Var}_{X_0\sim p_0,\,T}\!\big(\ell_\theta(X_0,X_1,T)\,\big|\,X_1\big)\big],\\
    \sigma_{\theta,\mathrm{miss}}^2&=\mathbb E_{O}
        \big[\operatorname{Var}_{X_{-\Lambda}\sim p^\star(\cdot\mid X_\Lambda)}
        \!\big(g_\theta(X_\Lambda,X_{-\Lambda})\,\big|\,O\big)\big].
\end{align*}
Only the third is introduced by missingness. Note that $\mu_\theta(O)$ is an
expectation but not a constant, since it averages out the completion, the base
point, and the time and leaves a function of the random observation $O$, so
$\tau_\theta^2$ is the variance of that remaining randomness.
Appendix~\ref{app:notation} tabulates which variables each quantity averages
over.

\begin{theorem}[Unbiasedness and variance decomposition]
\label{thm:vardecomp}\label{thm:unbiased}
Let $O_1,\dots,O_n$ be i.i.d.\ incomplete observations. Assume MCAR,
Assumption~\ref{ass:reg}, and that $\ell_\theta(X_0,X_1,T)$ is square integrable
under the complete-data path distribution. Then for every $\theta$ and every
$K\ge1$,
\[
    \mathbb E\big[\widehat{\mathcal L}^\star_{n,K}(\theta)\big]
    =\mathcal L_{\mathrm{FM}}(\theta),
\]
\[
    \operatorname{Var}_{\mathcal S}\big(\widehat{\mathcal L}^\star_{n,K}(\theta)\big)
    =\frac1n\Big[\tau_\theta^2
    +\frac{\sigma_{\theta,\mathrm{base}}^2+\sigma_{\theta,\mathrm{miss}}^2}{K}\Big],
\]
where $\operatorname{Var}_{\mathcal S}$ is taken over the $n$ observations, the
$nK$ completions, and the $nK$ base points and times, at fixed $\theta$, $n$, and
$K$. Moreover
$\tau_\theta^2+\sigma_{\theta,\mathrm{miss}}^2=\operatorname{Var}(g_\theta(X_1))$.
\end{theorem}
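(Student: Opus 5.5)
The plan is to condition on the observations, use the law of total variance twice, and recover the mean from Theorem~\ref{thm:oracle}.

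\textbf{Unbiasedness.} By linearity of expectation, $\mathbb E[\widehat{\mathcal L}^\star_{n,K}(\theta)]=\mathbb E[Y_\theta]$. Each summand has the law of one draw under the oracle objective \eqref{eq:mdfm}. We condition on the completed endpoint and average out $(X_0,T)$, which gives $\mathbb E[Y_\theta]=\mathcal L_{\mathrm{MDFM}}(\theta)$. Theorem~\ref{thm:oracle} then gives $\mathcal L_{\mathrm{FM}}(\theta)$. The square-integrability hypothesis makes every expectation below finite. Under MCAR, $X_1^\star\sim p^\star$ marginally, so integrability under $p^\star$ transfers to $Y_\theta$.

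\textbf{Variance.} The pairs (observation, its $K$ completions, base points, times) are i.i.d.\ across $i$. Hence $\operatorname{Var}_{\mathcal S}=\frac1n\operatorname{Var}(\bar Y_i)$, where $\bar Y_i=\frac1K\sum_k Y^{(i,k)}_\theta$. For one block, apply total variance conditioning on $O_i$:
\[
\operatorname{Var}(\bar Y_i)=\operatorname{Var}_O(\mathbb E[\bar Y_i\mid O_i])+\mathbb E_O[\operatorname{Var}(\bar Y_i\mid O_i)].
\]
The first term is $\tau_\theta^2$, since $\mathbb E[\bar Y_i\mid O_i]=\mu_\theta(O_i)$. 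Given $O_i$, the $K$ triples (completion, $X_0$, $T$) are i.i.d., because Algorithm~\ref{alg:mdfm} resamples all three per completion. So $\operatorname{Var}(\bar Y_i\mid O_i)=\frac1K\operatorname{Var}(Y_\theta\mid O_i)$.

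Apply total variance again, now inside $O$ and conditioning additionally on the completion $X_{-\Lambda}$:
\[
\operatorname{Var}(Y_\theta\mid O)=\mathbb E[\operatorname{Var}(\ell_\theta\mid X_1)\mid O]+\operatorname{Var}(g_\theta(X_1)\mid O).
\]
Here $\mathbb E[\ell_\theta\mid X_1,O]=g_\theta(X_1)$ and $\operatorname{Var}(\ell_\theta\mid X_1,O)=\operatorname{Var}(\ell_\theta\mid X_1)$, because $(X_0,T)$ is independent of $(O,X_{-\Lambda})$. Taking $\mathbb E_O$, the second piece is $\sigma^2_{\theta,\mathrm{miss}}$ by definition. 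The first becomes $\mathbb E_{X_1}[\operatorname{Var}(\ell_\theta\mid X_1)]$ with $X_1=(X_\Lambda,X^\star_{-\Lambda})$. This is where the main care is needed: we must show that $X_1$ has law $p^\star$. That is exactly the MCAR tower argument in Theorem~\ref{thm:oracle}, applied to the nonnegative function $x\mapsto\operatorname{Var}(\ell_\theta(X_0,x,T))$ instead of $g_\theta$. Equivalently, we invoke Proposition~\ref{prop:general} with that function. This identifies the first piece as $\sigma^2_{\theta,\mathrm{base}}$, and assembling the pieces gives the displayed formula.

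\textbf{Final identity.} Total variance on $g_\theta(X_1)$, conditioning on $O$, gives $\operatorname{Var}(g_\theta(X_1))=\operatorname{Var}_O(\mathbb E[g_\theta(X_1)\mid O])+\sigma^2_{\theta,\mathrm{miss}}$. Since $\mathbb E[g_\theta(X_1)\mid O]=\mathbb E[Y_\theta\mid O]=\mu_\theta(O)$, the first term is $\tau^2_\theta$. Again $X_1\sim p^\star$ by the same MCAR argument, so this is $\operatorname{Var}(g_\theta(X_1))$ under the complete-data law, as claimed.

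The main obstacle is the measure-theoretic bookkeeping. We need regular conditional distributions (Assumption~\ref{ass:reg}) so that the conditional variances are well defined. We also need to justify that pooling over patterns $\lambda$ reproduces $p^\star$, using $\sum_\lambda \mathbb P(\Lambda=\lambda)\,p^\star=p^\star$.
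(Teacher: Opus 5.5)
Your proposal is correct and follows the paper's proof essentially step for step. It uses independence across observations, then the law of total variance conditioned first on $O_i$ and then, within $O_i$, on the completion, identifies $\sigma^2_{\theta,\mathrm{base}}$ through $X_1^\star\sim p^\star$, gets unbiasedness via $\mathcal L_{\mathrm{MDFM}}$ and Theorem~\ref{thm:oracle}, and obtains the transfer identity from total variance of $g_\theta(X_1)$ given $O$. Your one refinement, rerunning the MCAR tower argument on the nonnegative function $x\mapsto\operatorname{Var}_{X_0,T}(\ell_\theta(X_0,x,T))$, is slightly more careful than the paper's direct appeal to Theorem~\ref{thm:oracle}, which is a statement only about $g_\theta$.
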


\begin{proof}[Proof sketch]
Conditional on $O_i$ the $K$ inner losses are i.i.d.\ with mean $\mu_\theta(O_i)$,
and Theorem~\ref{thm:oracle} gives
$\mathbb E[\mu_\theta(O_i)]=\mathcal L_{\mathrm{FM}}(\theta)$. Independence across
observations reduces the variance to that of a single per-observation average,
and two applications of the law of total variance (Lemma~\ref{lem:ltv}), first
conditioning on $O_i$ and then on the sampled completion, split it into the three
quantities above. The transfer identity is the same law applied to
$g_\theta(X_1)$ given $O$. Full proofs in Appendices C.2 to C.4.
\end{proof}

\begin{corollary}[Variance consequences]
\label{cor:noinflate}\label{cor:budget}
Under the conditions of Theorem~\ref{thm:vardecomp}, for every $K\ge1$,
\begin{equation}
\label{eq:rewritten}
    \operatorname{Var}\big(\widehat{\mathcal L}^\star_{n,K}(\theta)\big)
    =\frac1n\Big[\operatorname{Var}(g_\theta(X_1))
    -\big(1-\tfrac1K\big)\sigma_{\theta,\mathrm{miss}}^2
    +\frac{\sigma_{\theta,\mathrm{base}}^2}{K}\Big],
\end{equation}
and consequently
\begin{equation}
\label{eq:cleanbound}
    \operatorname{Var}\big(\widehat{\mathcal L}^\star_{n,K}(\theta)\big)
    \le\frac1n\Big[\operatorname{Var}(g_\theta(X_1))
    +\frac{\sigma_{\theta,\mathrm{base}}^2}{K}\Big],
\end{equation}
with equality at $K=1$. Under a fixed evaluation budget $N=nK$,
\begin{equation}
\label{eq:budget}
    \operatorname{Var}\big(\widehat{\mathcal L}^\star_{n,K}(\theta)\big)
    =\frac{K\tau_\theta^2+\sigma_{\theta,\mathrm{base}}^2
    +\sigma_{\theta,\mathrm{miss}}^2}{N},
\end{equation}
which is nondecreasing in $K$ and strictly increasing when $\tau_\theta^2>0$.
\end{corollary}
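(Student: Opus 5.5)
The corollary is pure algebra on top of Theorem~\ref{thm:vardecomp}, so I will take that theorem's variance formula and its transfer identity $\tau_\theta^2+\sigma_{\theta,\mathrm{miss}}^2=\operatorname{Var}(g_\theta(X_1))$ as given. All three displays follow by substitution and sign checks.

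\textbf{Equation~\eqref{eq:rewritten}.} The transfer identity gives
\[
\tau_\theta^2=\operatorname{Var}(g_\theta(X_1))-\sigma_{\theta,\mathrm{miss}}^2 .
\]
Substituting this into the variance formula of Theorem~\ref{thm:vardecomp}, the bracket becomes
\[
\operatorname{Var}(g_\theta(X_1))-\sigma_{\theta,\mathrm{miss}}^2+\frac{\sigma_{\theta,\mathrm{miss}}^2}{K}+\frac{\sigma_{\theta,\mathrm{base}}^2}{K}.
\]
Collecting the two $\sigma_{\theta,\mathrm{miss}}^2$ terms into $-(1-1/K)\sigma_{\theta,\mathrm{miss}}^2$ gives \eqref{eq:rewritten}.

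\textbf{Inequality~\eqref{eq:cleanbound}.} This follows from \eqref{eq:rewritten} by noting two facts:
\begin{itemize}
\item $\sigma_{\theta,\mathrm{miss}}^2\ge0$, since it is an expectation of a conditional variance;
\item $1-1/K\ge0$ for every $K\ge1$.
\end{itemize}
Hence the subtracted term is nonnegative and can be dropped. At $K=1$ its coefficient is exactly zero, so equality holds there.

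\textbf{Equation~\eqref{eq:budget}.} Substitute $n=N/K$ into Theorem~\ref{thm:vardecomp}:
\[
\frac{K}{N}\Big[\tau_\theta^2+\frac{\sigma_{\theta,\mathrm{base}}^2+\sigma_{\theta,\mathrm{miss}}^2}{K}\Big]=\frac{K\tau_\theta^2+\sigma_{\theta,\mathrm{base}}^2+\sigma_{\theta,\mathrm{miss}}^2}{N}.
\]
Here I implicitly restrict to $K$ dividing $N$ so that $n$ is an integer. Equivalently, the formula can be read as valid for every admissible pair $(n,K)$ with $nK=N$. The right-hand side is affine in $K$ with slope $\tau_\theta^2/N\ge0$, because $\tau_\theta^2$ is a variance. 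It is therefore nondecreasing in $K$, and strictly increasing when $\tau_\theta^2>0$.

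\textbf{Finiteness.} All quantities must be finite for the subtraction in the first step to be legitimate. The square-integrability hypothesis of Theorem~\ref{thm:vardecomp} ensures this: $\operatorname{Var}(g_\theta(X_1))$, $\tau_\theta^2$, and both $\sigma^2$ terms are bounded by $\mathbb E[\ell_\theta^2]<\infty$ via Jensen's inequality and the law of total variance.

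I expect no real obstacle. The only point needing care is the finiteness check, which is what justifies rearranging the variance terms; after that every step is direct substitution.
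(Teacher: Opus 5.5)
Your proposal is correct and follows the paper's proof essentially line by line: substitute the transfer identity $\tau_\theta^2=\operatorname{Var}(g_\theta(X_1))-\sigma_{\theta,\mathrm{miss}}^2$ into Theorem~\ref{thm:vardecomp}, drop the nonnegative $(1-1/K)\sigma_{\theta,\mathrm{miss}}^2$ term (whose coefficient vanishes at $K=1$), and set $n=N/K$ to get a numerator affine in $K$ with slope $\tau_\theta^2\ge0$. Your finiteness check and your remark that $K$ must divide $N$ are minor details the paper leaves implicit.
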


Proofs are in Appendix~\ref{app:proof:cors}. Three consequences follow, and they
are the practical content of the theorem.

\noindent\textbf{1) Missingness transfers variance rather than adding it.} The
right side of \eqref{eq:cleanbound} carries no dependence on the missing fraction.
By the transfer identity, missingness moves variance out of $\tau_\theta^2$ and
into $\sigma^2_{\theta,\mathrm{miss}}$, which averaging over completions reduces,
and \eqref{eq:rewritten} shows that term entering with a nonpositive coefficient.
Larger conditional uncertainty therefore creates more reducible variance, not more
total variance.

\noindent\textbf{2) One oracle completion costs nothing relative to complete
data.} At $K=1$ the variance equals
$\frac1n[\operatorname{Var}(g_\theta(X_1))+\sigma^2_{\theta,\mathrm{base}}]$,
which is exactly that of one-sample complete data flow matching, because a single
fresh completion is marginally a complete data draw. For general $K$ the right
side of \eqref{eq:cleanbound} is the variance of a complete data estimator using
the same $n$ endpoints with $K$ base point and time draws each, not that of $nK$
independent endpoints. Figure~\ref{fig:theory} in
Appendix~\ref{app:exp:variance} confirms both to within four percent, with the
two quantities estimated by independent routes so the agreement is a check and
not a tautology.

\noindent\textbf{3) Extra completions help only when the number of examples is
fixed.} By \eqref{eq:budget}, if the budget is measured in loss evaluations and
more incomplete observations are available, spending it on observations is always
at least as good. Multiple completions are a tool for the regime where data is
scarce and compute is not, which is a practical rule for setting $K$.
Figure~\ref{fig:budget} in Appendix~\ref{app:exp:budget} is consistent with this.

\subsection{Bounding the Conditional Completion Variance}

We bound the one quantity missingness introduces in terms of how much is missing.

\begin{assumption}[Bounded data]
\label{ass:bdd}
Every coordinate lies in $[-R,R]$, so $|X^j|\le R$ for all $j$.
\end{assumption}

\begin{assumption}[Lipschitz endpoint loss]
\label{ass:lip}
Fix $\theta$ and let $\mathcal X\subseteq\mathbb R^d$ be a convex set containing the support of $p^\star$ and the support of every
completed endpoint distribution we consider.
The endpoint loss $g_\theta$ is $L_g$-Lipschitz on $\mathcal X$.
\end{assumption}

\noindent Because $\ell_\theta$ is a squared error, $g_\theta$ may grow quadratically in
the endpoint and need not be globally Lipschitz on $\mathbb R^d$. It suffices
that $p_0$ is compactly supported and that $v_\theta(\cdot,t)$ is Lipschitz
uniformly in $t$, since the regression residual is then Lipschitz in the endpoint
and bounded on the compact path domain, and the square of a bounded Lipschitz
function is Lipschitz. Theorem~\ref{thm:missvar} and Lemma~\ref{lem:wbound} use
$g_\theta$ only on $\mathcal X$.

\begin{theorem}[Completion variance bound]
\label{thm:missvar}
Under Assumptions~\ref{ass:bdd}--\ref{ass:lip},
$\sigma_{\theta,\mathrm{miss}}^2\le R^2 L_g^2\,\rho d$.
\end{theorem}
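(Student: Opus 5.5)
Fix an observation $O=(\Lambda,X_\Lambda)$ and bound the inner conditional variance pointwise; then average over $O$. For a fixed observed part $x_\Lambda$, set $h(y)=g_\theta(x_\Lambda,y)$ for $y$ in the set of possible missing parts. This set is contained in $[-R,R]^{m(\Lambda)}$ by Assumption~\ref{ass:bdd}. The completed endpoints $(x_\Lambda,y)$ lie in the support of $p^\star$, so they lie in $\mathcal X$. Moving only the missing coordinates changes the endpoint by exactly $\|y-y'\|$. Hence, by Assumption~\ref{ass:lip}, $h$ is $L_g$-Lipschitz: $|h(y)-h(y')|\le L_g\|y-y'\|$.

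Next, bound the variance without any Poincaré-type inequality, using the elementary fact that $\operatorname{Var}(Z)\le\mathbb E[(Z-c)^2]$ for any constant $c$. Take $c=h(0)$, or any fixed reference point $y_0$ in $[-R,R]^{m(\Lambda)}$. Then, with $Y\sim p^\star(\cdot\mid x_\Lambda)$,
\[
\operatorname{Var}(h(Y)\mid O)\le \mathbb E\big[(h(Y)-h(y_0))^2\mid O\big]\le L_g^2\,\mathbb E\big[\|Y-y_0\|^2\mid O\big]\le L_g^2\,R^2\,m(\Lambda).
\]
The last step needs $\|Y-y_0\|^2\le R^2 m(\Lambda)$, which holds coordinatewise if $y_0=0$ and $|Y^j|\le R$. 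For $y_0=0$ to be usable, $0$ must lie in the convex set $\mathcal X$ on which $g_\theta$ is Lipschitz. Convexity handles this: $\mathcal X$ contains the support points $(x_\Lambda,y)$ and $(x_\Lambda,-y')$, whose segment passes through points with missing part near $0$.

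That membership is only delicate if the support of $p^\star$ is lopsided. If there is any doubt, I would instead use the symmetric bound $\operatorname{Var}(h(Y))=\tfrac12\mathbb E[(h(Y)-h(Y'))^2]$ with $Y'$ an independent copy. This gives $\le\tfrac12L_g^2\mathbb E\|Y-Y'\|^2\le\tfrac12L_g^2(2R)^2m=2R^2L_g^2m$, which is off by a factor of $2$. A cleaner alternative is $\operatorname{Var}(h(Y))\le L_g^2\operatorname{Var}$-trace, i.e. $\operatorname{Var}(h(Y))\le L_g^2\,\mathbb E\|Y-\mathbb E Y\|^2$. This follows by taking $c=h(\mathbb E[Y\mid O])$. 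The point $\mathbb E[Y\mid O]$ is a convex combination of support points sharing the observed part $x_\Lambda$, so $(x_\Lambda,\mathbb E[Y\mid O])\in\mathcal X$ by convexity. Then $\mathbb E\|Y-\mathbb EY\|^2=\sum_{j\in-\Lambda}\operatorname{Var}(Y^j)\le\sum_j\mathbb E[(Y^j)^2]\le R^2m(\Lambda)$. This is the route I would commit to, since it uses convexity exactly as Assumption~\ref{ass:lip} provides it. Choosing this reference point so as to avoid the factor $2$ is the main subtlety.

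Finally, take the expectation over $O$ and use $\mathbb E[m(\Lambda)]=\rho d$ from the setup. This gives $\sigma^2_{\theta,\mathrm{miss}}\le R^2L_g^2\,\mathbb E[m(\Lambda)]=R^2L_g^2\rho d$. Measurability of the conditional variance as a function of $O$ follows from the regular conditional distribution in Assumption~\ref{ass:reg}.
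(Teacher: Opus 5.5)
Your proof is correct, but it takes a different route from the paper's. The paper never picks a reference point. It bounds the essential range of $g_\theta(x_\Lambda,\cdot)$ by $2RL_g\sqrt{m}$, since two completions differ in at most $m$ coordinates and each by at most $2R$. It then applies Popoviciu's inequality, $\operatorname{Var}\le (b-a)^2/4=R^2L_g^2m$.

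You instead compare to $h(\mathbb E[Y\mid O])$. This gives the intermediate bound $\operatorname{Var}(h(Y)\mid O)\le L_g^2\sum_{j\in-\Lambda}\operatorname{Var}(Y^j\mid O)$, and you use $|Y^j|\le R$ only in the last step. The constants coincide, but your route says more. It ties $\sigma^2_{\theta,\mathrm{miss}}$ to $L_g^2\,\mathbb E_O\operatorname{tr}\operatorname{Cov}(X_{-\Lambda}\mid X_\Lambda)$. Under MCAR the law of total variance then gives $\sigma^2_{\theta,\mathrm{miss}}\le L_g^2\max_j\operatorname{Var}(X^j)\,\rho d$. This replaces the coordinate range $R^2$ by a marginal variance, and so removes one source of the looseness the paper itself attributes to using ``the full coordinate range.'' The paper's route has its own advantage: it applies the Lipschitz inequality only between two support points, so it needs no convexity step.

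The ``main subtlety'' you identify is not really one. Your symmetric route lost the factor $2$ only because you bounded $\|Y-Y'\|^2\le(2R)^2m$ pointwise. Computing $\mathbb E[\|Y-Y'\|^2\mid O]=2\sum_{j\in-\Lambda}\operatorname{Var}(Y^j\mid O)$ exactly gives the same sharp bound, again comparing only pairs of support points.

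Likewise, the constant $c$ in $\operatorname{Var}(Z)\le\mathbb E[(Z-c)^2]$ is arbitrary. So $c=\tilde h(0)$ for a McShane extension $\tilde h$, the device the paper uses for Lemma~\ref{lem:wbound}, would also work. Your convexity argument for $0\in\mathcal X$ is not valid, because $(x_\Lambda,-y')$ need not be a support point. The route you committed to does not depend on it.
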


\begin{proof}[Proof sketch]
Conditional on $O$, two completions differ only in the $m$ missing coordinates,
so $g_\theta$ spans a range of at most $2RL_g\sqrt m$. Popoviciu's inequality
(Lemma~\ref{lem:popoviciu}) converts the range into a variance bound, and
averaging over $O$ uses $\mathbb E[m(\Lambda)]=\rho d$. Full proof in
Appendix~\ref{app:proof:missvar}.
\end{proof}

\noindent\textbf{The missing fraction controls the completion-only error, not the
total error.} Let $\bar\mu=\frac1n\sum_i\mu_\theta(O_i)$ and
$\widehat G_{n,K}=\frac1{nK}\sum_{i,k}g_\theta(X_1^{\star,(i,k)})$, the
endpoint-loss average with the base point and time noise removed. Then
$\mathbb E_{\mathcal S}[(\widehat G_{n,K}-\bar\mu)^2]
=\sigma^2_{\theta,\mathrm{miss}}/(nK)\le R^2L_g^2\rho d/(nK)$
(Appendix~\ref{app:proof:cors}), while the total estimator variance remains
capped by \eqref{eq:cleanbound} for any missing fraction. For fixed $n$, $R$,
$L_g$, and a fixed tolerance on this completion-only error, $K\propto\rho d$ is
therefore a sufficient worst-case scaling and not a requirement. The bound uses
the full coordinate range and is loose in practice.
Figure~\ref{fig:bound} in Appendix~\ref{app:exp:bound} measures the ratio
$\sigma^2_{\theta,\mathrm{miss}}/(R^2L_g^2\rho d)$ at the order of $10^{-4}$,
while the measured variance still grows close to linearly in $\rho d$, so the
shape the theorem predicts is right even though the constant is far too large.

\begin{theorem}[Oracle Bernstein bound]
\label{thm:bernstein}
Fix $\theta$ and assume i.i.d.\ observations, oracle completions under MCAR,
Assumptions~\ref{ass:reg},~\ref{ass:bdd},~\ref{ass:lip}, and
$0\le\ell_\theta\le B$ almost surely. Writing
$V_{\rho,K}=\min\Big\{\tau_\theta^2+\tfrac{\sigma_{\theta,\mathrm{base}}^2+R^2L_g^2\rho d}{K},\;
\operatorname{Var}\big(g_\theta(X_1)\big)+\tfrac{\sigma_{\theta,\mathrm{base}}^2}{K}\Big\}$,
with probability at least $1-\delta$,
\[
    \big|\widehat{\mathcal L}^\star_{n,K}(\theta)-\mathcal L_{\mathrm{FM}}(\theta)\big|
    \le\sqrt{\frac{2V_{\rho,K}\log(2/\delta)}{n}}+\frac{2B\log(2/\delta)}{3n}.
\]
\end{theorem}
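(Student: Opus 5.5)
The plan is to apply the classical Bernstein inequality to the $n$ i.i.d.\ per-observation averages
\[
    Z_i=\frac1K\sum_{k=1}^K\ell_\theta\big(X_0^{(i,k)},X_1^{\star,(i,k)},T^{(i,k)}\big),\qquad i=1,\dots,n,
\]
so that $\widehat{\mathcal L}^\star_{n,K}(\theta)=\frac1n\sum_i Z_i$. The $Z_i$ are i.i.d.\ because each is a fixed measurable function of $O_i$ and its own block of completions, base points and times, and these blocks are independent across $i$. By Theorem~\ref{thm:unbiased}, $\mathbb E[Z_i]=\mathcal L_{\mathrm{FM}}(\theta)$. Since $0\le\ell_\theta\le B$, each $Z_i$ lies in $[0,B]$, hence $|Z_i-\mathbb E Z_i|\le B$ almost surely. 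The two-sided Bernstein inequality (Appendix~\ref{app:tools}) for bounded i.i.d.\ variables with variance $v$ and range bound $B$ then gives, with probability at least $1-\delta$,
\[
    \Big|\tfrac1n\textstyle\sum_i Z_i-\mathbb E Z_1\Big|\le\sqrt{\frac{2v\log(2/\delta)}{n}}+\frac{2B\log(2/\delta)}{3n}.
\]
This follows by inverting $2\exp\!\big(-n\epsilon^2/(2v+2B\epsilon/3)\big)=\delta$ as a quadratic in $\epsilon$ and using $\sqrt{a+b}\le\sqrt a+\sqrt b$.

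It remains to show $v=\operatorname{Var}(Z_1)\le V_{\rho,K}$. The proof of Theorem~\ref{thm:vardecomp} already identifies this quantity:
\[
    \operatorname{Var}(Z_1)=n\operatorname{Var}\big(\widehat{\mathcal L}^\star_{n,K}\big)=\tau_\theta^2+\frac{\sigma_{\theta,\mathrm{base}}^2+\sigma_{\theta,\mathrm{miss}}^2}{K}.
\]
We bound it by each entry of the minimum separately.
\begin{itemize}
    \item \emph{First entry.} Substitute $\sigma_{\theta,\mathrm{miss}}^2\le R^2L_g^2\rho d$ from Theorem~\ref{thm:missvar}, which uses Assumptions~\ref{ass:bdd} and~\ref{ass:lip}.
    \item \emph{Second entry.} Apply \eqref{eq:cleanbound} of Corollary~\ref{cor:noinflate}, multiplied by $n$. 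Equivalently, use the transfer identity $\tau_\theta^2+\sigma_{\theta,\mathrm{miss}}^2=\operatorname{Var}(g_\theta(X_1))$ together with $\sigma_{\theta,\mathrm{miss}}^2/K\le\sigma_{\theta,\mathrm{miss}}^2$.
\end{itemize}
So $v$ is at most the minimum $V_{\rho,K}$. Since the Bernstein bound is monotone in $v$, we may replace $v$ by $V_{\rho,K}$, which yields the statement.

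Most of this is routine, and the step needing care is the Bernstein form itself. The range constant must be stated as $|Z_i-\mathbb E Z_i|\le B$ rather than $2B$; this holds here because both $Z_i$ and its mean lie in $[0,B]$, and it is what gives the constant $2B/3$. Everything is conditional on the fixed $\theta$, so no uniformity over parameters is required.
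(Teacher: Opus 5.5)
Your proposal is correct and follows essentially the same route as the paper's proof. Both apply Bernstein's inequality to the i.i.d.\ per-observation averages $\overline Y_i\in[0,B]$ with range constant $M=B$, take the mean from Theorem~\ref{thm:unbiased}, and bound the variance $\tau_\theta^2+(\sigma_{\theta,\mathrm{base}}^2+\sigma_{\theta,\mathrm{miss}}^2)/K$ by each branch of $V_{\rho,K}$ using Theorem~\ref{thm:missvar} and the transfer identity, while also noting, as the paper does, that the bounded-loss hypothesis supplies the square integrability Theorem~\ref{thm:unbiased} requires.
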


\begin{proof}[Proof sketch]
The per-observation averages are i.i.d.\ in $[0,B]$ with mean
$\mathcal L_{\mathrm{FM}}(\theta)$ and variance at most $V_{\rho,K}$ by
Theorems~\ref{thm:unbiased} and~\ref{thm:missvar}, the boundedness assumption
supplying the square integrability that Theorem~\ref{thm:unbiased} requires. Apply Bernstein's inequality
(Lemma~\ref{lem:bernstein}) and invert. Full proof in
Appendix~\ref{app:proof:bernstein}.
\end{proof}

\noindent The boundedness condition fails for a Gaussian base, but flow matching does not
require one, and a compactly supported $p_0$ together with
Assumption~\ref{ass:bdd} confines the path and the target to a compact set. Our
experiments include such a configuration. Appendix~\ref{app:proof:bernstein}
gives the details and the sub-exponential alternative.

\noindent The deviation is $O_{\mathbb P}(\sqrt{V_{\rho,K}/n})$, so the estimate converges at the usual parametric rate, and the second branch of
$V_{\rho,K}$ carries no dependence on the missing fraction at all.

\subsection{The Learned Completion Estimator}
\label{sec:learnedest}

The estimator we actually run draws completions from $q_\phi$, and its error
splits into estimation and bias,
\[
    \widehat{\mathcal L}^{\phi}_{n,K}(\theta)-\mathcal L_{\mathrm{FM}}(\theta)
    =\underbrace{\widehat{\mathcal L}^{\phi}_{n,K}(\theta)-\mathcal L_\phi(\theta)}_{\text{estimation}}
    +\underbrace{\mathcal L_\phi(\theta)-\mathcal L_{\mathrm{FM}}(\theta)}_{\text{bias}}.
\]
We condition throughout on the completion model fitted on the independent
auxiliary sample, so the evaluation observations and their completion draws are
i.i.d.\ given $q_\phi$. The variance decomposition of
Theorem~\ref{thm:vardecomp} then applies verbatim with $q_\phi$-specific
constants, written $V_{\theta,\phi,K}$ and defined in
Appendix~\ref{app:proof:phi}, since its proof uses only conditional independence
and the tower property rather than the particular completion law. The bias is
controlled by a transport distance.

\begin{lemma}[Completion-bias bound]
\label{lem:wbound}
Let $r^\star_\Lambda(\cdot\mid X_\Lambda)=P^\star(X_{-\Lambda}\in\cdot\mid
X_\Lambda,\Lambda)$ be the true posterior of the missing part. Suppose Assumption~\ref{ass:lip} holds, that the completed endpoints under
$q_\phi$ and under $r^\star_\Lambda$ lie in $\mathcal X$ almost surely, that both have
finite first moments, and that
$\mathbb E_O\,W_1(q_\phi(\cdot\mid O),r^\star_\Lambda(\cdot\mid X_\Lambda))<\infty$.
Then for every $\theta$,
\[
    \big|\mathcal L_\phi(\theta)-\mathcal L_{\mathrm{FM}}(\theta)\big|
    \le L_g\,\mathbb E_{O}
    \big[W_1\big(q_\phi(\cdot\mid X_\Lambda,\Lambda),
    r^\star_\Lambda(\cdot\mid X_\Lambda)\big)\big].
\]
Under MCAR or MAR, $r^\star_\Lambda(\cdot\mid X_\Lambda)=p^\star(\cdot\mid
X_\Lambda)$, so the bound reduces to the ordinary data conditional.
\end{lemma}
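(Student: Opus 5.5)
The plan is to reduce the claim to a pointwise coupling bound for each observation and then average over $O$. First I rewrite both objectives in the common form
\[
\mathcal L_\phi(\theta)=\mathbb E_O\,\mathbb E_{\widetilde X_{-\Lambda}\sim q_\phi(\cdot\mid O)}\big[g_\theta(X_\Lambda,\widetilde X_{-\Lambda})\big].
\]
This follows by Fubini, integrating out $X_0$ and $T$, which are independent of everything else. For the other objective, Proposition~\ref{prop:general} gives
\[
\mathcal L_{\mathrm{FM}}(\theta)=\mathbb E_{X\sim p^\star}[g_\theta(X)]=\mathbb E_O\,\mathbb E_{X_{-\Lambda}\sim r^\star_\Lambda(\cdot\mid X_\Lambda)}\big[g_\theta(X_\Lambda,X_{-\Lambda})\big],
\]
with no assumption on the mechanism. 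Integrability of $g_\theta$ under both completed-endpoint laws comes from Assumption~\ref{ass:lip} together with the finite first moments. Indeed, $|g_\theta(x)|\le|g_\theta(x^\circ)|+L_g\|x-x^\circ\|$ for a fixed $x^\circ\in\mathcal X$. So I may subtract the two representations inside a single $\mathbb E_O$:
\[
|\mathcal L_\phi-\mathcal L_{\mathrm{FM}}|\le\mathbb E_O\big|\Delta(O)\big|,\qquad \Delta(O):=\mathbb E_{q_\phi(\cdot\mid O)}[g_\theta(X_\Lambda,\cdot)]-\mathbb E_{r^\star_\Lambda(\cdot\mid X_\Lambda)}[g_\theta(X_\Lambda,\cdot)].
\]

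Second, I fix $O=(\lambda,x_\lambda)$ and bound $|\Delta(O)|$. The map $h(y)=g_\theta(x_\lambda,y)$ on the missing coordinates is $L_g$-Lipschitz. This holds because $\|(x_\lambda,y)-(x_\lambda,y')\|=\|y-y'\|$ in the Euclidean norm, and both completed points lie in $\mathcal X$ almost surely. I take any coupling $\pi$ of $q_\phi(\cdot\mid O)$ and $r^\star_\lambda(\cdot\mid x_\lambda)$ and get $|\Delta(O)|\le\mathbb E_\pi|h(Y)-h(Y')|\le L_g\,\mathbb E_\pi\|Y-Y'\|$. Taking the infimum over couplings gives $|\Delta(O)|\le L_g W_1(q_\phi(\cdot\mid O),r^\star_\lambda(\cdot\mid x_\lambda))$. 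This is the easy direction of Kantorovich--Rubinstein, so I only need the definition of $W_1$ as an infimum over couplings, not duality. Finite first moments guarantee that $W_1$ is finite.

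Third, I integrate over $O$. The averaged bound is finite by the hypothesis $\mathbb E_O W_1<\infty$, which gives the inequality. The MCAR/MAR reduction is then immediate from Proposition~\ref{prop:general}, since $r^\star_\lambda=p^\star(\cdot\mid x_\lambda)$ almost everywhere.

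The main obstacle is measure-theoretic. I need $O\mapsto W_1(q_\phi(\cdot\mid O),r^\star_\Lambda(\cdot\mid X_\Lambda))$ to be measurable, and I need the pointwise Lipschitz step to hold only almost surely, since the completed points are in $\mathcal X$ only a.s. For the almost-sure issue, I would restrict to the full-measure set of $O$ on which both conditional laws put full mass on $\{y:(x_\lambda,y)\in\mathcal X\}$. For measurability, I would use that both kernels are measurable and that $W_1$ is lower semicontinuous in its arguments. Failing that, I would interpret $\mathbb E_O$ as an outer integral, which leaves the inequality intact.
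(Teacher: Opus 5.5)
Your proof is correct. Its skeleton matches the paper's: write both objectives as $\mathbb E_O$ of an integral of $h=g_\theta(X_\Lambda,\cdot)$, using Proposition~\ref{prop:general} for $\mathcal L_{\mathrm{FM}}$, move the absolute value inside, bound each observation separately, and integrate.

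The genuine difference is the per-observation step. The paper invokes Kantorovich--Rubinstein duality, which is stated for test functions that are Lipschitz on all of $\mathbb R^{d-|\Lambda|}$. Since $h$ is only known to be Lipschitz on $\mathcal X_O=\{u:(X_\Lambda,u)\in\mathcal X\}$, the paper first extends $h$ by McShane and then notes that the extension changes neither integral, because both laws live on $\mathcal X_O$. You instead take an arbitrary coupling and use only the definition of $W_1$ as an infimum over couplings. Because each marginal is concentrated on $\mathcal X_O$, every coupling puts full mass on $\mathcal X_O\times\mathcal X_O$, so the Lipschitz bound is needed only there.

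This is the more elementary route. It needs no extension theorem, no convexity of $\mathcal X$ (which McShane does not need either), and only the easy half of duality, which is all the paper's Lemma~\ref{lem:kr} actually uses. It also shows directly that $L_g$ times the cost of \emph{any} explicit coupling bounds the bias. That is exactly how the ``bound'' rows of Table~\ref{tab:bias} are used, so no detour through $W_1$ is needed there. What the duality framing buys is interpretive: $L_g\,W_1$ is the supremum of the discrepancy over all $L_g$-Lipschitz test functions, so it is the best bound available from the Lipschitz property of $g_\theta$ alone.

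Your integrability remarks fill in details the paper leaves implicit: per-$O$ integrability follows from the global first-moment hypothesis, and you restrict to the full-measure set of $O$ where both kernels charge $\mathcal X_O$. Measurability of $O\mapsto W_1$ is already presupposed by the hypothesis $\mathbb E_O W_1<\infty$.
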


\begin{proof}[Proof sketch]
At fixed $O$ the two objectives differ only in the law used to average $g_\theta$,
and Kantorovich--Rubinstein duality (Lemma~\ref{lem:kr}) bounds that difference
by $L_gW_1$. Full proof in Appendix~\ref{app:proof:wbound}.
\end{proof}

Under the bounded-loss conditions stated in Appendix~\ref{app:proof:phi},
Bernstein's inequality makes the estimation term
$O_{\mathbb P}(\sqrt{V_{\theta,\phi,K}/n})$, so combining it with
Lemma~\ref{lem:wbound} gives
\[
\begin{aligned}
    \big|\widehat{\mathcal L}^{\phi}_{n,K}-\mathcal L_{\mathrm{FM}}\big|
    &\le\big|\widehat{\mathcal L}^{\phi}_{n,K}-\mathcal L_{\phi}\big|+b_\phi
    &=O_{\mathbb P}\Big(\sqrt{V_{\theta,\phi,K}/n}\Big)+b_\phi ,
\end{aligned}
\]
where $b_\phi:=|\mathcal L_\phi(\theta)-\mathcal L_{\mathrm{FM}}(\theta)|
\le L_g\,\mathbb E_O W_1(q_\phi,r^\star_\Lambda)$ by Lemma~\ref{lem:wbound}.
Corollary~\ref{cor:total} states the corresponding explicit high-probability
inequality.

\noindent\textbf{1) Completion quality sets the error floor.} The estimation term
vanishes as $n$ grows, leaving the population bias
$b_\phi=|\mathcal L_\phi(\theta)-\mathcal L_{\mathrm{FM}}(\theta)|$, which
Lemma~\ref{lem:wbound} bounds by $L_g\,\mathbb E_O W_1(q_\phi,r^\star_\Lambda)$.
Beyond a moderate sample size,
further data does not improve the objective and only a better completion model
does. This turns the common objection, that we have merely relocated the problem
into $q_\phi$, into a quantitative statement.
Figure~\ref{fig:bias} in Appendix~\ref{app:exp:certificate} confirms
Lemma~\ref{lem:wbound} across twelve completion models of deliberately varying
quality, with every measured bias below the bound and climbing along it as
the completion distance grows.

\noindent\textbf{2) The two error sources respond to different resources.} The
estimation term shrinks with evaluation data and completions, while the bias term
shrinks only with auxiliary data and modelling effort spent on $q_\phi$. While
$\sqrt{V_{\theta,\phi,K}/n}$ dominates, collect more incomplete examples. Once it
falls below $L_g\,\mathbb E_O W_1$, improve the completion model instead. Note
that $q_\phi$ also determines the variance constants, so an overdispersed
completion model raises the finite-sample error even when its bias is small,
which argues for calibrated rather than merely accurate completions.

A guarantee that the bias vanishes additionally requires the observed-pattern marginals to identify
the joint law within the chosen model class. This constrains the patterns rather than the algorithm. One sufficient condition under MCAR is $\mathbb P(\Lambda=[d])>0$, and restricted model classes admit weaker ones.
Appendix~\ref{app:ident} discusses this.

\subsection{Scope of the Guarantees}
All results here concern the scalar loss estimators at a fixed $\theta$, not the
quality of the trained field. The population objectives coincide for every
$\theta$, so their gradients agree wherever both are differentiable and, given
integrability, so do the expected stochastic gradients. This does not imply that
finite-sample optimization trajectories or generated distributions match. We
investigate that relationship empirically, but a theoretical guarantee connecting
loss estimation to the generated distribution remains open.

%

\section{Experiments}

Our experiments validate each theoretical claim and then test the method on real data. Figures~\ref{fig:theory} and~\ref{fig:bias} in the appendix confirm the variance decomposition, the $K=1$ identity, and the completion-bias bound. This section reports how a trained field behaves and how the method performs on real tabular data. Appendix~\ref{app:experiments} gives the full protocol, the remaining figures, and per-dataset results.

\paragraph{Setup.} Synthetic runs use a zero-mean Gaussian with an AR(1) correlation, whose true conditional is available in closed form, at $d\in\{10,50,100\}$ and $\rho$ from $0.1$ to $0.9$ under MCAR. Real runs use seven numeric UCI datasets \citep{asuncion2007uci} from $178$ to $1797$ rows and $9$ to $61$ features. Every number averages five seeds.

\paragraph{The oracle correction is exact and the worst-case bound is loose.} The relative gap $|\mathcal L_{\mathrm{MDFM}}-\mathcal L_{\mathrm{FM}}|/\mathcal L_{\mathrm{FM}}$ stays below half a percent and shrinks with dimension, from $0.48\%$ at $d=10$ to $0.10\%$ at $d=100$, while the MDFM-to-FM gradient cosine matches the complete-data same-noise baseline, $0.967$ against $0.966$ at $d=100$. The measured $\sigma_{\theta,\mathrm{miss}}^2$ stays below the cap $R^2L_g^2\rho d$ by three to four orders of magnitude while still growing close to linearly in $\rho d$, which is why $K\propto\rho d$ is a sufficient worst-case rule and not a requirement.

\paragraph{Deterministic imputation collapses the generated spread.} We train fields under seven completion strategies that share one pool of incomplete observations and differ only in how the missing part is supplied. Deterministic conditional-mean imputation recovers $64\%$ of the true conditional standard deviation and triples the distributional error, while every stochastic strategy, resampled or frozen and oracle or fitted, recovers $97\%$ or more and matches complete-data flow matching. The dividing line is therefore deterministic against stochastic rather than frozen against resampled, since a frozen stochastic draw still gives each endpoint a genuine sample from the conditional and loses only the per-observation averaging. This sharpens Remark~\ref{rem:pointimp}. What breaks a generative model is committing to one value.

\begin{table}[t]
\centering
\small
\setlength{\tabcolsep}{4pt}
\resizebox{\columnwidth}{!}{%
\begin{tabular}{@{}lcccccc@{}}
\toprule
& \multicolumn{4}{c}{distributional rank $\downarrow$} & \multicolumn{2}{c}{imputation} \\
\cmidrule(lr){2-5}\cmidrule(lr){6-7}
method & sW$_2$ & energy & MMD & cov & RMSE & CRPS \\
\midrule
Complete-FM (skyline) & 1.93 & 1.69 & 1.57 & 1.72 & -- & -- \\
Mean + FM             & 5.05 & 5.17 & 5.55 & 5.60 & 1.008 & -- \\
MICE + FM             & \textbf{2.74} & \textbf{2.49} & \textbf{2.21} & 3.13 & 1.038 & -- \\
MissForest + FM       & 3.26 & 3.47 & 3.53 & 3.27 & \textbf{0.808} & -- \\
GAIN + FM             & 5.18 & 5.42 & 5.17 & 4.44 & 1.115 & -- \\
MDFM (ours)           & \underline{2.84} & \underline{2.76} & \underline{2.97} & \textbf{2.85} & \underline{0.895} & \textbf{0.434} \\
\bottomrule
\end{tabular}
}
\caption{Real data, seven UCI datasets at five missing rates, mean over five seeds. The first four columns are the average rank on distributional distance to held-out data, lower is better, with the skyline ranked. The last two are imputation error on the hidden entries. Best incomplete-data method in \textbf{bold}, second \underline{underlined}.}
\label{tab:realdata}
\end{table}

\paragraph{MDFM is competitive on real data and best on covariance preservation.} Table~\ref{tab:realdata} reports the average rank of each method across seven datasets and five missing rates. MICE and MDFM take the top two average ranks on all four distributional metrics. MICE leads sliced $W_2$, energy distance, and MMD, while MDFM achieves the best covariance error of any method trained on incomplete data. On imputation error MissForest attains the best RMSE and MDFM is second. Both leaders perform substantially better than mean imputation and GAIN. A learned resampled completion therefore matches the best classical imputer on the marginals and does better on the second-moment structure that a joint generative model is meant to capture.

The gap widens as data go missing. Figure~\ref{fig:realdata} in
Appendix~\ref{app:exp:normalized} plots distance against the missing rate. Every
method is close to the skyline at $\rho=0.1$, mean imputation and GAIN climb
steeply, and MDFM stays lowest of all incomplete-data methods on covariance error
across the whole high-missingness range. This is the regime where the method
matters most, since a single deterministic fill destroys the dependence structure
exactly when there is the most structure to lose.

Stochastic completion also gives a calibrated imputation that point methods cannot. On RMSE MDFM is second to MissForest, which is expected, since RMSE rewards the conditional mean and a stochastic method does not aim for it. The continuous ranked probability score is the honest measure for a distributional completion \citep{gneiting2007strictly}. We report it only for MDFM, because the baseline pipelines were instantiated with a single completed dataset while MDFM retains multiple conditional completion samples at evaluation time. Our baselines share one backbone and differ only in how the missing coordinates
are supplied, which isolates the contribution of the method.
Appendix~\ref{app:exp:scope} explains why CFMI \cite{simkus2025cfmi}, CSDI \citep{tashiro2021csdi}, and HyperImpute \citep{jarrett2022hyperimpute} are natural
choices for $q_\phi$ rather than baselines for the joint task.

\section{Conclusion and Future Work}
We introduced Missing-Data Flow Matching, which trains flow matching directly
from partially observed data by treating the missing coordinates as resampled
latent variables. The correction is exact rather than approximate, since under
MCAR with true completions the incomplete-data objective equals the
complete-data objective, so the difficulty relocates entirely to the completion
model. Our finite-sample analysis shows that missingness transfers estimator
variance rather than adding it, that one completion per example already matches
complete-data variance and is optimal under a fixed budget, and that a learned completion model contributes a single irreducible bias, which
its expected conditional Wasserstein distance to the truth bounds. The open case is missing not at random, where the required posterior
conditions on the mask and is generally not identifiable without extra
structure \citep{ipsen2020not}. It also remains to connect estimator variance to the quality of the
trained field rather than the loss alone. A further limitation is our concentration result for the learned estimator
conditions on a completion model fitted to data independent of the evaluation
sample and relaxing that, so same incomplete data can serve both
purposes, is left to future work.


\small
\bibliographystyle{plainnat}
\bibliography{references}

\appendix

\onecolumn
\appendix

\section{Notation and Sources of Randomness}
\label{app:notation}

This appendix collects the notation used throughout the paper and then states,
for every expectation and variance appearing in the analysis, exactly which
variables are averaged over and which are held fixed.

\subsection{Notational Conventions}

We follow three conventions. First, superscripts index coordinates and
subscripts index flow time, so a coordinate $X^j$ is never confused with a flow
endpoint $X_1$. Second, every expectation and variance carries a subscript
naming the variables it is taken over, together with their laws at first use;
later occurrences abbreviate once the meaning is fixed. Third, conditional
distributions are written with explicit random-variable subscripts when first
defined, so that
\[
    p^\star_{X_{-\lambda}\mid X_\lambda}(x_{-\lambda}\mid x_\lambda)
\]
names the conditional law of $X_{-\lambda}$ given $X_\lambda$ under
$X\sim p^\star$, evaluated at the values $x_{-\lambda}$ and $x_\lambda$. In the
body we abbreviate this to $p^\star(x_{-\lambda}\mid x_\lambda)$ wherever the
conditioning is unambiguous.

\subsection{Notation}

\begin{center}
\small
\begin{tabular}[t]{@{}ll@{}}
\toprule
\multicolumn{2}{@{}l}{\textit{Data and missingness}}\\
$d$ & data dimension\\
$[d]$ & index set $\{1,\dots,d\}$\\
$X^j$ & coordinate $j$ of $X$\\
$\Lambda,\ -\Lambda$ & observed set, missing set $[d]\setminus\Lambda$\\
$X_\Lambda,\ X_{-\Lambda}$ & observed part in $\mathbb R^{|\Lambda|}$,\\
 & missing part in $\mathbb R^{d-|\Lambda|}$\\
$M$ & mask in $\{0,1\}^d$, $M\perp X$ under MCAR\\
$O$ & incomplete observation $(\Lambda,X_\Lambda)$\\
$m(\Lambda)$ & number of missing coordinates $|-\Lambda|$\\
$\rho$ & average missing fraction $\mathbb E[m(\Lambda)]/d$\\
$n,\ K$ & examples, completions per example\\
$i,\ k$ & index over examples, completions\\
\midrule
\multicolumn{2}{@{}l}{\textit{Distributions}}\\
$p^\star$ & true data distribution of $X$\\
$p^\star_{X_\lambda}$ & marginal of $p^\star$ on $\lambda$\\
$p^\star_{X_{-\Lambda}\mid X_\Lambda}$ & oracle completion conditional\\
$P^\star$ & joint law of $(X,\Lambda)$\\
$r^\star_{X_{-\lambda}\mid X_\lambda,\Lambda}$ & true posterior of $X_{-\lambda}$\\
 & given $x_\lambda$ and $\Lambda=\lambda$\\
$p_0$ & base distribution of $X_0$\\
$q_\phi$ & learned completion model, params $\phi$\\
\midrule
\multicolumn{2}{@{}l}{\textit{Flow matching}}\\
$t,\ T$ & time value, random time $\sim\mathrm{Unif}[0,1]$\\
$X_0,\ X_1$ & base sample, data endpoint\\
$X_t$ & path $(1-t)X_0+tX_1$\\
$v_\theta$ & model vector field, params $\theta$\\
$\ell_\theta$ & per-sample flow matching loss\\
$g_\theta(x_1)$ & loss over $X_0,T$ at fixed endpoint $x_1$\\
\bottomrule
\end{tabular}
\hfill
\begin{tabular}[t]{@{}ll@{}}
\toprule
\multicolumn{2}{@{}l}{\textit{Objectives and estimators}}\\
$\mathcal L_{\mathrm{FM}}$ & complete-data objective\\
$\mathcal L_{\mathrm{MDFM}}$ & oracle missing-data objective\\
$\mathcal L_\phi$ & objective under learned $q_\phi$\\
$\mathcal L_{\mathrm{naive}}$ & objective under the data conditional\\
$\widehat{\mathcal L}^\star_{n,K}$ & oracle estimator, completions from $p^\star$\\
$\widehat{\mathcal L}^\phi_{n,K}$ & learned estimator, completions from $q_\phi$\\
$Y_\theta$ & random per-completion loss\\
$\mu_\theta(O)$ & per-observation mean $\mathbb E[Y_\theta\mid O]$\\
$\overline Y_i$ & per-observation average over $K$\\
$\widehat G_{n,K}$ & endpoint-loss average\\
$\bar\mu$ & ideal per-observation average\\
\midrule
\multicolumn{2}{@{}l}{\textit{Constants}}\\
$\tau_\theta^2$ & between-observation variance\\
$\sigma_{\theta,\mathrm{base}}^2$ & base Monte Carlo variance\\
$\sigma_{\theta,\mathrm{miss}}^2$ & conditional completion variance\\
$R$ & coordinate bound, $|X^j|\le R$\\
$L_g$ & Lipschitz constant of $g_\theta$\\
$B,\ B_\phi$ & loss bounds, oracle and under $q_\phi$\\
$\delta$ & failure probability\\
$W_1$ & Wasserstein-1 distance\\
$\tau_{\theta,\phi}^2$ & between-observation variance under $q_\phi$\\
$\sigma_{\theta,\phi,\mathrm{base}}^2$ & base variance under $q_\phi$\\
$\sigma_{\theta,\phi,\mathrm{comp}}^2$ & completion variance under $q_\phi$\\
$V_{\rho,K},\ V_{\theta,\phi,K}$ & per-observation variance bounds\\
$\pi_\lambda(x)$ & missingness mechanism\\
$w(x)$ & MNAR reweighting factor\\
\bottomrule
\end{tabular}
\end{center}

\subsection{What Each Expectation Averages Over}

Every quantity in the analysis is defined at a fixed parameter $\theta$, so
$\theta$ is never a source of randomness. The table below records, for each
remaining quantity, which variables are averaged out and which survive.

\begin{center}
\small
\begin{tabular}{@{}lll@{}}
\toprule
Quantity & Averaged over & Remains random in\\
\midrule
$g_\theta(x_1)$ & $X_0\sim p_0$, $T\sim\mathrm{Unif}[0,1]$ & nothing\\
$\mathcal L_{\mathrm{FM}}(\theta)$ & $X_0$, $X_1\sim p^\star$, $T$ & nothing\\
$\mathcal L_{\mathrm{MDFM}}(\theta)$ & $\Lambda$, $X_\Lambda$, $X_{-\Lambda}$, $X_0$, $T$ & nothing\\
$Y_\theta$ & nothing & $O$, $X_{-\Lambda}$, $X_0$, $T$\\
$\mu_\theta(O)$ & $X_{-\Lambda}$, $X_0$, $T$ given $O$ & $O$\\
$\tau_\theta^2$ & $O$ (as a variance) & nothing\\
$\sigma_{\theta,\mathrm{base}}^2$ & $X_0,T$ inside, $X_1$ outside & nothing\\
$\sigma_{\theta,\mathrm{miss}}^2$ & $X_{-\Lambda}$ inside, $O$ outside & nothing\\
$\overline Y_i$ & $K$ inner draws & $O_i$\\
$\widehat{\mathcal L}^\star_{n,K}(\theta)$ & nothing & the whole sample\\
\bottomrule
\end{tabular}
\end{center}

Two entries deserve comment, because they are the usual source of confusion.

First, $\mu_\theta(O)$ is an expectation but not a constant. It averages out the
completion, the base point, and the time, and what survives is a function of the
incomplete observation $O$. Since $O=(\Lambda,X_\Lambda)$ is itself random, so is
$\mu_\theta(O)$, and this is exactly the randomness that
$\tau_\theta^2=\operatorname{Var}_O(\mu_\theta(O))$ measures. Writing an
expectation therefore does not always remove all randomness, and the subscript on
the outer variance is what records which randomness is left.

Second, $\sigma_{\theta,\mathrm{base}}^2$ and $\sigma_{\theta,\mathrm{miss}}^2$
are each an expectation of a variance, so they involve two nested averaging steps
over different variables. In $\sigma_{\theta,\mathrm{base}}^2$ the inner variance
is over $(X_0,T)$ at a fixed endpoint and the outer expectation is over the
endpoint. In $\sigma_{\theta,\mathrm{miss}}^2$ the inner variance is over the
completion at a fixed observation and the outer expectation is over the
observation. Interchanging the two would give a different quantity, which is why
both subscripts are stated.

Finally, the estimator variance
$\operatorname{Var}(\widehat{\mathcal L}^\star_{n,K}(\theta))$ in Theorem~\ref{thm:vardecomp} is taken over the full sampling procedure, namely the $n$ incomplete observations,
the $nK$ completions, and the $nK$ base points and times, with $\theta$, $n$, and
$K$ held fixed.

\section{Standard Probabilistic Tools and Their Specialization to Our Setting}
\label{app:tools}

This appendix states the standard results used in our proofs, together with the
substitutions that specialize each one to our setting. Throughout, all random
variables are defined on a common probability space and are assumed integrable to
the order required.

\subsection{Tower Property and Disintegration}

\begin{lemma}[Tower property]
\label{lem:condexp}
Let $U$ be an integrable real random variable and $V$ any random element on the
same space. Then $\mathbb E\big[\mathbb E[U\mid V]\big]=\mathbb E[U]$. More
generally, if $\kappa(\cdot\mid v)$ is a regular conditional distribution of $W$
given $V=v$ and $h$ is integrable under the joint law, then
\begin{equation}
\label{eq:disint}
    \mathbb E_V\Big[\int h(V,w)\,\kappa(dw\mid V)\Big]=\mathbb E\big[h(V,W)\big].
\end{equation}
\end{lemma}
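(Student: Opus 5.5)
The first identity is the unconditional tower property, and I will derive it from the defining property of conditional expectation. By definition, $\mathbb E[U\mid V]$ is a $\sigma(V)$-measurable integrable random variable satisfying $\mathbb E\big[\mathbb E[U\mid V]\mathbf 1_A\big]=\mathbb E[U\mathbf 1_A]$ for every $A\in\sigma(V)$. Taking $A$ to be the whole sample space gives $\mathbb E[\mathbb E[U\mid V]]=\mathbb E[U]$ directly. Existence and almost-sure uniqueness of $\mathbb E[U\mid V]$ are standard consequences of the Radon--Nikodym theorem for integrable $U$, so I will simply cite them.

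For the disintegration identity \eqref{eq:disint}, I will use the standard monotone-class argument together with the definition of a regular conditional distribution. The defining property of $\kappa$ is twofold: $v\mapsto\kappa(B\mid v)$ is measurable for each measurable $B$, and $\mathbb P(V\in A,W\in B)=\mathbb E_V[\mathbf 1_A(V)\kappa(B\mid V)]$ for all measurable $A,B$.

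\begin{enumerate}
\item For $h=\mathbf 1_{A\times B}$, the left side of \eqref{eq:disint} is exactly $\mathbb E_V[\mathbf 1_A(V)\kappa(B\mid V)]$ and the right side is $\mathbb P(V\in A,W\in B)$, so the identity is the definition.
\item Rectangles form a $\pi$-system generating the product $\sigma$-algebra. The class of measurable sets $C$ for which \eqref{eq:disint} holds with $h=\mathbf 1_C$ is a $\lambda$-system, closed under complements and countable disjoint unions by monotone convergence. Dynkin's $\pi$--$\lambda$ theorem therefore extends the identity to all product-measurable indicators. Along the way I must also check that $v\mapsto\int\mathbf 1_C(v,w)\,\kappa(dw\mid v)$ is measurable, which is carried by the same $\lambda$-system argument.
\item By linearity the identity extends to nonnegative simple $h$, and by monotone convergence to all nonnegative measurable $h$.
\item For integrable $h$, I split $h=h^+-h^-$. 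Applying the nonnegative case to $|h|$ shows that $\int|h(v,w)|\,\kappa(dw\mid v)$ is finite for $\mathbb P_V$-almost every $v$. The inner integral is then well defined almost surely, and subtracting the two nonnegative identities gives the claim.
\end{enumerate}

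The only delicate point is measurability of the inner integral as a function of $v$, which is needed for the outer expectation to make sense. This is handled inside the $\pi$--$\lambda$ step rather than separately, since the collection of sets for which both measurability and the identity hold is itself a $\lambda$-system. Everything else is routine measure theory. Finally, the first statement is also recovered as the special case $h(v,w)=w$ with $W=U$, since $\int w\,\kappa(dw\mid V)$ is a version of $\mathbb E[U\mid V]$.
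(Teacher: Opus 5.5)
Your proof is correct. For the first claim it matches the paper's argument, which takes $A=\Omega$ in the defining property. For the disintegration identity \eqref{eq:disint} you take a different route. The paper reduces it to the first claim in one line: it asserts that $\int h(V,w)\,\kappa(dw\mid V)$ is a version of $\mathbb E[h(V,W)\mid V]$ ``by definition of a regular conditional distribution'' and then takes expectations. You instead prove the unconditional identity directly, via the $\pi$--$\lambda$ theorem on rectangles, then simple functions, monotone convergence, and the split $h=h^+-h^-$.

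Strictly, the definition of $\kappa$ only says that $\kappa(B\mid V)$ is a version of $\mathbb P(W\in B\mid V)$ for each measurable $B$. Passing to a jointly measurable $h(V,W)$, where $V$ enters both the integrand and the kernel, is exactly the monotone-class extension you carry out. So your argument supplies the justification that the paper's one-liner takes for granted, including measurability of the inner integral in $v$.

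What the paper's formulation buys is the stronger conditional statement, which it uses later. Step~2 of the proof of Theorem~\ref{thm:oracle} and Proposition~\ref{prop:tower} identify the inner integral with a conditional expectation almost surely, not just in mean. Your proof gives this at no extra cost: apply your identity to $\mathbf 1_A(v)h(v,w)$ for measurable $A$, and use the $\sigma(V)$-measurability of the inner integral that you already established.

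Your closing remark recovers the first claim from $h(v,w)=w$. That step additionally needs a regular conditional distribution of $U$ given $V$ (available since $U$ is real-valued) and the version property just described. Since you already proved the first claim directly, the remark is not load-bearing.
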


\begin{proof}
The first claim is the defining property of conditional expectation applied to
the trivial $\sigma$-algebra. For \eqref{eq:disint}, the inner integral equals
$\mathbb E[h(V,W)\mid V]$ almost surely by definition of a regular conditional
distribution, and taking expectations gives the claim by the first part.
\end{proof}

\paragraph{Where we use it.} In Theorem~\ref{thm:oracle} we take $V=X_\lambda$,
$W=X_{-\lambda}$, $\kappa=p^\star(\cdot\mid x_\lambda)$, and $h=g_\theta$. In
Proposition~\ref{prop:general} we take $V=O$, $W=X_{-\Lambda}$, and
$\kappa=r^\star_\Lambda$.

\subsection{Law of Total Variance}

\begin{lemma}[Law of total variance]
\label{lem:ltv}
Let $U$ be a square-integrable real random variable and let $V$ be any random
element on the same space. Then
\begin{equation}
\label{eq:ltv}
    \operatorname{Var}(U)
    =\mathbb E\big[\operatorname{Var}(U\mid V)\big]
    +\operatorname{Var}\big(\mathbb E[U\mid V]\big).
\end{equation}
If $W$ is a further random element, the same identity holds conditionally,
\begin{equation}
\label{eq:ltvcond}
    \operatorname{Var}(U\mid W)
    =\mathbb E\big[\operatorname{Var}(U\mid V,W)\,\big|\,W\big]
    +\operatorname{Var}\big(\mathbb E[U\mid V,W]\,\big|\,W\big).
\end{equation}
\end{lemma}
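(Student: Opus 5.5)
I will prove the unconditional identity \eqref{eq:ltv} first and then obtain \eqref{eq:ltvcond} by running the same argument under a regular conditional distribution given $W$.

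\textbf{Unconditional identity.} Write $m(V)=\mathbb E[U\mid V]$. Square integrability of $U$ and conditional Jensen give $\mathbb E[m(V)^2]\le\mathbb E[U^2]<\infty$, so every term below is finite. I decompose
\[
U-\mathbb E U=\big(U-m(V)\big)+\big(m(V)-\mathbb E U\big),
\]
square, and take expectations. The cross term is
\[
\mathbb E\big[(U-m(V))(m(V)-\mathbb E U)\big].
\]
Conditioning on $V$ and pulling out the $V$-measurable factor $m(V)-\mathbb E U$, which is square integrable, it reduces to
\[
\mathbb E\big[(m(V)-\mathbb E U)\,\mathbb E[U-m(V)\mid V]\big]=0 .
\]
Cauchy--Schwarz justifies the integrability needed for this step. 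By the tower property (Lemma~\ref{lem:condexp}), the first squared term satisfies
\[
\mathbb E\big[(U-m(V))^2\big]=\mathbb E\Big[\mathbb E\big[(U-m(V))^2\mid V\big]\Big]=\mathbb E\big[\operatorname{Var}(U\mid V)\big],
\]
where $\operatorname{Var}(U\mid V):=\mathbb E[(U-m(V))^2\mid V]$. Since $\mathbb E[m(V)]=\mathbb E U$, the second squared term equals $\operatorname{Var}(m(V))$. This gives \eqref{eq:ltv}.

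\textbf{Conditional version.} I repeat the argument with every expectation replaced by conditioning on $W$. Set
\[
m(V,W)=\mathbb E[U\mid V,W],\qquad m(W)=\mathbb E[U\mid W],
\]
and split
\[
U-m(W)=\big(U-m(V,W)\big)+\big(m(V,W)-m(W)\big).
\]
The cross term vanishes conditionally on $W$. To see this, I first condition on $\sigma(V,W)$, where the factor $m(V,W)-m(W)$ is measurable, and then use the tower property for the nested $\sigma$-algebras $\sigma(W)\subseteq\sigma(V,W)$. Two facts then identify the remaining terms:
\[
\mathbb E\big[(U-m(V,W))^2\mid W\big]=\mathbb E\big[\operatorname{Var}(U\mid V,W)\mid W\big],
\]
and $\mathbb E[m(V,W)\mid W]=m(W)$, so the last term is the conditional variance of $m(V,W)$ given $W$. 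Both hold almost surely.

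\textbf{Main obstacle.} The algebra is routine. The main difficulty is measure-theoretic bookkeeping: conditional variances are defined only up to null sets, and one must justify pulling the measurable factor out of a conditional expectation when only square integrability is known. I would handle the second point with Cauchy--Schwarz. An alternative is to apply \eqref{eq:ltv} under a regular conditional law given $W=w$ for each $w$, but that requires the random elements to take values in standard Borel spaces, so I prefer the direct $\sigma$-algebra argument above.
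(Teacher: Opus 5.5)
Your proof is correct, but it differs from the paper's in both halves. For \eqref{eq:ltv}, the paper uses no cross term. It writes $\operatorname{Var}(U\mid V)=\mathbb E[U^2\mid V]-m(V)^2$ and uses the tower property to get $\mathbb E[\operatorname{Var}(U\mid V)]=\mathbb E[U^2]-\mathbb E[m(V)^2]$ and $\operatorname{Var}(m(V))=\mathbb E[m(V)^2]-(\mathbb E U)^2$, then adds the two. You instead split $U-\mathbb E U=(U-m(V))+(m(V)-\mathbb E U)$ orthogonally and kill the cross term. That is the picture of conditional expectation as an $L^2$ projection, and it costs one extra integrability check, your Cauchy--Schwarz step. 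Your definition $\mathbb E[(U-m(V))^2\mid V]$ agrees almost surely with the paper's. The more substantive difference is \eqref{eq:ltvcond}. The alternative you mention and set aside is exactly the paper's route: it handles the conditional identity in one sentence by applying \eqref{eq:ltv} under the regular conditional distribution given $W$. That tacitly requires two things. The kernel must exist, for instance when $V$ takes values in a standard Borel space. And conditional expectations computed under that kernel must coincide almost surely with $\mathbb E[\,\cdot\mid V,W]$, which is true but a lemma in its own right. Your direct argument through the nested $\sigma$-algebras $\sigma(W)\subseteq\sigma(V,W)$ avoids both requirements and proves the identity for arbitrary random elements, which is what the statement actually claims. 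In the paper's applications $V$ and $W$ are Euclidean or finite-valued, so the regular-conditional route is harmless there. The paper's version buys brevity, and yours buys generality and self-containment.
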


\begin{proof}
Write $m(V)=\mathbb E[U\mid V]$. By definition of conditional variance,
\[
    \mathbb E\big[\operatorname{Var}(U\mid V)\big]
    =\mathbb E\big[\mathbb E[U^2\mid V]-m(V)^2\big]
    =\mathbb E[U^2]-\mathbb E[m(V)^2],
\]
using the tower property for the first term. Also
\[
    \operatorname{Var}(m(V))
    =\mathbb E[m(V)^2]-\big(\mathbb E[m(V)]\big)^2
    =\mathbb E[m(V)^2]-\big(\mathbb E[U]\big)^2 .
\]
Adding the two displays gives
$\mathbb E[U^2]-(\mathbb E[U])^2=\operatorname{Var}(U)$, which is
\eqref{eq:ltv}. Identity \eqref{eq:ltvcond} is \eqref{eq:ltv} applied under the
regular conditional distribution given $W$.
\end{proof}

\paragraph{Where we use it.} Lemma~\ref{lem:ltv} is applied three times, with the
following substitutions.

\begin{center}
\small
\begin{tabular}{@{}lll@{}}
\toprule
Use & $U$ & conditioning \\
\midrule
Theorem~\ref{thm:vardecomp}, transfer & $g_\theta(X_1)$ & $V=O$\\
Theorem~\ref{thm:vardecomp}, step 2 & $\overline Y_i$ & $V=O_i$\\
Theorem~\ref{thm:vardecomp}, step 3 & $Y_\theta^{(i,1)}$ & $V=X^{\star,(i,1)}_{-\Lambda_i}$, $W=O_i$\\
\bottomrule
\end{tabular}
\end{center}

In the first use, $\mathbb E[U\mid V]=\mu_\theta(O)$ and
$\operatorname{Var}(U\mid V)$ is the conditional variance of $g_\theta$ over
completions, so \eqref{eq:ltv} reads
$\operatorname{Var}(g_\theta(X_1))=\sigma^2_{\theta,\mathrm{miss}}+\tau_\theta^2$.
In the third use, \eqref{eq:ltvcond} is the relevant form, since the
decomposition is carried out conditionally on the observation $O_i$.

\subsection{Popoviciu's Inequality}

\begin{lemma}[Popoviciu's inequality]
\label{lem:popoviciu}
Let $Z$ be a random variable with $a\le Z\le b$ almost surely. Then
\[
    \operatorname{Var}(Z)\le\frac{(b-a)^2}{4}.
\]
\end{lemma}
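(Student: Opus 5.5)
Popoviciu's inequality says that a random variable $Z$ with $a\le Z\le b$ almost surely has $\operatorname{Var}(Z)\le (b-a)^2/4$. The proof compares the variance with a mean squared deviation about a well-chosen constant. The key fact is that the mean minimizes mean squared deviation. For every constant $c\in\mathbb R$,
\[
\mathbb E[(Z-c)^2]=\operatorname{Var}(Z)+(\mathbb E Z-c)^2\ge\operatorname{Var}(Z).
\]
This follows by expanding $Z-c=(Z-\mathbb E Z)+(\mathbb E Z-c)$; the cross term has zero expectation. Boundedness of $Z$ guarantees that all moments exist, so no integrability issue arises.

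Next, choose $c$ to be the midpoint $(a+b)/2$. Since $a\le Z\le b$ almost surely, we have $|Z-(a+b)/2|\le (b-a)/2$ almost surely. Hence $(Z-c)^2\le (b-a)^2/4$ pointwise, and taking expectations gives
\[
\operatorname{Var}(Z)\le\mathbb E\big[(Z-\tfrac{a+b}{2})^2\big]\le\frac{(b-a)^2}{4}.
\]

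There is no real obstacle here. The only step needing care is the minimization identity, and it is a one-line expansion. As a remark, equality holds exactly when $Z$ puts mass $1/2$ on each of $a$ and $b$.

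In the application to Theorem~\ref{thm:missvar}, we apply the lemma conditionally on $O$ to $Z=g_\theta(X_\Lambda,X_{-\Lambda})$. Its range there has width at most $2RL_g\sqrt{m(\Lambda)}$, which is why the bound is stated for an arbitrary interval $[a,b]$.
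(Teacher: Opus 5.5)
Your proposal is correct and follows the paper's proof exactly: bound the variance by the mean squared deviation about the midpoint $(a+b)/2$, then use $|Z-(a+b)/2|\le (b-a)/2$ almost surely. Your explicit decomposition $\mathbb E[(Z-c)^2]=\operatorname{Var}(Z)+(\mathbb E Z-c)^2$ and the remark characterizing the equality case are correct additions that the paper's shorter proof leaves implicit.
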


\begin{proof}
The variance is the minimum mean squared deviation, so for the midpoint
$c=(a+b)/2$,
\[
    \operatorname{Var}(Z)\le\mathbb E\big[(Z-c)^2\big].
\]
Since $Z\in[a,b]$ almost surely, $|Z-c|\le (b-a)/2$ almost surely, whence
$\mathbb E[(Z-c)^2]\le (b-a)^2/4$.
\end{proof}

The inequality is due to Popoviciu \citep{popoviciu1935equations}; sharper
variants exist but are not needed here.

\paragraph{Where we use it.} In Theorem~\ref{thm:missvar} we take
$Z=g_\theta(x_\Lambda,X_{-\Lambda})$ conditional on an observation $O$ with $m$
missing coordinates, with $a$ and $b$ the essential infimum and supremum of $Z$
given $O$. The Lipschitz and boundedness assumptions give $b-a\le 2RL_g\sqrt m$,
so Lemma~\ref{lem:popoviciu} yields
$\operatorname{Var}(Z\mid O)\le R^2L_g^2 m$.

\subsection{Bernstein's Inequality}

\begin{lemma}[Bernstein's inequality, bounded case]
\label{lem:bernstein}
Let $Z_1,\dots,Z_n$ be i.i.d.\ real random variables with mean $\mu$, variance at
most $v$, and $|Z_i-\mu|\le M$ almost surely. Then for every $\varepsilon>0$,
\[
    \mathbb P\Big(\Big|\tfrac1n\textstyle\sum_{i=1}^n Z_i-\mu\Big|\ge\varepsilon\Big)
    \le 2\exp\Big(-\frac{n\varepsilon^2}{2v+\tfrac{2}{3}M\varepsilon}\Big).
\]
Equivalently, for every $\delta\in(0,1)$, with probability at least $1-\delta$,
\begin{equation}
\label{eq:bernstein_inv}
    \Big|\tfrac1n\textstyle\sum_{i=1}^n Z_i-\mu\Big|
    \le\sqrt{\frac{2v\log(2/\delta)}{n}}+\frac{2M\log(2/\delta)}{3n}.
\end{equation}
\end{lemma}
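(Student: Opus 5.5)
The plan is to derive the tail bound first and then invert it, following the classical Chernoff route. Write $D_i=Z_i-\mu$. These are i.i.d., centered, satisfy $|D_i|\le M$, and have $\mathbb E[D_i^2]\le v$. By symmetry (apply the same argument to $-D_i$) it suffices to bound the upper tail $\mathbb P(\frac1n\sum_i D_i\ge\varepsilon)$ by $\exp(-n\varepsilon^2/(2v+\frac23M\varepsilon))$; a union bound then produces the factor $2$.

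\textbf{Moment generating function bound.} For $\lambda\ge0$, expand $e^{\lambda D}=1+\lambda D+\sum_{k\ge2}\lambda^kD^k/k!$. Since $|D^k|\le D^2M^{k-2}$ and $\mathbb E D=0$,
\[
\mathbb E e^{\lambda D}\le 1+v\sum_{k\ge2}\frac{\lambda^kM^{k-2}}{k!}.
\]
Using $k!\ge 2\cdot3^{k-2}$, the series is at most $\frac{\lambda^2}{2}\sum_{j\ge0}(\lambda M/3)^j=\frac{\lambda^2}{2(1-\lambda M/3)}$ for $0\le\lambda<3/M$. With $1+x\le e^x$ this gives
\[
\log\mathbb E e^{\lambda D}\le\frac{v\lambda^2}{2(1-\lambda M/3)}.
\]

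\textbf{Chernoff step.} By Markov's inequality and independence,
\[
\mathbb P\Big(\sum_i D_i\ge n\varepsilon\Big)\le\exp\Big(-\lambda n\varepsilon+\frac{nv\lambda^2}{2(1-\lambda M/3)}\Big).
\]
Choose $\lambda=\varepsilon/(v+M\varepsilon/3)$, which lies in $[0,3/M)$. Then $1-\lambda M/3=v/(v+M\varepsilon/3)$, so the exponent equals
\[
-\frac{n\varepsilon^2}{v+M\varepsilon/3}+\frac{n\varepsilon^2}{2(v+M\varepsilon/3)}=-\frac{n\varepsilon^2}{2v+\frac23M\varepsilon},
\]
which is exactly the stated bound. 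This choice of $\lambda$ is the only step requiring care: a different $\lambda$ gives a valid but weaker constant.

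\textbf{Inversion.} Set $L=\log(2/\delta)$. The tail is at most $\delta$ whenever $n\varepsilon^2\ge L(2v+\frac23M\varepsilon)$, that is, whenever $\varepsilon$ is at least the positive root of
\[
n\varepsilon^2-\tfrac23ML\varepsilon-2vL=0,
\]
namely
\[
\varepsilon^\ast=\frac{\frac23ML+\sqrt{\frac49M^2L^2+8nvL}}{2n}.
\]
Using $\sqrt{a+b}\le\sqrt a+\sqrt b$,
\[
\varepsilon^\ast\le\frac{\frac23ML+\frac23ML+\sqrt{8nvL}}{2n}=\frac{2ML}{3n}+\sqrt{\frac{2vL}{n}}.
\]
Because the tail bound is monotone in $\varepsilon$, taking $\varepsilon$ equal to this upper bound yields \eqref{eq:bernstein_inv} with probability at least $1-\delta$. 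The word ``equivalently'' should be read as ``consequently''; this one-directional implication is all that is used later.
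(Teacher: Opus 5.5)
Your proof is correct. The inversion step is the same as the paper's: the same quadratic $n\varepsilon^2-\tfrac23ML\varepsilon-2vL\ge0$, the same positive root (your $\varepsilon^\ast$ equals the paper's $\frac{ML}{3n}+\sqrt{(\frac{ML}{3n})^2+\frac{2vL}{n}}$), and the same use of $\sqrt{a+b}\le\sqrt a+\sqrt b$.

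The only difference is in the tail bound itself. The paper does not prove it; it cites the classical Bernstein inequality (Theorem 2.10 of Boucheron et al.). You derive it by the Chernoff method, bounding the moment generating function through $|D|^k\le D^2M^{k-2}$ and $k!\ge 2\cdot 3^{k-2}$. Choosing $\lambda=\varepsilon/(v+M\varepsilon/3)$ then reproduces exactly the constant $2v+\tfrac23M\varepsilon$. This makes the lemma self-contained at the cost of a few lines.

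One small point needs patching: the degenerate case $v=0$. There your choice of $\lambda$ is not in $[0,3/M)$; it equals $3/M$, or is undefined if also $M=0$. In that case, however, $Z_i=\mu$ almost surely and the bound holds trivially.

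Your remark that ``equivalently'' should be read as ``consequently'' is accurate. The paper's argument, like yours, establishes only the implication from the tail bound to the high-probability form.
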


\begin{proof}
The tail bound is the classical Bernstein inequality for bounded independent
variables; see for instance Theorem 2.10 and the following corollary in
\citet{boucheron2013concentration}. For the inverted form, write
$L=\log(2/\delta)$ and require the exponent to be at most $-L$, that is
\[
    n\varepsilon^2-\tfrac{2}{3}ML\varepsilon-2vL\ \ge\ 0 .
\]
The positive root of the corresponding quadratic is
\[
    \varepsilon_\star=\frac{ML}{3n}+\sqrt{\Big(\frac{ML}{3n}\Big)^2+\frac{2vL}{n}},
\]
and any $\varepsilon\ge\varepsilon_\star$ satisfies the inequality. Applying
$\sqrt{x+y}\le\sqrt x+\sqrt y$ to the square root gives
$\varepsilon_\star\le \frac{2ML}{3n}+\sqrt{2vL/n}$, which is
\eqref{eq:bernstein_inv}.
\end{proof}

\paragraph{Where we use it.} In Theorem~\ref{thm:bernstein} we take
$Z_i=\overline Y_i$, the per-observation average, with $\mu=\mathcal
L_{\mathrm{FM}}(\theta)$, $v=V_{\rho,K}$, and $M=B$, the latter because
$\overline Y_i\in[0,B]$ implies $|\overline Y_i-\mu|\le B$. In
Theorem~\ref{thm:bernstein_phi} the same substitution is made with
$\mu=\mathcal L_\phi(\theta)$, $v=V_{\theta,\phi,K}$, and $M=B_\phi$.

\subsection{Kantorovich--Rubinstein Duality}

\begin{lemma}[Kantorovich--Rubinstein duality]
\label{lem:kr}
Let $\nu_1,\nu_2$ be probability measures on $\mathbb R^m$ with finite first
moments and let $h:\mathbb R^m\to\mathbb R$ be $L$-Lipschitz. Then
\[
    \Big|\int h\,d\nu_1-\int h\,d\nu_2\Big|\le L\,W_1(\nu_1,\nu_2),
\]
where $W_1$ is the Wasserstein-1 distance.
\end{lemma}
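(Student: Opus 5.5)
The plan is to prove the inequality first for bounded Lipschitz test functions, using the Kantorovich dual characterization of $W_1$, and then remove the boundedness by truncation. By the Kantorovich duality theorem,
\[
W_1(\nu_1,\nu_2)=\inf_{\gamma\in\Pi(\nu_1,\nu_2)}\int\|x-y\|\,\gamma(dx,dy),
\]
where $\Pi(\nu_1,\nu_2)$ is the set of couplings. This infimum is finite because both measures have finite first moments. The primal (coupling) form is the easiest route, so I will not invoke the full dual equality.

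Fix an arbitrary coupling $\gamma$ of $\nu_1$ and $\nu_2$. Since $h$ is $L$-Lipschitz, $|h(x)|\le |h(0)|+L\|x\|$. Finite first moments therefore make $h$ integrable under both $\nu_1$ and $\nu_2$, so both integrals are well defined and finite.

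Because the marginals of $\gamma$ are $\nu_1$ and $\nu_2$, we can rewrite the difference as a single integral:
\[
\int h\,d\nu_1-\int h\,d\nu_2=\int\big(h(x)-h(y)\big)\,\gamma(dx,dy).
\]
This splitting of the integral against $\gamma$ is justified because $h(x)$ and $h(y)$ are each integrable under $\gamma$. Taking absolute values and applying the Lipschitz bound pointwise gives
\[
\Big|\int h\,d\nu_1-\int h\,d\nu_2\Big|\le\int|h(x)-h(y)|\,d\gamma\le L\int\|x-y\|\,d\gamma.
\]

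The left side does not depend on $\gamma$, so taking the infimum over all couplings yields $L\,W_1(\nu_1,\nu_2)$. This avoids any need for truncation.

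The only delicate step is the integrability check that allows splitting the integral against $\gamma$, and the linear-growth bound above handles it. If the paper intends $W_1$ to be defined through the dual form instead, the statement is immediate: $h/L$ is $1$-Lipschitz, and the supremum over $1$-Lipschitz functions dominates it. The case $L=0$ is trivial, since $h$ is then constant.
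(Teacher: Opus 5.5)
Your proof is correct, but it takes a different route from the paper. The paper cites the full Kantorovich--Rubinstein theorem from Villani, $W_1(\nu_1,\nu_2)=\sup\{\int f\,d\nu_1-\int f\,d\nu_2\}$ over $1$-Lipschitz $f$, and applies it to $f=h/L$. It uses $-h/L$ only implicitly to get the absolute value. You instead work from the coupling definition. For each $\gamma\in\Pi(\nu_1,\nu_2)$ you write the difference as $\int(h(x)-h(y))\,d\gamma$, bound the integrand by $L\|x-y\|$, and take the infimum over $\gamma$. This is exactly the weak-duality inequality $\sup\le\inf$, which is all the lemma needs; the nontrivial half of the theorem, equality of the two, is never used.

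Your route is self-contained. It needs no external theorem and no division by $L$, and it checks integrability explicitly via $|h(x)|\le|h(0)|+L\|x\|$. It also uses the Lipschitz property only $\gamma$-almost everywhere. Any coupling of two measures concentrated on a set $A$ is concentrated on $A\times A$, so your argument works verbatim for $h$ that is Lipschitz only on $\mathcal X_O$. That would make the McShane extension step in the proof of Lemma~\ref{lem:wbound} unnecessary. The paper's citation buys brevity, and it is the natural move if $W_1$ is defined by the dual supremum.

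Two presentation fixes. Your first sentence announces a truncation argument through the dual characterization that you then abandon, so drop it. The display you attribute to ``the Kantorovich duality theorem'' is the primal (coupling) definition of $W_1$, not a duality statement, so relabel it.
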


\begin{proof}
Duality states $W_1(\nu_1,\nu_2)=\sup\{\int f d\nu_1-\int f d\nu_2\}$ over
$1$-Lipschitz $f$, a classical result for which we refer to
\citet{villani2009optimal}. Applying this to $f=h/L$, which is $1$-Lipschitz, gives
$|\int h d\nu_1-\int h d\nu_2|\le L\,W_1(\nu_1,\nu_2)$.
\end{proof}

\paragraph{Where we use it.} In Lemma~\ref{lem:wbound} we take $h$ to be $g_\theta$ as a function of the missing coordinates with the observed part held fixed, extended off $\mathcal X_O$ by McShane, $L=L_g$, $\nu_1=q_\phi(\cdot\mid O)$, and
$\nu_2=r^\star_\Lambda(\cdot\mid X_\Lambda)$.

\subsection{Sufficient Conditions for the Regularity Assumption}
\label{app:reg}

Assumption~\ref{ass:reg} asks only that the objective be finite at the given
$\theta$, and it follows from a growth condition that standard parameterizations
satisfy. Suppose $v_\theta$ is continuous in $(x,t)$, which makes it Borel
measurable, and has at most linear growth in its spatial argument,
\[
    \|v_\theta(x,t)\|\le C_\theta\big(1+\|x\|\big)
    \qquad\text{for all } x\in\mathbb R^d,\ t\in[0,1].
\]
Because $X_t=(1-t)X_0+tX_1$ is a convex combination, finite second moments of
$p_0$ and $p^\star$ give
$\mathbb E\|X_t\|^2\le 2(\mathbb E\|X_0\|^2+\mathbb E\|X_1\|^2)<\infty$, hence
$\mathbb E\|v_\theta(X_t,T)\|^2\le 2C_\theta^2(1+\mathbb E\|X_t\|^2)<\infty$ and
$\mathcal L_{\mathrm{FM}}(\theta)<\infty$.

The variance results of Section~\ref{sec:analysis} additionally require
$\mathbb E[\ell_\theta(X_0,X_1,T)^2]<\infty$. This follows from the same linear
growth condition together with finite fourth moments of $p_0$ and $p^\star$,
since linear growth gives
$\ell_\theta\le 2\|v_\theta(X_t,T)\|^2+2\|X_1-X_0\|^2\le C_\theta'(1+\|X_0\|^2+\|X_1\|^2)$
for a constant $C_\theta'$, and squaring the right side leaves only fourth
moments. Under Assumption~\ref{ass:bdd} with a compactly supported $p_0$ it holds
automatically.

Linear growth holds for any feedforward network with Lipschitz activations, such
as ReLU, GELU, SiLU, or tanh, since such a network is globally Lipschitz in $x$
with constant bounded by the product of its layer operator norms, giving
$\|v_\theta(x,t)\|\le\|v_\theta(0,t)\|+L_\theta\|x\|$. The straight line
interpolant is what keeps this clean, since $X_t$ has finite second moment at
every $t$ and the target $X_1-X_0$ carries no time dependent normalization that
could diverge as $t\to1$. The same holds for rectified flow
\citep{liu2023flow}, which regresses the same velocity target along the same
interpolant, so it inherits the condition unchanged.

Two points keep the assumption honest. It is imposed at a fixed $\theta$ rather
than uniformly over the parameter space, which is all our fixed $\theta$ results
require. And since $\ell_\theta\ge0$, the marginalization identity in the proof of
Theorem~\ref{thm:oracle} follows from Tonelli's theorem under measurability
alone, so square integrability serves only to guarantee that both sides are
finite rather than trivially infinite.

\section{Full Proofs of the Results in Sections 4 and 5}
\label{app:proofs}

\subsection{Proof of Theorem~\ref{thm:oracle}, Oracle Equivalence}
\label{app:proof:oracle}

\begin{proof}
Fix $\lambda\subseteq[d]$ with $\mathbb P(\Lambda=\lambda)>0$ and fix $\theta$.

\emph{Step 1 (reduction to $g_\theta$).} The base point and time are drawn
independently of the observation and the completion, so for any fixed
$x_\lambda$, Fubini's theorem and the definition of $g_\theta$ give
\begin{equation}
\label{eq:oracle_step1}
    \mathbb E_{X_{-\lambda}}\mathbb E_{X_0,T}
    \big[\ell_\theta\big(X_0,(x_\lambda,X_{-\lambda}),T\big)\big]
    =\mathbb E_{X_{-\lambda}}\big[g_\theta(x_\lambda,X_{-\lambda})\big],
\end{equation}
the interchange being justified by $\ell_\theta\ge0$ together with
$\mathcal L_{\mathrm{FM}}(\theta)<\infty$ from Assumption~\ref{ass:reg}.

\emph{Step 2 (inner marginalization).} By Assumption~\ref{ass:reg},
$p^\star(\cdot\mid x_\lambda)$ is a regular conditional distribution of
$X_{-\lambda}$ given $X_\lambda=x_\lambda$ under $X\sim p^\star$. Hence for
$p^\star_\lambda$-almost every $x_\lambda$,
\begin{equation}
\label{eq:oracle_step2}
    \mathbb E_{X_{-\lambda}\sim p^\star(\cdot\mid x_\lambda)}
    \big[g_\theta(x_\lambda,X_{-\lambda})\big]
    =\mathbb E\big[g_\theta(X)\,\big|\,X_\lambda=x_\lambda\big].
\end{equation}

\emph{Step 3 (MCAR and the tower property).} Under MCAR, conditionally on
$\{\Lambda=\lambda\}$ the observed part satisfies $X_\lambda\sim p^\star_\lambda$,
which is the marginal of $p^\star$ on $\lambda$. Therefore the outer expectation
in $\mathcal L_{\mathrm{MDFM}}$ restricted to this pattern is taken with respect
to $p^\star_\lambda$, and applying \eqref{eq:disint} of
Lemma~\ref{lem:condexp} with $V=X_\lambda$, $W=X_{-\lambda}$ and
$h=g_\theta$ to \eqref{eq:oracle_step2},
\begin{equation}
\label{eq:oracle_step3}
    \mathbb E_{X_\lambda\sim p^\star_\lambda}
    \Big[\mathbb E\big[g_\theta(X)\mid X_\lambda\big]\Big]
    =\mathbb E_{X\sim p^\star}\big[g_\theta(X)\big]
    =\mathcal L_{\mathrm{FM}}(\theta),
\end{equation}
the last equality being the identity
$\mathcal L_{\mathrm{FM}}(\theta)=\mathbb E_{X_1\sim p^\star}[g_\theta(X_1)]$
from the preliminaries. Combining \eqref{eq:oracle_step1}--\eqref{eq:oracle_step3},
for every admissible $\lambda$,
\[
    \mathbb E_{X_\lambda}\mathbb E_{X_{-\lambda}\sim p^\star(\cdot\mid X_\lambda)}
    \mathbb E_{X_0,T}\big[\ell_\theta(X_0,X_1,T)\big]
    =\mathcal L_{\mathrm{FM}}(\theta).
\]

\emph{Step 4 (averaging over patterns).} The right-hand side is a constant
$c:=\mathcal L_{\mathrm{FM}}(\theta)$ not depending on $\lambda$. By definition
\eqref{eq:mdfm},
\[
    \mathcal L_{\mathrm{MDFM}}(\theta)
    =\sum_{\lambda:\,\mathbb P(\Lambda=\lambda)>0}\mathbb P(\Lambda=\lambda)\,c
    =c\sum_{\lambda}\mathbb P(\Lambda=\lambda)
    =c=\mathcal L_{\mathrm{FM}}(\theta). \qedhere
\]
\end{proof}

\subsection{Proof of Theorem~\ref{thm:unbiased}, Part 1, Unbiasedness of the Oracle Estimator}
\label{app:proof:unbiased}

\begin{proof}
Fix $\theta$ and $i\in\{1,\dots,n\}$. Conditional on $O_i$, the triples
$\big(X^{\star,(i,k)}_{-\Lambda_i},X_0^{(i,k)},T^{(i,k)}\big)$, $k=1,\dots,K$, are
independent and identically distributed, since each is drawn from
$p^\star(\cdot\mid X^{(i)}_{\Lambda_i})\otimes p_0\otimes\mathrm{Unif}[0,1]$
independently of the others. Consequently the losses
$Y_\theta^{(i,1)},\dots,Y_\theta^{(i,K)}$ are conditionally i.i.d.\ given $O_i$
with common conditional mean
\[
    \mathbb E\big[Y_\theta^{(i,k)}\,\big|\,O_i\big]=\mu_\theta(O_i),
    \qquad k=1,\dots,K,
\]
by the definition of $\mu_\theta$. Linearity of conditional expectation gives
\[
    \mathbb E\big[\overline Y_i\,\big|\,O_i\big]
    =\frac1K\sum_{k=1}^K\mathbb E\big[Y_\theta^{(i,k)}\mid O_i\big]
    =\mu_\theta(O_i).
\]
Assumption~\ref{ass:reg} gives $\mathcal L_{\mathrm{FM}}(\theta)<\infty$, so all
expectations below are finite and the tower property applies. Taking expectations,
\[
    \mathbb E\big[\overline Y_i\big]
    =\mathbb E\big[\mu_\theta(O_i)\big]
    =\mathbb E_{O}\,\mathbb E\big[Y_\theta\mid O\big]
    =\mathcal L_{\mathrm{MDFM}}(\theta),
\]
the last equality holding because $\mathcal L_{\mathrm{MDFM}}$ is by definition
\eqref{eq:mdfm} the expectation of $Y_\theta$ over the observation, the
completion, the base point, and the time. Theorem~\ref{thm:oracle} then gives
$\mathcal L_{\mathrm{MDFM}}(\theta)=\mathcal L_{\mathrm{FM}}(\theta)$. Finally,
\[
    \mathbb E\big[\widehat{\mathcal L}^\star_{n,K}(\theta)\big]
    =\frac1n\sum_{i=1}^n\mathbb E\big[\overline Y_i\big]
    =\frac1n\cdot n\cdot\mathcal L_{\mathrm{FM}}(\theta)
    =\mathcal L_{\mathrm{FM}}(\theta).
\]
No step used $K$, so the conclusion holds for every $K\ge1$.
\end{proof}

\subsection{Proof of Theorem~\ref{thm:unbiased}, Part 2, the Variance Transfer Identity}
\label{app:proof:transfer}

\begin{proof}
Let $X_1=(X_\Lambda,X_{-\Lambda})$ denote a completed endpoint formed from an
oracle draw. Under MCAR, Theorem~\ref{thm:oracle} shows $X_1\sim p^\star$
marginally. First,
\[
    \mathbb E\big[g_\theta(X_1)\mid O\big]
    =\mathbb E\Big[\mathbb E\big[Y_\theta\mid O,X_{-\Lambda}\big]\,\Big|\,O\Big]
    =\mathbb E\big[Y_\theta\mid O\big]
    =\mu_\theta(O),
\]
using $\mathbb E[Y_\theta\mid O,X_{-\Lambda}]=g_\theta(X_1)$, which holds because
$(O,X_{-\Lambda})$ determines $X_1$ and the only remaining randomness is
$(X_0,T)$. Now apply Lemma~\ref{lem:ltv} with $U=g_\theta(X_1)$ and $V=O$:
\[
    \operatorname{Var}\big(g_\theta(X_1)\big)
    =\mathbb E\big[\operatorname{Var}\big(g_\theta(X_1)\mid O\big)\big]
    +\operatorname{Var}\big(\mathbb E[g_\theta(X_1)\mid O]\big).
\]
The first term on the right is $\sigma^2_{\theta,\mathrm{miss}}$ by definition,
since given $O$ the only randomness in $g_\theta(X_1)$ is the completion
$X_{-\Lambda}\sim p^\star(\cdot\mid X_\Lambda)$. The second is
$\operatorname{Var}(\mu_\theta(O))=\tau_\theta^2$ by the display above. Rearranging
gives the claim.
\end{proof}

\subsection{Proof of Theorem~\ref{thm:unbiased}, Part 3, the Variance Decomposition}
\label{app:proof:vardecomp}

\begin{proof}
Write $\overline Y_i=\frac1K\sum_{k=1}^K Y_\theta^{(i,k)}$, so that
$\widehat{\mathcal L}^\star_{n,K}(\theta)=\frac1n\sum_{i=1}^n\overline Y_i$.

\emph{Step 1 (across observations).} The pairs
$(O_i,\{Y^{(i,k)}_\theta\}_{k\le K})$ are independent across $i$ and identically
distributed, so
\begin{equation}
\label{eq:step1}
    \operatorname{Var}\Big(\frac1n\sum_{i=1}^n\overline Y_i\Big)
    =\frac1{n^2}\sum_{i=1}^n\operatorname{Var}(\overline Y_i)
    =\frac1n\operatorname{Var}(\overline Y_1).
\end{equation}

\emph{Step 2 (conditioning on the observation).} Apply Lemma~\ref{lem:ltv} with
$U=\overline Y_1$ and $V=O_1$:
\[
    \operatorname{Var}(\overline Y_1)
    =\operatorname{Var}\big(\mathbb E[\overline Y_1\mid O_1]\big)
    +\mathbb E\big[\operatorname{Var}(\overline Y_1\mid O_1)\big].
\]
By the proof of Theorem~\ref{thm:unbiased},
$\mathbb E[\overline Y_1\mid O_1]=\mu_\theta(O_1)$, so the first term equals
$\tau_\theta^2$. For the second, the $K$ inner losses are conditionally i.i.d.\
given $O_1$, hence
\[
    \operatorname{Var}(\overline Y_1\mid O_1)
    =\frac1{K^2}\sum_{k=1}^K\operatorname{Var}\big(Y^{(1,k)}_\theta\mid O_1\big)
    =\frac1K\operatorname{Var}\big(Y^{(1,1)}_\theta\mid O_1\big).
\]
Therefore
\begin{equation}
\label{eq:step2}
    \operatorname{Var}(\overline Y_1)
    =\tau_\theta^2+\frac1K\,\mathbb E\big[\operatorname{Var}\big(Y^{(1,1)}_\theta\mid O_1\big)\big].
\end{equation}

\emph{Step 3 (conditioning on the completion).} Apply the conditional form
\eqref{eq:ltvcond} of Lemma~\ref{lem:ltv} with $U=Y^{(1,1)}_\theta$,
$V=X^{\star,(1,1)}_{-\Lambda_1}$, and $W=O_1$:
\[
\begin{aligned}
    \operatorname{Var}\big(Y^{(1,1)}_\theta\mid O_1\big)
    ={}&\mathbb E\Big[\operatorname{Var}\big(Y^{(1,1)}_\theta\mid O_1,X^{\star,(1,1)}_{-\Lambda_1}\big)\Big|O_1\Big]\\
    &+\operatorname{Var}\Big(\mathbb E\big[Y^{(1,1)}_\theta\mid O_1,X^{\star,(1,1)}_{-\Lambda_1}\big]\Big|O_1\Big).
\end{aligned}
\]
The pair $(O_1,X^{\star,(1,1)}_{-\Lambda_1})$ determines the completed endpoint
$X_1^{\star,(1,1)}$, and conditional on it the only remaining randomness is
$(X_0^{(1,1)},T^{(1,1)})$. Hence
\[
    \operatorname{Var}\big(Y^{(1,1)}_\theta\mid O_1,X^{\star,(1,1)}_{-\Lambda_1}\big)
    =\operatorname{Var}_{X_0,T}\big(\ell_\theta(X_0,X_1^{\star,(1,1)},T)\mid X_1^{\star,(1,1)}\big),
\]
\[
    \mathbb E\big[Y^{(1,1)}_\theta\mid O_1,X^{\star,(1,1)}_{-\Lambda_1}\big]
    =g_\theta\big(X_1^{\star,(1,1)}\big).
\]
Taking expectations over $O_1$ and using that $X^{\star,(1,1)}_1\sim p^\star$
marginally (Theorem~\ref{thm:oracle}) for the first term, and the definition of
$\sigma^2_{\theta,\mathrm{miss}}$ for the second,
\begin{equation}
\label{eq:step3}
    \mathbb E\big[\operatorname{Var}\big(Y^{(1,1)}_\theta\mid O_1\big)\big]
    =\sigma^2_{\theta,\mathrm{base}}+\sigma^2_{\theta,\mathrm{miss}}.
\end{equation}
Substituting \eqref{eq:step3} into \eqref{eq:step2} and then into
\eqref{eq:step1} gives the stated identity.
\end{proof}

\subsection{Proof of Theorem~\ref{thm:missvar}, the Completion Variance Bound}
\label{app:proof:missvar}

\begin{proof}
Fix an observation $O=(\Lambda,X_\Lambda)$ and set $m=|-\Lambda|$. Conditional on
$O$, the observed coordinates are fixed and only the missing coordinates vary.
Let $u,v\in[-R,R]^{m}$ be two possible values of the missing part, and write
$x_1(u)=(x_\Lambda,u)$ for the corresponding completed endpoint. Then
\[
    \|x_1(u)-x_1(v)\|^2
    =\sum_{j\in-\Lambda}(u_j-v_j)^2
    \le m\,(2R)^2,
\]
since each coordinate difference satisfies $|u_j-v_j|\le 2R$ by
Assumption~\ref{ass:bdd}. Hence $\|x_1(u)-x_1(v)\|\le 2R\sqrt m$. By
Assumption~\ref{ass:lip},
\[
    \big|g_\theta(x_1(u))-g_\theta(x_1(v))\big|
    \le L_g\|x_1(u)-x_1(v)\|
    \le 2RL_g\sqrt m .
\]
Write $Z=g_\theta(X_\Lambda,X_{-\Lambda})$ for the random variable obtained by
drawing $X_{-\Lambda}\sim p^\star(\cdot\mid X_\Lambda)$, and let $a$ and $b$ be
its essential infimum and supremum given $O$. The display above gives
$b-a\le 2RL_g\sqrt m$, so Lemma~\ref{lem:popoviciu} yields
\[
    \operatorname{Var}\big(Z\mid O\big)
    \le\frac{(b-a)^2}{4}
    \le\frac{\big(2RL_g\sqrt m\big)^2}{4}
    =R^2L_g^2\,m .
\]
Taking expectations over $O$ and using $\mathbb E[m(\Lambda)]=\rho d$,
\[
    \sigma^2_{\theta,\mathrm{miss}}
    =\mathbb E\big[\operatorname{Var}(Z\mid O)\big]
    \le R^2L_g^2\,\mathbb E[m(\Lambda)]
    =R^2L_g^2\,\rho d. \qedhere
\]
\end{proof}

\subsection{Proof of Corollary~\ref{cor:noinflate} and the Completion-Only Error Bound}
\label{app:proof:cors}

\begin{proof}[Proof of Corollary~\ref{cor:noinflate}]
The transfer identity of Theorem~\ref{thm:vardecomp} gives
$\tau_\theta^2=\operatorname{Var}(g_\theta(X_1))-\sigma^2_{\theta,\mathrm{miss}}$.
Substituting into Theorem~\ref{thm:vardecomp},
\[
\begin{aligned}
    \operatorname{Var}\big(\widehat{\mathcal L}^\star_{n,K}\big)
    &=\frac1n\Big[\operatorname{Var}(g_\theta(X_1))-\sigma^2_{\theta,\mathrm{miss}}
      +\frac{\sigma^2_{\theta,\mathrm{base}}+\sigma^2_{\theta,\mathrm{miss}}}{K}\Big]\\
    &=\frac1n\Big[\operatorname{Var}(g_\theta(X_1))
      -\Big(1-\frac1K\Big)\sigma^2_{\theta,\mathrm{miss}}
      +\frac{\sigma^2_{\theta,\mathrm{base}}}{K}\Big],
\end{aligned}
\]
which is \eqref{eq:rewritten}. Since $1-1/K\ge0$ and
$\sigma^2_{\theta,\mathrm{miss}}\ge0$, dropping the middle term gives
\eqref{eq:cleanbound}, with equality at $K=1$ where $1-1/K=0$. For monotonicity,
view the right side of Theorem~\ref{thm:vardecomp} as a function of $K$:
\[
    K\mapsto\frac1n\Big[\tau_\theta^2+\frac{\sigma^2_{\theta,\mathrm{base}}+\sigma^2_{\theta,\mathrm{miss}}}{K}\Big],
\]
which is nonincreasing in $K$, and strictly decreasing whenever
$\sigma^2_{\theta,\mathrm{base}}+\sigma^2_{\theta,\mathrm{miss}}>0$; in
particular whenever $\sigma^2_{\theta,\mathrm{miss}}>0$.
\end{proof}

\begin{proof}[Proof of the fixed-budget identity]
Substituting $n=N/K$ into Theorem~\ref{thm:vardecomp},
\[
    \operatorname{Var}\big(\widehat{\mathcal L}^\star_{n,K}\big)
    =\frac{K}{N}\Big[\tau_\theta^2+\frac{\sigma^2_{\theta,\mathrm{base}}+\sigma^2_{\theta,\mathrm{miss}}}{K}\Big]
    =\frac{K\tau_\theta^2+\sigma^2_{\theta,\mathrm{base}}+\sigma^2_{\theta,\mathrm{miss}}}{N}.
\]
Only the first term in the numerator depends on $K$, and it is nondecreasing in
$K$, strictly increasing when $\tau_\theta^2>0$. Hence the variance is minimized
at the smallest admissible $K$, namely $K=1$.
\end{proof}

\begin{proof}[Proof of the completion-only error bound]
By the first display in Appendix~\ref{app:proof:transfer},
$\mathbb E[g_\theta(X_1)\mid O]=\mu_\theta(O)$. Define
\[
    D_i=\frac1K\sum_{k=1}^K\Big(g_\theta\big(X_1^{\star,(i,k)}\big)-\mu_\theta(O_i)\Big),
\]
so that $\widehat G_{n,K}-\bar\mu=\frac1n\sum_{i=1}^n D_i$ and
$\mathbb E[D_i\mid O_i]=0$. The $D_i$ are independent across $i$, so for $i\ne j$,
$\mathbb E[D_iD_j]=\mathbb E[\mathbb E[D_i\mid O_i]\,\mathbb E[D_j\mid O_j]]=0$.
Hence
\[
    \mathbb E\big[(\widehat G_{n,K}-\bar\mu)^2\big]
    =\frac1{n^2}\sum_{i=1}^n\mathbb E\big[D_i^2\big]
    =\frac1n\,\mathbb E\big[D_1^2\big].
\]
Conditionally on $O_1$ the $K$ summands are i.i.d.\ with mean zero and variance
$\operatorname{Var}(g_\theta(X_1)\mid O_1)$, so
$\mathbb E[D_1^2\mid O_1]=\frac1K\operatorname{Var}(g_\theta(X_1)\mid O_1)$ and
$\mathbb E[D_1^2]=\sigma^2_{\theta,\mathrm{miss}}/K$. Combining gives the stated
identity, and Theorem~\ref{thm:missvar} supplies the inequality. For the second
display, \eqref{eq:step3} and conditional independence give
\[
    \mathbb E\big[\operatorname{Var}\big(\widehat{\mathcal L}^\star_{n,K}\mid O_{1:n}\big)\big]
    =\frac1{n^2}\sum_{i=1}^n\frac1K\,\mathbb E\big[\operatorname{Var}(Y^{(i,1)}_\theta\mid O_i)\big]
    =\frac{\sigma^2_{\theta,\mathrm{base}}+\sigma^2_{\theta,\mathrm{miss}}}{nK}. \qedhere
\]
\end{proof}

\subsection{Concentration Bounds for the Oracle and Learned Estimators}
\label{app:proof:bernstein}
\label{app:proof:phi}

Bernstein's inequality is the natural tool here because the variance is known
explicitly and Bernstein uses it directly \citep{boucheron2013concentration}. It
requires a bounded loss.

\begin{assumption}[Bounded loss]
\label{ass:bddloss}
There exists $B>0$ with $0\le\ell_\theta(X_0,X_1,T)\le B$ almost surely.
\end{assumption}

This assumption restricts the base distribution, and it is worth being precise
about how much. With a Gaussian base the target $X_1-X_0$ is unbounded and the
assumption fails. Flow matching does not require a Gaussian base, however. The
framework is defined for a general source distribution, and stochastic
interpolants and minibatch optimal transport formulations are constructed between
two arbitrary densities \citep{albergo2023building,tong2024improving}. Taking
$p_0$ compactly supported, for instance uniform on a box, together with
Assumption~\ref{ass:bdd} confines the whole path $X_t$ and the target $X_1-X_0$
to a compact set, so a continuous $v_\theta$ is bounded there and
Assumption~\ref{ass:bddloss} holds. Our experiments include a bounded base
configuration for this reason. For a Gaussian base the assumption can be replaced
by a sub-exponential or sub-gamma tail condition at the cost of a heavier bound,
and we keep the bounded form for a clean statement.

\paragraph{The oracle estimator.} Writing
\[
    V_{\rho,K}=\min\Big\{\tau_\theta^2+\tfrac{\sigma_{\theta,\mathrm{base}}^2+R^2L_g^2\rho d}{K},\;
\operatorname{Var}\big(g_\theta(X_1)\big)+\tfrac{\sigma_{\theta,\mathrm{base}}^2}{K}\Big\}
\]
we obtain the guarantee.

\begin{proof}[Proof of Theorem~\ref{thm:bernstein}]
Fix $\theta$. Assume i.i.d.\ incomplete observations $O_1,\dots,O_n$, oracle
completions under MCAR, Assumption~\ref{ass:reg}, and
Assumptions~\ref{ass:bdd},~\ref{ass:lip}, and~\ref{ass:bddloss}. Then with
probability at least $1-\delta$,
\[
    \big|\widehat{\mathcal L}^\star_{n,K}(\theta)-\mathcal L_{\mathrm{FM}}(\theta)\big|
    \le\sqrt{\frac{2V_{\rho,K}\log(2/\delta)}{n}}+\frac{2B\log(2/\delta)}{3n}.
\]
\end{proof}

\begin{proof}
We verify the hypotheses of Lemma~\ref{lem:bernstein} for
$Z_i=\overline Y_i$, $i=1,\dots,n$.

\emph{Independence and identical distribution.} The observations $O_i$ are
i.i.d.\ by assumption, and the completions, base points, and times are drawn
independently across $i$, so the $\overline Y_i$ are i.i.d.

\emph{Mean.} By Theorem~\ref{thm:unbiased},
$\mathbb E[\overline Y_i]=\mathcal L_{\mathrm{FM}}(\theta)=:\mu$.

\emph{Boundedness.} Assumption~\ref{ass:bddloss} gives
$0\le Y^{(i,k)}_\theta\le B$ almost surely, hence $0\le\overline Y_i\le B$ as an
average of such terms, and therefore
$|\overline Y_i-\mu|\le\max(\mu,B-\mu)\le B$. We may take $M=B$.

\emph{Variance.} By Theorem~\ref{thm:vardecomp} applied with $n=1$,
$\operatorname{Var}(\overline Y_i)=\tau_\theta^2
+(\sigma^2_{\theta,\mathrm{base}}+\sigma^2_{\theta,\mathrm{miss}})/K$.
Theorem~\ref{thm:missvar} bounds $\sigma^2_{\theta,\mathrm{miss}}\le R^2L_g^2\rho d$,
which gives the first branch of $V_{\rho,K}$. The transfer identity gives
$\tau_\theta^2+\sigma^2_{\theta,\mathrm{miss}}/K
\le\tau_\theta^2+\sigma^2_{\theta,\mathrm{miss}}
=\operatorname{Var}(g_\theta(X_1))$, which gives the second and carries no
dependence on the missing fraction. Hence
$\operatorname{Var}(\overline Y_i)\le V_{\rho,K}$ and we may take $v=V_{\rho,K}$.

Applying the inverted form \eqref{eq:bernstein_inv} of Lemma~\ref{lem:bernstein}
with these $\mu$, $v$, and $M$ gives the claim, since
$\widehat{\mathcal L}^\star_{n,K}(\theta)=\frac1n\sum_i\overline Y_i$.
\end{proof}

\paragraph{The learned estimator.} Under $q_\phi$ the variance constants change.
Let $\mu_{\theta,\phi}(O)=\mathbb E_{q_\phi,X_0,T}[Y_\theta^{\phi}\mid O]$ and
\begin{align*}
    \tau_{\theta,\phi}^2&=\operatorname{Var}_{O}\!\big(\mu_{\theta,\phi}(O)\big),\\
    \sigma_{\theta,\phi,\mathrm{comp}}^2&=\mathbb E_{O}
        \big[\operatorname{Var}_{\widetilde X_{-\Lambda}\sim q_\phi(\cdot\mid O)}
        \!\big(g_\theta(X_\Lambda,\widetilde X_{-\Lambda})\,\big|\,O\big)\big],\\
    \sigma_{\theta,\phi,\mathrm{base}}^2&=\mathbb E_{O,\ \widetilde X_{-\Lambda}\sim q_\phi}
        \big[\operatorname{Var}_{X_0\sim p_0,\,T}
        \!\big(\ell_\theta(X_0,\widetilde X_1,T)\,\big|\,\widetilde X_1\big)\big],
\end{align*}
and set
$V_{\theta,\phi,K}=\tau_{\theta,\phi}^2
+(\sigma_{\theta,\phi,\mathrm{base}}^2+\sigma_{\theta,\phi,\mathrm{comp}}^2)/K$.

\begin{theorem}[Learned Bernstein bound]
\label{thm:bernstein_phi}
Condition on the completion model $q_\phi$ fitted on the independent auxiliary
sample, and fix $\theta$. Assume the evaluation observations are i.i.d.\ and
$0\le\ell_\theta\le B_\phi$ almost surely under the $q_\phi$-completed endpoint
distribution. Then, with probability at least $1-\delta$ over the evaluation
observations and Monte Carlo draws,
\[
    \big|\widehat{\mathcal L}^{\phi}_{n,K}(\theta)-\mathcal L_\phi(\theta)\big|
    \le\sqrt{\frac{2V_{\theta,\phi,K}\log(2/\delta)}{n}}
    +\frac{2B_\phi\log(2/\delta)}{3n}.
\]
\end{theorem}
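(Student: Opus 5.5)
The plan is to repeat the proof of Theorem~\ref{thm:bernstein}, now conditional on $q_\phi$, and apply the inverted Bernstein inequality \eqref{eq:bernstein_inv} of Lemma~\ref{lem:bernstein} to the per-observation averages $Z_i=\overline Y_i^\phi=\frac1K\sum_{k=1}^K Y_\theta^{\phi,(i,k)}$. Once $q_\phi$ is conditioned on, it is a fixed Markov kernel, because it was fitted on an independent auxiliary sample. The evaluation observations $O_i$ are i.i.d. Each $O_i$ receives its own independent completions $\widetilde X^{(i,k)}_{-\Lambda_i}\sim q_\phi(\cdot\mid O_i)$, base points, and times. So the tuples $(O_i,\{Y^{\phi,(i,k)}_\theta\}_k)$ are i.i.d. across $i$, and so are the $Z_i$. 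Since $\widehat{\mathcal L}^\phi_{n,K}=\frac1n\sum_i Z_i$, only three hypotheses of the lemma need checking: the mean, the range, and the variance.

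For the mean, I would argue as in the proof of unbiasedness but stop before invoking Theorem~\ref{thm:oracle}. Conditional on $O_i$, the $K$ losses are i.i.d. with mean $\mu_{\theta,\phi}(O_i)$. Linearity and the tower property (Lemma~\ref{lem:condexp}) then give $\mathbb E[Z_i]=\mathbb E_O\mu_{\theta,\phi}(O)=\mathcal L_\phi(\theta)$ by the definition of $\mathcal L_\phi$. Finiteness follows from $0\le\ell_\theta\le B_\phi$. For the range, each $Y^{\phi,(i,k)}_\theta$ lies in $[0,B_\phi]$, so their average $Z_i$ does too. Because the mean $\mu$ also lies in $[0,B_\phi]$, we get $|Z_i-\mu|\le B_\phi$, and we may take $M=B_\phi$.

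For the variance, I would rerun the three steps of the proof of Theorem~\ref{thm:vardecomp} with $q_\phi$ in place of the oracle. That proof used only conditional i.i.d.-ness and the law of total variance (Lemma~\ref{lem:ltv}). Conditioning on $O_1$ gives $\operatorname{Var}(Z_1)=\tau^2_{\theta,\phi}+\frac1K\mathbb E[\operatorname{Var}(Y^{\phi,(1,1)}_\theta\mid O_1)]$. Applying \eqref{eq:ltvcond} with $V=\widetilde X^{(1,1)}_{-\Lambda_1}$ and $W=O_1$ splits the inner term into two pieces. The first is the expected $(X_0,T)$-variance at the completed endpoint $\widetilde X_1$. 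The second is the $q_\phi$-variance of $g_\theta(X_\Lambda,\widetilde X_{-\Lambda})$ given $O$, using $\mathbb E[Y^\phi_\theta\mid O,\widetilde X_{-\Lambda}]=g_\theta(\widetilde X_1)$.

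The one place this argument must differ from the oracle proof is the base-variance term. In Theorem~\ref{thm:vardecomp}, that term was identified with $\sigma^2_{\theta,\mathrm{base}}$ by using $X_1^\star\sim p^\star$ marginally. Under $q_\phi$ this marginal fact fails. However, $\sigma^2_{\theta,\phi,\mathrm{base}}$ is defined directly as an expectation over $O$ and $\widetilde X_{-\Lambda}\sim q_\phi$, so the term matches that definition with no need for Theorem~\ref{thm:oracle}. This step is the main point to watch. Once it is handled, $\operatorname{Var}(Z_1)=V_{\theta,\phi,K}$ exactly, and we take $v=V_{\theta,\phi,K}$.

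Substituting $\mu=\mathcal L_\phi(\theta)$, $v=V_{\theta,\phi,K}$, and $M=B_\phi$ into \eqref{eq:bernstein_inv} gives the stated bound. The resulting probability is conditional on $q_\phi$, taken over the evaluation observations and the Monte Carlo draws, which is exactly the claimed statement.
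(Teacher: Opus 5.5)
Your proposal is correct and follows the paper's proof essentially step for step. Conditional on $q_\phi$, you apply the inverted Bernstein inequality to the i.i.d.\ per-observation averages, with mean $\mathcal L_\phi(\theta)$ obtained by stopping the unbiasedness argument before Theorem~\ref{thm:oracle}, range $B_\phi$, and variance $V_{\theta,\phi,K}$ obtained by rerunning the decomposition. You are also right that the base-variance term matches $\sigma^2_{\theta,\phi,\mathrm{base}}$ directly by its definition, rather than through the marginal identity $X_1^\star\sim p^\star$ used in the oracle case; the paper's proof glosses over this when it says the decomposition never refers to the completion law.
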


\begin{proof}
All statements are conditional on the completion model $q_\phi$ fitted on the
independent auxiliary sample, which is therefore a fixed, deterministic object
throughout, and the evaluation observations $O_1,\dots,O_n$ are independent of
it.

\emph{Variance.} Inspecting the proof of Theorem~\ref{thm:unbiased}, the three
steps use only independence across observations, conditional independence of the
$K$ inner draws given $O_i$, and the tower property. None refers to the
particular law from which the completions are drawn. Replacing
$p^\star(\cdot\mid X_\Lambda)$ by $q_\phi(\cdot\mid X_\Lambda,\Lambda)$
throughout, and $\mu_\theta$, $\sigma^2_{\theta,\mathrm{base}}$,
$\sigma^2_{\theta,\mathrm{miss}}$ by $\mu_{\theta,\phi}$,
$\sigma^2_{\theta,\phi,\mathrm{base}}$, $\sigma^2_{\theta,\phi,\mathrm{comp}}$,
gives
$\operatorname{Var}(\widehat{\mathcal L}^\phi_{n,K}(\theta))=V_{\theta,\phi,K}/n$.

\emph{Mean.} The argument of Theorem~\ref{thm:unbiased} applies verbatim with
$q_\phi$ in place of the oracle up to the point where Theorem~\ref{thm:oracle}
was invoked. Stopping one step earlier gives
$\mathbb E[\overline Y^\phi_i]=\mathbb E[\mu_{\theta,\phi}(O_i)]=\mathcal
L_\phi(\theta)$. Theorem~\ref{thm:oracle} is not available here, which is
precisely why the centering is $\mathcal L_\phi$ rather than
$\mathcal L_{\mathrm{FM}}$.

\emph{Boundedness.} By hypothesis $0\le\ell_\theta\le B_\phi$ almost surely under
the $q_\phi$-completed endpoint distribution, so the per-observation averages lie
in $[0,B_\phi]$ and deviate from their mean by at most $B_\phi$.

Applying \eqref{eq:bernstein_inv} with $\mu=\mathcal L_\phi(\theta)$,
$v=V_{\theta,\phi,K}$, and $M=B_\phi$ gives the claim.
\end{proof}

Combining Theorem~\ref{thm:bernstein_phi} with Lemma~\ref{lem:wbound} gives the
total bound quoted in the main text.

\begin{corollary}[Total learned bound]
\label{cor:total}
Under the assumptions of Theorem~\ref{thm:bernstein_phi} and
Lemma~\ref{lem:wbound}, with probability at least $1-\delta$,
\[
\begin{aligned}
    \big|\widehat{\mathcal L}^{\phi}_{n,K}(\theta)-\mathcal L_{\mathrm{FM}}(\theta)\big|
    \le{}&\sqrt{\frac{2V_{\theta,\phi,K}\log(2/\delta)}{n}}
    +\frac{2B_\phi\log(2/\delta)}{3n}\\
    &+L_g\,\mathbb E_O\,W_1\big(q_\phi,r^\star_\Lambda\big).
\end{aligned}
\]
\end{corollary}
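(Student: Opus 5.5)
The plan is a triangle inequality that splits the error into a stochastic estimation term and a deterministic completion bias, followed by a single application of Theorem~\ref{thm:bernstein_phi}. Write
\[
    \big|\widehat{\mathcal L}^{\phi}_{n,K}(\theta)-\mathcal L_{\mathrm{FM}}(\theta)\big|
    \le\big|\widehat{\mathcal L}^{\phi}_{n,K}(\theta)-\mathcal L_\phi(\theta)\big|
    +\big|\mathcal L_\phi(\theta)-\mathcal L_{\mathrm{FM}}(\theta)\big|.
\]
The second term does not depend on the evaluation sample or on the Monte Carlo draws. Conditional on $q_\phi$ it is a fixed number at the fixed $\theta$.

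For the bias term, I would invoke Lemma~\ref{lem:wbound} directly. Its hypotheses are among those assumed in the corollary: Assumption~\ref{ass:lip}, completed endpoints lying in $\mathcal X$, finite first moments, and finiteness of $\mathbb E_O W_1$. It gives the deterministic bound
\[
    \big|\mathcal L_\phi(\theta)-\mathcal L_{\mathrm{FM}}(\theta)\big|\le L_g\,\mathbb E_O W_1\big(q_\phi,r^\star_\Lambda\big),
\]
which holds surely, not only with high probability.

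For the estimation term, I would apply Theorem~\ref{thm:bernstein_phi}, conditional on the auxiliary-fitted $q_\phi$. It yields an event of probability at least $1-\delta$ on which the first term is at most
\[
    \sqrt{2V_{\theta,\phi,K}\log(2/\delta)/n}+2B_\phi\log(2/\delta)/(3n).
\]
On that event, adding the deterministic bias bound gives the claimed inequality. No union bound is needed, since only one random event is involved, so $\delta$ is not split.

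The only point requiring care is the probability statement. Theorem~\ref{thm:bernstein_phi} holds conditionally on $q_\phi$, and the bias bound is also expressed in terms of that same fixed $q_\phi$. The conclusion therefore holds conditionally with probability at least $1-\delta$. Integrating over the auxiliary sample preserves the bound $1-\delta$ unconditionally, because the conditional guarantee holds for every realization of $q_\phi$. I would also check that $\mathcal L_\phi(\theta)$ is finite, which follows from $0\le\ell_\theta\le B_\phi$. That makes the triangle inequality legitimate, and with it the argument is complete.
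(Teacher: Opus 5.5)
Your proposal is correct and matches the paper's proof. The triangle inequality splits the error into an estimation term, controlled on a single probability-$(1-\delta)$ event by Theorem~\ref{thm:bernstein_phi}, and a deterministic bias term, controlled surely by Lemma~\ref{lem:wbound}; your remarks that no union bound is needed and that the bound holds conditionally on $q_\phi$ make explicit what the paper leaves implicit.
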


\begin{proof}
Add the two bounds and apply the triangle inequality to the decomposition of
$\widehat{\mathcal L}^{\phi}_{n,K}(\theta)-\mathcal L_{\mathrm{FM}}(\theta)$ into
estimation and bias.
\end{proof}

\subsection{Proof of Lemma~\ref{lem:wbound}, the Completion-Bias Bound}
\label{app:proof:wbound}

\begin{proof}
Fix an observation $O=(\Lambda,X_\Lambda)$, set
$\mathcal X_O=\{u:(X_\Lambda,u)\in\mathcal X\}$, and regard
$h(\cdot):=g_\theta(X_\Lambda,\cdot)$ as a function of the missing coordinates
alone. By Assumption~\ref{ass:lip}, for any $u,v\in\mathcal X_O$,
\[
    |h(u)-h(v)|=\big|g_\theta(X_\Lambda,u)-g_\theta(X_\Lambda,v)\big|
    \le L_g\|(X_\Lambda,u)-(X_\Lambda,v)\|=L_g\|u-v\|,
\]
so $h$ is $L_g$-Lipschitz on $\mathcal X_O$, which is convex because
$\mathcal X$ is. By the McShane extension theorem there is an $L_g$-Lipschitz
$\tilde h$ on $\mathbb R^{d-|\Lambda|}$ agreeing with $h$ on $\mathcal X_O$, and
since both conditional laws are supported in $\mathcal X_O$, replacing $h$ by
$\tilde h$ changes neither integral.
By Proposition~\ref{prop:general},
$\mathcal L_{\mathrm{FM}}(\theta)=\mathbb E_O\big[\int h\,dr^\star_\Lambda\big]$,
while by definition
$\mathcal L_\phi(\theta)=\mathbb E_O\big[\int h\,dq_\phi\big]$. Therefore
\[
    \big|\mathcal L_\phi(\theta)-\mathcal L_{\mathrm{FM}}(\theta)\big|
    \le\mathbb E_O\Big[\Big|\int h\,dq_\phi-\int h\,dr^\star_\Lambda\Big|\Big],
\]
by Jensen's inequality for the absolute value. The finite first moment
assumptions place both measures in the space where Lemma~\ref{lem:kr} applies, so
for each $O$,
\[
    \Big|\int h\,dq_\phi-\int h\,dr^\star_\Lambda\Big|
    \le L_g\,W_1\big(q_\phi(\cdot\mid O),r^\star_\Lambda(\cdot\mid X_\Lambda)\big).
\]
Taking expectations over $O$ and using the integrability hypothesis
$\mathbb E_O W_1<\infty$ gives the bound. The final claim follows from
Proposition~\ref{prop:general}, which states that under MCAR or MAR the posterior
$r^\star_\Lambda$ coincides with $p^\star(\cdot\mid X_\Lambda)$.
\end{proof}

\section{How MCAR, MAR, and MNAR Each Act on the Flow Matching Objective}
\label{app:mechanisms}

This appendix develops Proposition~\ref{prop:general} in full. We define the three
standard missingness mechanisms, derive the true completion posterior for a general
mechanism, and show precisely how each mechanism acts on the flow matching
objective. Two facts emerge that the short proof in the body does not make visible.
The equivalence extends from MCAR to MAR, but the argument used for MCAR does not,
because MCAR gives equality for every missingness pattern while MAR gives it only
after averaging over patterns. Under MNAR the naive objective is not merely
unequal to the complete data objective, it is a reweighted version of it, and we
identify the weight in closed form.

\subsection{Setup}

Let $P^\star$ denote the joint law of the pair $(X,\Lambda)$, where $X\sim p^\star$
on $\mathbb R^d$ and $\Lambda\subseteq[d]$ is the random set of observed
coordinates. We write
\[
    \pi_\lambda(x):=\mathbb P(\Lambda=\lambda\mid X=x)
\]
for the mechanism, the probability of observing pattern $\lambda$ given the full
data vector. An incomplete observation is the pair $O=(\Lambda,X_\Lambda)$, and
all expectations below are under $P^\star$. Throughout, $g_\theta$ is the endpoint
loss from the preliminaries and we assume $\mathbb E|g_\theta(X)|<\infty$.

\subsection{The Three Mechanisms}

The mechanisms differ in what the missingness is allowed to depend on. In the
framework of Rubin \citep{rubin1976inference,little2019statistical},

\begin{itemize}
    \item \textbf{Missing completely at random (MCAR).} The pattern is independent
    of the data, $\pi_\lambda(x)=\mathbb P(\Lambda=\lambda)$ for all $x$.
    Equivalently $M\perp X$.
    \item \textbf{Missing at random (MAR).} The pattern may depend on the observed
    coordinates but not on the missing ones, $\pi_\lambda(x)=\pi_\lambda(x_\lambda)$
    for all $x$, where we reuse the symbol for the function of $x_\lambda$ alone.
    \item \textbf{Missing not at random (MNAR).} Neither condition holds, so
    $\pi_\lambda(x)$ can depend on the unobserved coordinates $x_{-\lambda}$.
\end{itemize}

MCAR implies MAR, so the two are nested. Concretely, a sensor that drops readings
because of transmission failures unrelated to the measured value is MCAR. A
follow up test ordered only when recorded symptoms warrant it is MAR, since the
decision depends on values already in the record. A respondent who declines to
report income precisely because it is high is MNAR, since whether the entry is
missing depends on the entry itself.

\subsection{The True Completion Posterior}

The correct completion distribution is the posterior of the missing part given
everything observed, including the pattern. Define
\[
    r^\star_\lambda(x_{-\lambda}\mid x_\lambda)
    :=P^\star\big(X_{-\lambda}\in dx_{-\lambda}\;\big|\;X_\lambda=x_\lambda,\Lambda=\lambda\big).
\]
This differs from the data conditional $p^\star(x_{-\lambda}\mid x_\lambda)$
because conditioning on $\Lambda=\lambda$ can itself carry information about the
missing coordinates. The next lemma makes the difference explicit.

\begin{lemma}[Posterior factorization]
\label{lem:bayes}
For every $\lambda$ with $\mathbb P(\Lambda=\lambda)>0$ and $p^\star_\lambda$
almost every $x_\lambda$,
\[
    r^\star_\lambda(x_{-\lambda}\mid x_\lambda)
    =p^\star(x_{-\lambda}\mid x_\lambda)\cdot
    \frac{\pi_\lambda(x_\lambda,x_{-\lambda})}{\bar\pi_\lambda(x_\lambda)},
    \qquad
    \bar\pi_\lambda(x_\lambda):=\mathbb E\big[\pi_\lambda(X)\mid X_\lambda=x_\lambda\big].
\]
\end{lemma}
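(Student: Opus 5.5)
The plan is to verify the claimed formula by checking that it defines a version of the regular conditional distribution, i.e.\ that it reproduces the joint law of $(X_\lambda,X_{-\lambda})$ on the event $\{\Lambda=\lambda\}$. Concretely, fix $\lambda$ with $\mathbb P(\Lambda=\lambda)>0$ and write $\kappa(dx_{-\lambda}\mid x_\lambda):=p^\star(dx_{-\lambda}\mid x_\lambda)\,\pi_\lambda(x_\lambda,x_{-\lambda})/\bar\pi_\lambda(x_\lambda)$. I will show that for all bounded measurable $f$ of $x_\lambda$ and $h$ of $x_{-\lambda}$,
\[
    \mathbb E\big[f(X_\lambda)h(X_{-\lambda})\mathbf 1\{\Lambda=\lambda\}\big]
    =\mathbb E\Big[f(X_\lambda)\mathbf 1\{\Lambda=\lambda\}\int h\,d\kappa(\cdot\mid X_\lambda)\Big].
\]
Since both sides are determined by the restriction of $P^\star$ to $\{\Lambda=\lambda\}$, this identifies $\kappa$ as a version of $r^\star_\lambda$ up to $P^\star(X_\lambda\in\cdot\mid\Lambda=\lambda)$-null sets. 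A monotone class argument then extends from product functions to the full $\sigma$-algebra, which gives uniqueness of regular conditional distributions.

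The computation proceeds in three steps. First, condition the left side on $X$: since $\mathbb P(\Lambda=\lambda\mid X)=\pi_\lambda(X)$, it equals $\mathbb E_{X\sim p^\star}[f(X_\lambda)h(X_{-\lambda})\pi_\lambda(X)]$. Second, use the regular conditional $p^\star(\cdot\mid x_\lambda)$ from Assumption~\ref{ass:reg} and the disintegration \eqref{eq:disint} of Lemma~\ref{lem:condexp} to write this as $\mathbb E[f(X_\lambda)\int h(u)\pi_\lambda(X_\lambda,u)\,p^\star(du\mid X_\lambda)]$. Third, treat the right side the same way. Conditioning on $X$ gives $\mathbb E[f(X_\lambda)\pi_\lambda(X)\int h\,d\kappa(\cdot\mid X_\lambda)]$. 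Because the integral depends only on $X_\lambda$, conditioning on $X_\lambda$ replaces $\pi_\lambda(X)$ by $\bar\pi_\lambda(X_\lambda)$. That factor cancels the denominator in $\kappa$ and recovers the expression from the second step.

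The main obstacle is the division by $\bar\pi_\lambda$, which can vanish. My first move is to note that the set $N=\{x_\lambda:\bar\pi_\lambda(x_\lambda)=0\}$ has $\mathbb P(X_\lambda\in N,\Lambda=\lambda)=\mathbb E[\bar\pi_\lambda(X_\lambda)\mathbf 1_N]=0$. So on $N$ we may define $\kappa$ arbitrarily, and the cancellation is legitimate off a set that is null for the relevant law. I read the statement's ``$p^\star_\lambda$ almost every $x_\lambda$'' as almost every point where $\bar\pi_\lambda>0$, which covers the support of $X_\lambda$ given $\Lambda=\lambda$.

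Two further checks remain. Boundedness $0\le\pi_\lambda\le1$ justifies every interchange via Fubini. Normalization $\int\kappa(du\mid x_\lambda)=1$ holds by the definition of $\bar\pi_\lambda$ as $\int\pi_\lambda(x_\lambda,u)\,p^\star(du\mid x_\lambda)$. Both are routine.
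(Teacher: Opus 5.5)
Your proof is correct, but it takes a different route from the paper. The paper's proof is two lines of Bayes' rule on densities. It writes $r^\star_\lambda$ as the joint density $p^\star(x)\pi_\lambda(x)$ divided by the density $p^\star_\lambda(x_\lambda)\bar\pi_\lambda(x_\lambda)$ of $(X_\lambda,\{\Lambda=\lambda\})$, then cancels $p^\star_\lambda$. You instead read the formula as the Radon--Nikodym statement $r^\star_\lambda(du\mid x_\lambda)=\frac{\pi_\lambda(x_\lambda,u)}{\bar\pi_\lambda(x_\lambda)}\,p^\star(du\mid x_\lambda)$, and you check that this kernel satisfies the defining identity of the conditional law on the event $\{\Lambda=\lambda\}$.

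The paper's route is shorter, but it tacitly assumes $p^\star$ has a density, which Assumption~\ref{ass:reg} does not grant; that assumption only provides a regular conditional distribution. It also divides by $\bar\pi_\lambda$ without comment. Your route needs only the regular conditional distribution and the tower property.

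Your null-set bookkeeping also exposes a real imprecision in the statement as written. The set $\{\bar\pi_\lambda=0\}$ can carry positive $p^\star_\lambda$-mass under MAR or MNAR, for instance when a pattern is never produced in some region of $x_\lambda$. On that set $r^\star_\lambda$ is not determined and the right side is $0/0$. So the correct qualifier is ``$p^\star_\lambda$-a.e.\ $x_\lambda$ with $\bar\pi_\lambda(x_\lambda)>0$'', equivalently almost every $x_\lambda$ under the law of $X_\lambda$ given $\Lambda=\lambda$, which is exactly how you read it.

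One thing to state explicitly: $\bar\pi_\lambda$ is defined as a conditional expectation, so it is determined only almost everywhere. You should fix the version $\bar\pi_\lambda(x_\lambda)=\int\pi_\lambda(x_\lambda,u)\,p^\star(du\mid x_\lambda)$. That choice is what makes the normalization of your kernel and the cancellation in your third step hold pointwise rather than only almost everywhere.
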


\begin{proof}
By Bayes' rule applied to the joint density of $(X,\Lambda)$,
\[
    r^\star_\lambda(x_{-\lambda}\mid x_\lambda)
    =\frac{p^\star(x_\lambda,x_{-\lambda})\,\pi_\lambda(x_\lambda,x_{-\lambda})}
           {p^\star_\lambda(x_\lambda)\,\bar\pi_\lambda(x_\lambda)},
\]
since the joint density of $(X_\lambda,\Lambda=\lambda)$ is
$p^\star_\lambda(x_\lambda)\bar\pi_\lambda(x_\lambda)$. Dividing numerator and
denominator by $p^\star_\lambda(x_\lambda)$ gives the stated factorization.
\end{proof}

The ratio in Lemma~\ref{lem:bayes} is a tilting factor. It measures how much more
or less likely the pattern $\lambda$ is at the particular missing value
$x_{-\lambda}$ than on average over missing values consistent with $x_\lambda$.
Under MAR the mechanism does not depend on $x_{-\lambda}$, so
$\pi_\lambda(x_\lambda,x_{-\lambda})=\pi_\lambda(x_\lambda)=\bar\pi_\lambda(x_\lambda)$,
the ratio is one, and the posterior collapses to the data conditional. Under MCAR
both reduce further to $\mathbb P(\Lambda=\lambda)$ and the same collapse occurs.
Under MNAR the ratio is a genuine function of the unobserved coordinates.

\subsection{Equivalence for a General Mechanism}

The general equivalence needs nothing beyond the tower property.

\begin{proposition}[General oracle equivalence, full statement]
\label{prop:tower}
Let $\mathbb E|g_\theta(X)|<\infty$. If completions are drawn from
$r^\star_\Lambda(\cdot\mid X_\Lambda)$, then for any missingness mechanism,
\[
    \mathbb E_{O}\,\mathbb E_{X_{-\Lambda}\sim r^\star_\Lambda(\cdot\mid X_\Lambda)}
    \big[g_\theta(X_\Lambda,X_{-\Lambda})\big]
    =\mathbb E_{X\sim p^\star}\big[g_\theta(X)\big]
    =\mathcal L_{\mathrm{FM}}(\theta).
\]
\end{proposition}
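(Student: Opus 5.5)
The plan is to obtain the identity from the disintegration form \eqref{eq:disint} of Lemma~\ref{lem:condexp}, applied to the joint law $P^\star$ of $(X,\Lambda)$ with the conditioning element taken to be the observation $O=(\Lambda,X_\Lambda)$ itself rather than a fixed pattern. First I would make the observation a single random element: $O$ takes values in the disjoint union $\bigsqcup_\lambda\{\lambda\}\times\mathbb R^{|\lambda|}$ over patterns with $\mathbb P(\Lambda=\lambda)>0$. The family $r^\star_\lambda(\cdot\mid x_\lambda)$, defined patternwise, therefore assembles into a kernel $\kappa(\cdot\mid O)=r^\star_\Lambda(\cdot\mid X_\Lambda)$. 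By its definition, $\kappa$ is a regular conditional distribution of $W:=X_{-\Lambda}$ given $V:=O$ under $P^\star$. Since there are finitely many patterns, measurability of the assembled kernel reduces to measurability on each piece, and existence on each piece follows from standard Borel space arguments, as in Assumption~\ref{ass:reg}.

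Next I would take $h(V,W)=g_\theta(X_\Lambda,W)$, meaning the vector reassembled in its original coordinates using the pattern stored in $V$. The key observation is that $h(O,X_{-\Lambda})=g_\theta(X)$ pointwise on the sample space, because $(X_\Lambda,X_{-\Lambda})$ reassembled by $\Lambda$ is exactly $X$. The hypothesis $\mathbb E|g_\theta(X)|<\infty$ is therefore precisely integrability of $h$ under the joint law of $(V,W)$. Lemma~\ref{lem:condexp} then gives
\[
\mathbb E_O\Big[\int g_\theta(X_\Lambda,w)\,r^\star_\Lambda(dw\mid X_\Lambda)\Big]=\mathbb E_{P^\star}[g_\theta(X)].
\]
The right side depends only on the $X$-marginal of $P^\star$, which is $p^\star$ regardless of the mechanism. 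Hence it equals $\mathbb E_{X\sim p^\star}[g_\theta(X)]=\mathcal L_{\mathrm{FM}}(\theta)$ by the identity from the preliminaries.

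Equivalently, I would present this patternwise, which makes the role of $\mathbb P(\Lambda=\lambda)>0$ explicit. Condition on $\{\Lambda=\lambda\}$, apply the tower property under $P^\star(\cdot\mid\Lambda=\lambda)$ to obtain $\mathbb E[g_\theta(X)\mid\Lambda=\lambda]$, and sum with weights $\mathbb P(\Lambda=\lambda)$ to recover $\mathbb E[g_\theta(X)]$ by the law of total expectation.

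The main obstacle is the measure-theoretic bookkeeping rather than any computation. I must check that the assembled kernel is a genuine regular conditional distribution given $O$, and that the equality holds despite $r^\star_\lambda$ being defined only up to $P^\star(\cdot\mid\Lambda=\lambda)$-null sets of $x_\lambda$. Both issues are harmless: the outer expectation only sees $O$-almost every value, so modifications on null sets do not affect it. Absolute integrability justifies Fubini, and since $g_\theta\ge0$, Tonelli alone would suffice.
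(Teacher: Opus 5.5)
Your proposal is correct and follows the paper's own argument. The inner integral against $r^\star_\Lambda(\cdot\mid X_\Lambda)$ is $\mathbb E[g_\theta(X)\mid O]$ by definition of the regular conditional distribution, the tower property over $O$ then gives $\mathbb E_{P^\star}[g_\theta(X)]$, and the $X$-marginal of $P^\star$ is $p^\star$, which yields $\mathcal L_{\mathrm{FM}}(\theta)$; your extra bookkeeping (assembling the patternwise kernels over the finitely many patterns with $\mathbb P(\Lambda=\lambda)>0$, and handling null sets) only makes explicit what the paper's short proof leaves implicit.
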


\begin{proof}
By construction of the regular conditional distribution $r^\star_\Lambda$, the
inner expectation is $\mathbb E[g_\theta(X)\mid O]$ for $P^\star$ almost every $O$.
Integrability of $g_\theta(X)$ makes this conditional expectation well defined, and
the tower property gives
$\mathbb E_O\big[\mathbb E[g_\theta(X)\mid O]\big]=\mathbb E[g_\theta(X)]$. The
right side is $\mathcal L_{\mathrm{FM}}(\theta)$ by the identity
$\mathcal L_{\mathrm{FM}}(\theta)=\mathbb E_{X_1\sim p^\star}[g_\theta(X_1)]$ from
the preliminaries.
\end{proof}

Proposition~\ref{prop:tower} is a statement about the correct posterior, not about
what we can compute. Its practical content depends entirely on whether
$r^\star_\Lambda$ is available, which is exactly where the mechanisms separate.

\subsection{What Each Mechanism Does to the Objective}

The objective we can actually write down uses the data conditional rather than the
posterior. Define the naive objective
\[
    \mathcal L_{\mathrm{naive}}(\theta)
    :=\mathbb E_{O}\,\mathbb E_{X_{-\Lambda}\sim p^\star(\cdot\mid X_\Lambda)}
    \big[g_\theta(X_\Lambda,X_{-\Lambda})\big],
\]
which is the objective used throughout the paper. The next result says exactly
what it computes.

\begin{proposition}[Exact form of the naive objective]
\label{prop:naive}
For any missingness mechanism,
\[
    \mathcal L_{\mathrm{naive}}(\theta)
    =\mathbb E_{X\sim p^\star}\big[w(X)\,g_\theta(X)\big],
    \qquad
    w(x):=\sum_{\lambda}\bar\pi_\lambda(x_\lambda),
\]
where the sum runs over patterns with $\mathbb P(\Lambda=\lambda)>0$. The weight
satisfies $w\ge0$ and $\mathbb E_{p^\star}[w(X)]=1$. Under MAR, and hence under
MCAR, $w\equiv1$ and $\mathcal L_{\mathrm{naive}}=\mathcal L_{\mathrm{FM}}$.
\end{proposition}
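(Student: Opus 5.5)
The plan is to expand the outer expectation over $O=(\Lambda,X_\Lambda)$ pattern by pattern, rewrite the density of $X_\Lambda$ on each pattern in terms of $p^\star_\lambda$ and $\bar\pi_\lambda$, and then recombine the pieces into a single integral against $p^\star$ using the tower property (Lemma~\ref{lem:condexp}). Fix a pattern $\lambda$ with $\mathbb P(\Lambda=\lambda)>0$. As in the proof of Lemma~\ref{lem:bayes}, the joint law of $(X_\lambda,\Lambda=\lambda)$ has density $p^\star_\lambda(x_\lambda)\bar\pi_\lambda(x_\lambda)$. This gives
\[
\mathcal L_{\mathrm{naive}}(\theta)=\sum_\lambda \mathbb E_{X_\lambda\sim p^\star_\lambda}\Big[\bar\pi_\lambda(X_\lambda)\,\mathbb E_{X_{-\lambda}\sim p^\star(\cdot\mid X_\lambda)}\big[g_\theta(X_\lambda,X_{-\lambda})\big]\Big].
\]
The point here is that $P(\Lambda=\lambda\mid X_\lambda)=\bar\pi_\lambda(X_\lambda)$ by the definition of $\bar\pi_\lambda$ as a conditional expectation of $\pi_\lambda(X)$. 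So I would phrase this step measure-theoretically, as $\mathbb E[\mathbf 1\{\Lambda=\lambda\}F(X_\lambda)]=\mathbb E[\bar\pi_\lambda(X_\lambda)F(X_\lambda)]$, and avoid relying on densities.

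On each pattern, $\bar\pi_\lambda(X_\lambda)$ is $X_\lambda$-measurable. It can therefore be moved inside the inner conditional expectation. Then \eqref{eq:disint}, with $V=X_\lambda$, $W=X_{-\lambda}$ and $h(v,w)=\bar\pi_\lambda(v)g_\theta(v,w)$, turns each summand into $\mathbb E_{X\sim p^\star}[\bar\pi_\lambda(X_\lambda)g_\theta(X)]$. Summing over the finitely many patterns and using linearity gives $\mathbb E_{p^\star}[w(X)g_\theta(X)]$ with $w=\sum_\lambda\bar\pi_\lambda(x_\lambda)$.

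The properties of $w$ follow directly. It is nonnegative because each $\bar\pi_\lambda\ge0$. For the mean, the tower property gives $\mathbb E[\bar\pi_\lambda(X_\lambda)]=\mathbb E[\pi_\lambda(X)]=\mathbb P(\Lambda=\lambda)$, and these probabilities sum to one. Under MAR, $\pi_\lambda(x)=\pi_\lambda(x_\lambda)$, so $\bar\pi_\lambda(x_\lambda)=\pi_\lambda(x_\lambda)$. Hence $w(x)=\sum_\lambda\pi_\lambda(x_\lambda)=\sum_\lambda\pi_\lambda(x)=1$ for every $x$, since $\pi_\cdot(x)$ is a probability distribution over patterns; patterns with zero probability contribute zero $p^\star$-a.e. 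Then $\mathcal L_{\mathrm{naive}}=\mathbb E_{p^\star}[g_\theta]=\mathcal L_{\mathrm{FM}}$ by the preliminaries' identity. MCAR is the special case of MAR.

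The main obstacle is justifying the interchanges. I would handle it as follows. Since $g_\theta\ge0$ (it averages the nonnegative loss $\ell_\theta$) and $0\le\bar\pi_\lambda\le1$, every step is an integral of a nonnegative function, so Tonelli applies without extra assumptions. Integrability of the final expression follows from $w\le|\{\lambda\}|$ and $\mathbb E|g_\theta(X)|<\infty$. A subtle point is that without MAR, $w$ need not be bounded by one pointwise. It is bounded by the number of patterns, though, which suffices.
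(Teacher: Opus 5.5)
Your proposal is correct and follows the paper's proof essentially step for step. You expand the outer expectation pattern by pattern, write the law of $X_\lambda$ on $\{\Lambda=\lambda\}$ as $p^\star_\lambda$ tilted by $\bar\pi_\lambda$, and recombine with $p^\star(\cdot\mid x_\lambda)$ into a single integral against $p^\star$. You then get $\mathbb E[w]=1$, and $w\equiv1$ under MAR, from the tower property and $\sum_\lambda\pi_\lambda(x)=1$, exactly as the paper does.

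Where you differ is only in care, not in the argument. You phrase the key step measure-theoretically via $\mathbb P(\Lambda=\lambda\mid X_\lambda)=\bar\pi_\lambda(X_\lambda)$ and justify the interchanges by Tonelli, where the paper computes with densities. You also note correctly that $w\equiv1$ holds $p^\star$-a.e.\ rather than literally pointwise, because zero-probability patterns are dropped from the sum, whereas the paper says pointwise.
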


\begin{proof}
The density of the observed part given the pattern is
$p(x_\lambda\mid\Lambda=\lambda)=p^\star_\lambda(x_\lambda)\bar\pi_\lambda(x_\lambda)/\mathbb P(\Lambda=\lambda)$.
Expanding the outer expectation over $O$ and cancelling $\mathbb P(\Lambda=\lambda)$,
\begin{align*}
    \mathcal L_{\mathrm{naive}}(\theta)
    &=\sum_\lambda\int p^\star_\lambda(x_\lambda)\bar\pi_\lambda(x_\lambda)
      \int p^\star(x_{-\lambda}\mid x_\lambda)g_\theta(x)\,dx_{-\lambda}\,dx_\lambda\\
    &=\sum_\lambda\int p^\star(x)\,\bar\pi_\lambda(x_\lambda)\,g_\theta(x)\,dx
     =\mathbb E_{p^\star}\big[w(X)g_\theta(X)\big].
\end{align*}
Nonnegativity is immediate. For the mean, the tower property gives
$\mathbb E_{p^\star}[\bar\pi_\lambda(X_\lambda)]=\mathbb E[\pi_\lambda(X)]=\mathbb P(\Lambda=\lambda)$,
and summing over $\lambda$ gives one. Under MAR,
$\bar\pi_\lambda(x_\lambda)=\pi_\lambda(x)$, so
$w(x)=\sum_\lambda\pi_\lambda(x)=1$ pointwise.
\end{proof}

Proposition~\ref{prop:naive} gives a sharp picture of all three cases. Under MCAR
and MAR the weight is identically one and the naive objective is exactly the
complete data objective, which is Theorem~\ref{thm:oracle} and its MAR extension.
Under MNAR the naive objective is a reweighted complete data objective with a weight that averages to one but
need not be constant, and is nonconstant for the mechanisms of interest, so the
bias is
\[
    \mathcal L_{\mathrm{naive}}(\theta)-\mathcal L_{\mathrm{FM}}(\theta)
    =\mathbb E_{p^\star}\big[(w(X)-1)\,g_\theta(X)\big],
\]
which vanishes only when the weight is uncorrelated with the endpoint loss. The
method does not silently fail under MNAR, it optimizes a tilted objective that
overweights regions of the data space where the observed coordinates make
missingness patterns more likely.

\subsection{Why the MCAR Proof Does Not Extend to MAR}

The equivalence extends to MAR but the proof strategy does not, and the reason is
worth isolating. The proof of Theorem~\ref{thm:oracle} fixes a pattern $\lambda$
and shows that the inner expectation equals $\mathcal L_{\mathrm{FM}}(\theta)$ for
that pattern, then averages a constant over $\Lambda$. That step uses
$X_\Lambda\mid\{\Lambda=\lambda\}\sim p^\star_\lambda$, which is an MCAR property.
Under MAR the observed part is tilted,
\[
    p(x_\lambda\mid\Lambda=\lambda)
    =\frac{p^\star_\lambda(x_\lambda)\,\pi_\lambda(x_\lambda)}{\mathbb P(\Lambda=\lambda)}
    \ \neq\ p^\star_\lambda(x_\lambda)
    \quad\text{in general},
\]
so the per pattern expectation is
$\mathbb E_{p^\star}[\pi_\lambda(X)g_\theta(X)]/\mathbb P(\Lambda=\lambda)$, which
is a tilted version of $\mathcal L_{\mathrm{FM}}(\theta)$ and depends on $\lambda$.
Only after averaging over $\Lambda$ do the tilts sum to one and cancel, by
$\sum_\lambda\pi_\lambda(x)=1$.

The distinction matters. MCAR gives equality pattern by pattern, which is a
stronger statement than the paper needs. MAR gives equality only in aggregate.
Both yield the same population objective, so every result in the paper that
depends only on $\mathcal L_{\mathrm{MDFM}}=\mathcal L_{\mathrm{FM}}$ carries over
to MAR unchanged, including unbiasedness and the variance decomposition. Results
that reference the per pattern marginal $p^\star_\lambda$ directly should be read
as MCAR statements.

\subsection{Identifiability Under Each Mechanism}
\label{app:ident}

Estimability, not the algebra, is what ultimately separates MAR from MNAR. Under
MCAR and MAR the mechanism is ignorable in the sense of Rubin
\citep{rubin1976inference}, so the completion conditional
$p^\star(x_{-\lambda}\mid x_\lambda)$ can be learned from the observed data alone
without modeling $\pi_\lambda$, subject to the observed-pattern marginals identifying the joint law within the
chosen model class. This is what makes $q_\phi$ trainable in practice.

Under MNAR the correct completion is $r^\star_\Lambda$, which by
Lemma~\ref{lem:bayes} requires the tilting factor
$\pi_\lambda(x_\lambda,x_{-\lambda})/\bar\pi_\lambda(x_\lambda)$. Its numerator is
a function of the unobserved coordinates, so it cannot be recovered from observed
data without additional structure, such as a parametric model for the mechanism
\citep{ipsen2020not}, an instrument, or a shadow variable. Fitting a completion model to observed data
under MNAR therefore estimates $p^\star(\cdot\mid x_\lambda)$ rather than
$r^\star_\Lambda$, and by Proposition~\ref{prop:naive} the resulting flow matching
objective carries the weight $w$. Extending our method to MNAR means supplying
that missing structure, which we leave to future work.

\subsection{The Objective Gap Under Deterministic Imputation}
\label{app:pointimp}

Remark~\ref{rem:pointimp} states that deterministic imputation generally changes
the objective. We make the gap explicit. Let $\hat x_{-\Lambda}(\cdot)$ be any
deterministic imputation rule and define
\[
    \mathcal L_{\mathrm{imp}}(\theta)
    :=\mathbb E_{O}\big[g_\theta\big(X_\Lambda,\hat x_{-\Lambda}(X_\Lambda)\big)\big].
\]
Under MCAR, Theorem~\ref{thm:oracle} gives
$\mathcal L_{\mathrm{FM}}(\theta)=\mathbb E_O\mathbb E_{X_{-\Lambda}\sim
p^\star(\cdot\mid X_\Lambda)}[g_\theta(X_\Lambda,X_{-\Lambda})]$, so
\[
    \mathcal L_{\mathrm{imp}}(\theta)-\mathcal L_{\mathrm{FM}}(\theta)
    =\mathbb E_{O}\Big[g_\theta\big(X_\Lambda,\hat x_{-\Lambda}\big)
    -\mathbb E_{X_{-\Lambda}\sim p^\star(\cdot\mid X_\Lambda)}
    \big[g_\theta(X_\Lambda,X_{-\Lambda})\big]\Big].
\]
If $\hat x_{-\Lambda}(x_\Lambda)$ is the conditional mean and
$g_\theta(x_\Lambda,\cdot)$ is affine, the inner difference vanishes, since the
value at the mean equals the mean of the values. For a nonlinear $g_\theta$ it is
generally nonzero when $p^\star(\cdot\mid X_\Lambda)$ is nondegenerate, although
equality can occur accidentally for particular conditional laws. We do not claim a fixed sign, since
$g_\theta$ need not be convex in the endpoint for a general nonlinear
$v_\theta$. What the experiments in Appendix~\ref{app:exp:strategies} show is the
consequence rather than the sign, namely that a model trained on point imputed
endpoints reproduces a conditional distribution whose spread has collapsed.

\section{Experimental Details and Additional Results}
\label{app:experiments}

This appendix gives the full protocol behind the experiments and reports the
results that the main text summarizes in one line each.

\subsection{The Training Procedure}
\label{app:algorithm}

Algorithm~\ref{alg:mdfm} states the procedure in full. The completion model is
fitted once on an independent auxiliary sample and frozen, after which training
differs from ordinary flow matching only in that the endpoint is supplied by a
draw from $q_\phi$ rather than read from the data.

\begin{algorithm}[t]
\caption{Missing-Data Flow Matching}
\label{alg:mdfm}
\begin{algorithmic}[1]
\REQUIRE Incomplete data $\{(X_{\Lambda_i}^{(i)},\Lambda_i)\}_{i=1}^n$; base
$p_0$; completion model $q_\phi$; field $v_\theta$; completions $K$; steps $S$;
learning rate $\eta$
\STATE Train $q_\phi(x_{-\Lambda}\mid x_\Lambda,\Lambda)$ on an independent
auxiliary sample by self supervised masking
\STATE Freeze $\phi$ and initialize $\theta$
\FOR{step $=1$ to $S$}
    \STATE Sample a minibatch $\mathcal B$ of incomplete examples
    \FOR{each $i\in\mathcal B$ and each $k=1,\dots,K$}
        \STATE $\widetilde X_{-\Lambda_i}^{(i,k)}\sim q_\phi(\cdot\mid X_{\Lambda_i}^{(i)},\Lambda_i)$
        \STATE $\widetilde X_1^{(i,k)}\leftarrow(X_{\Lambda_i}^{(i)},\widetilde X_{-\Lambda_i}^{(i,k)})$
        \STATE $X_0^{(i,k)}\sim p_0$, \ $T^{(i,k)}\sim\mathrm{Unif}[0,1]$
        \STATE $X_t^{(i,k)}\leftarrow(1-T^{(i,k)})X_0^{(i,k)}+T^{(i,k)}\widetilde X_1^{(i,k)}$
        \STATE $\ell^{(i,k)}\leftarrow\big\|v_\theta(X_t^{(i,k)},T^{(i,k)})-(\widetilde X_1^{(i,k)}-X_0^{(i,k)})\big\|^2$
    \ENDFOR
    \STATE $\widehat{\mathcal L}^\phi_{\mathcal B,K}(\theta)\leftarrow\frac{1}{|\mathcal B|}\sum_{i\in\mathcal B}\frac1K\sum_{k=1}^K\ell^{(i,k)}$
    \STATE $\theta\leftarrow\theta-\eta\,\nabla_\theta\widehat{\mathcal L}^\phi_{\mathcal B,K}(\theta)$
\ENDFOR
\ENSURE Trained vector field $v_\theta$
\end{algorithmic}
\end{algorithm}

\subsection{Targets, Architecture, and Estimation Procedure}
\label{app:exp:protocol}

\paragraph{Synthetic target.} The synthetic runs use a zero-mean Gaussian whose
covariance is an AR(1) correlation matrix with parameter $0.6$. This gives real
cross-coordinate dependence while keeping the conditional
$p^\star(x_{-\Lambda}\mid x_\Lambda)$ available in closed form, so the oracle
draws come from the true conditional rather than from a learned stand-in. The
bounded-support runs used for Theorem~\ref{thm:missvar} replace it with the same
Gaussian truncated to $[-R,R]^d$ by rejection, which is what
Assumption~\ref{ass:bdd} requires. We sweep $d\in\{10,50,100\}$ and
$\rho\in\{0.1,0.3,0.5,0.7,0.9\}$ under MCAR, and we repeat every setting over
five seeds.

\paragraph{Fixed-parameter design.} All estimator-level quantities are measured
at frozen $\theta$, which is what the finite-sample theory describes. We evaluate
at a random initialization and at a trained checkpoint so the conclusions do not
depend on one point in parameter space. The variance components $\tau_\theta^2$,
$\sigma_{\theta,\mathrm{base}}^2$, and $\sigma_{\theta,\mathrm{miss}}^2$ are
estimated by nested Monte Carlo with an outer loop over observations and an inner
loop over completions, bias corrected for the finite inner sample. The empirical
variance of $\widehat{\mathcal L}^\star_{n,K}$ is measured by directly simulating
the estimator many times, so the formula and the measurement come from two
independent estimation paths and their agreement is not circular.

\paragraph{Missingness patterns.} The estimator-level experiments draw an
independent Bernoulli mask per coordinate. The training experiments instead draw
a fixed set of sixteen patterns once and assign each row one of them uniformly,
which keeps the mechanism exactly MCAR while bounding the number of distinct
Gaussian conditionings a minibatch can contain. This also mirrors real tabular
data, where missingness usually arrives in a limited number of recurring shapes.

\paragraph{Real data.} We use seven numeric UCI datasets, namely concrete, wine,
diabetes, breast cancer, ionosphere, sonar, and digits, ranging from $178$ to
$1797$ rows and from $9$ to $61$ features after constant columns are dropped.
Each dataset is standardized on the training split. Entries are hidden
completely at random at five rates, with the mask repaired so that no row and no
column is fully missing. The hidden values are held aside only to score
imputation and are never seen by any method. Generative quality is measured
against the held-out test split, which is complete.

\paragraph{Architecture and optimization.} Every method shares one vector-field
architecture, a multilayer perceptron with sinusoidal time embedding, three
hidden layers of width $128$, and SiLU activations. The completion model
$q_\phi$ used on real data is a conditional flow matching model with the same
backbone that predicts a velocity on the missing coordinates given the observed
values and the mask. It is trained by self supervised masking, hiding a further
random subset of the observed entries and predicting them from the rest, so it
never touches a value that the missingness hid. All models use Adam \citep{kingma2014adam}.

\subsection{Oracle Equivalence}
\label{app:exp:equiv}

Table~\ref{tab:equiv} reports the loss gap and the gradient agreement behind the
first claim of the experiments section. The gap shrinks with dimension and the
MDFM-to-FM gradient cosine tracks the complete-data same-noise baseline at every
dimension, so the residual is Monte Carlo noise and not a systematic tilt.

\begin{table}[H]
\centering
\small
\setlength{\tabcolsep}{5pt}
\begin{tabular}{@{}rrccc@{}}
\toprule
$d$ & $K$ & $\dfrac{|\mathcal L_{\mathrm{MDFM}}-\mathcal L_{\mathrm{FM}}|}{\mathcal L_{\mathrm{FM}}}$ & $\cos(\mathrm{M},\mathrm{F})$ & $\cos(\mathrm{F},\mathrm{F})$ \\
\midrule
10  & 1 & $0.48\%$ & $0.901$ & $0.894$ \\
10  & 4 & $0.43\%$ & $0.931$ & $0.894$ \\
50  & 1 & $0.26\%$ & $0.967$ & $0.964$ \\
50  & 4 & $0.24\%$ & $0.977$ & $0.964$ \\
100 & 1 & $0.10\%$ & $0.967$ & $0.966$ \\
100 & 4 & $0.10\%$ & $0.976$ & $0.966$ \\
\bottomrule
\end{tabular}
\caption{Oracle equivalence at fixed parameters. $\cos(\mathrm{M},\mathrm{F})$ is
the cosine between the MDFM and complete-data gradient estimates and
$\cos(\mathrm{F},\mathrm{F})$ is the cosine between two independent complete-data
estimates, which is the value Monte Carlo noise alone produces. Mean over five
seeds and missing rates $\rho\in\{0.3,0.7\}$, pooling the random-initialization
and trained checkpoints. Cosines use the largest batch.}
\label{tab:equiv}
\end{table}

\subsection{Variance Decomposition and the $K=1$ Identity}
\label{app:exp:variance}

Figure~\ref{fig:theory} confirms Theorem~\ref{thm:vardecomp} and
Corollary~\ref{cor:noinflate}. The variance components and the estimator
variance are measured by the two independent routes described in
Appendix~\ref{app:exp:protocol}, nested Monte Carlo for the former and direct
simulation of the estimator for the latter, so their agreement checks the
decomposition rather than restating it.

\begin{figure}[H]
\centering
\includegraphics[width=0.85\linewidth]{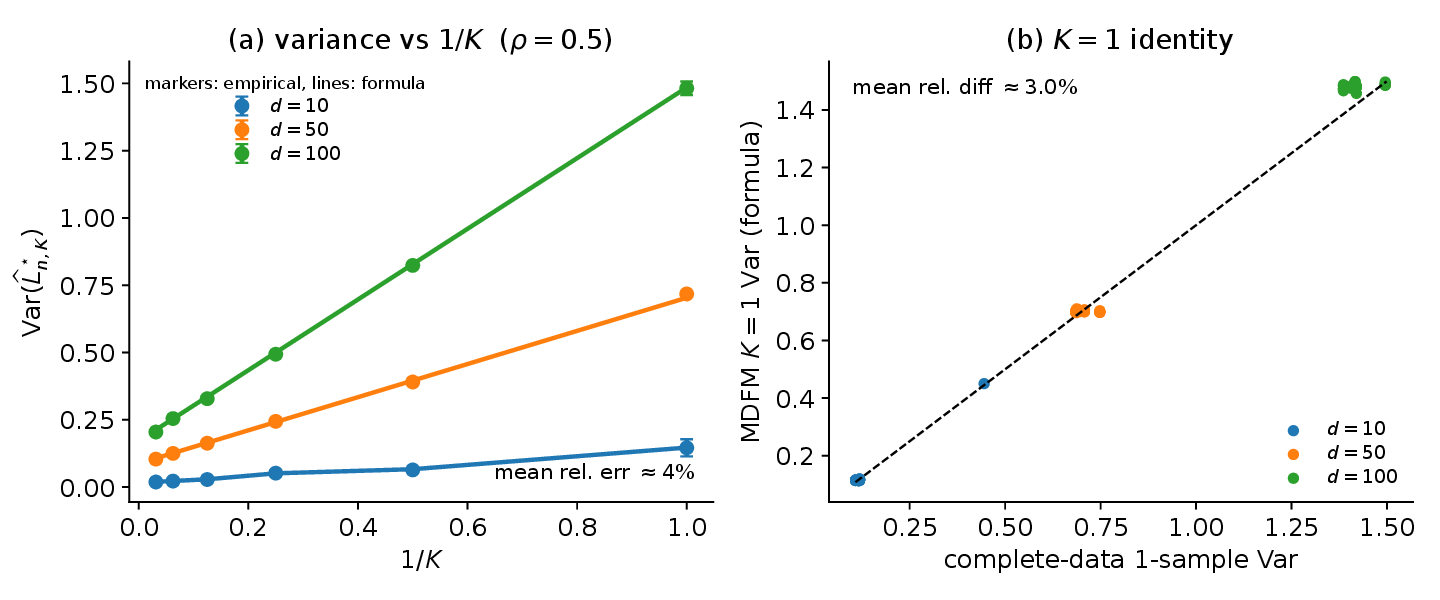}
\caption{Numerical confirmation of Theorem~\ref{thm:vardecomp} and
Corollary~\ref{cor:noinflate} on Gaussian data, five seeds. (a) The simulated
variance of $\widehat{\mathcal L}^\star_{n,K}$ (markers) against the prediction
of Theorem~\ref{thm:vardecomp} (lines) at $\rho=0.5$, falling as $1/K$ toward
the between-observation floor $\tau_\theta^2/n$ rather than toward zero, with
mean relative error about four percent. The two are estimated by independent
routes, the markers by simulating the estimator directly and the lines by nested
Monte Carlo of the three variance components, so their agreement is a check
rather than a tautology. (b) The $K=1$ identity of
Corollary~\ref{cor:noinflate}. The oracle variance at one completion, predicted
by the formula, against the complete-data one-sample variance, at $d=10$, $50$,
and $100$. All three points lie on the diagonal, with mean relative difference
about three percent.}
\label{fig:theory}
\end{figure}

\subsection{The Missing-Fraction Bound}
\label{app:exp:bound}

Figure~\ref{fig:bound} plots the measured $\sigma_{\theta,\mathrm{miss}}^2$
against $\rho d$ together with the cap $R^2L_g^2\rho d$ of
Theorem~\ref{thm:missvar}. The measured value stays below the cap at every
setting, and the ratio is on the order of $10^{-4}$, so the worst case overshoots
by three to four orders of magnitude. Within each dimension the measured value
still rises close to linearly in $\rho d$, so the shape the theorem predicts is
correct even though the constant is far too large. The ratio also falls with
dimension, from about $5\times10^{-4}$ at $d=10$ to about $1\times10^{-4}$ at
$d=100$, because the empirical Lipschitz constant $L_g$ grows with dimension
faster than the measured variance does, so the dashed line is a numerical proxy for the worst case rather than a certified cap. The curves for the three dimensions
nearly collapse onto one another as functions of $\rho d$, which supports the
choice of $\rho d$ rather than $d$ as the quantity the bound is written in.

\begin{figure}[H]
\centering
\includegraphics[width=\linewidth]{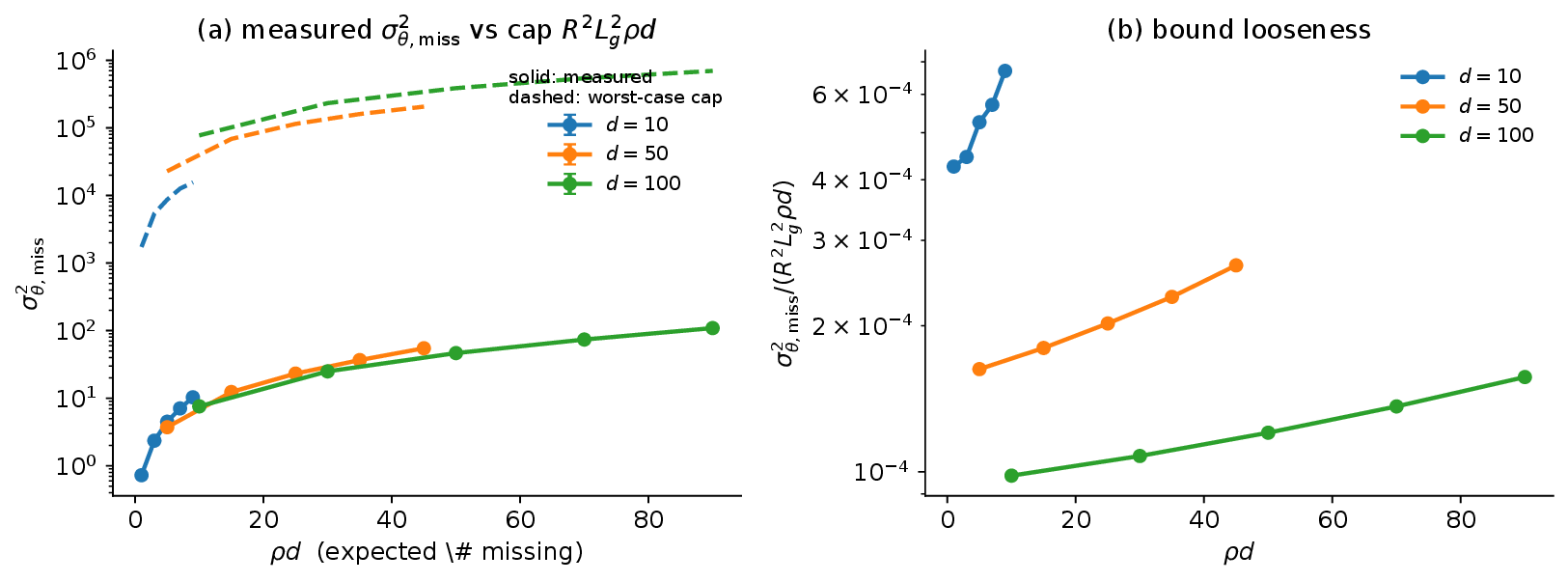}
\caption{The missing-fraction bound. Left, the measured
$\sigma^2_{\theta,\mathrm{miss}}$ against the expected number of missing
coordinates $\rho d$, with the worst-case cap $R^2L_g^2\rho d$ as the dashed line
of matching color. Right, the ratio of the two, on the order of $10^{-4}$ across
dimensions.}
\label{fig:bound}
\end{figure}

\subsection{Completion Models and the Transport Bound}
\label{app:exp:certificate}

Table~\ref{tab:bias} lists the twelve completion models behind
Figure~\ref{fig:bias} together with their completion distance, the bound, and the ratio of the two. The mean-shift and conditional-mean families have a
closed-form $W_1$, since translation and point-mass couplings are optimal, so those rows use an exact $W_1$. The remaining rows report the
cost of one explicit coupling, which is an upper bound on $W_1$. The oracle row has zero distance, so its
measured bias of $0.10$ is the Monte Carlo noise floor against which every other
row should be read. Throughout, $L_g$ is estimated from sampled pairs and the completion models are
Gaussian rather than compactly supported, so these plots probe the inequality of
Lemma~\ref{lem:wbound} numerically rather than under its literal hypotheses.

\begin{figure}[H]
\centering
\includegraphics[width=\linewidth]{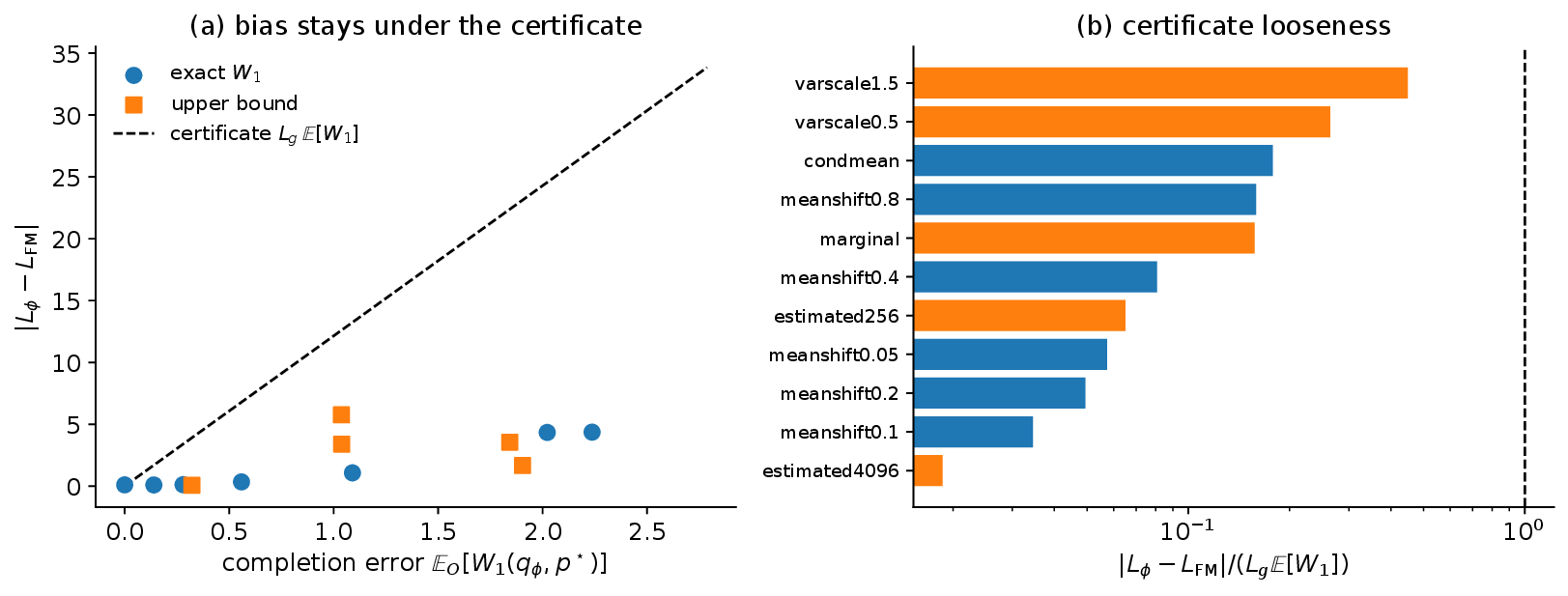}
\caption{Empirical check of Lemma~\ref{lem:wbound} at frozen $\theta$.
Left, the measured bias $|\mathcal L_\phi-\mathcal L_{\mathrm{FM}}|$ against the
completion distance $\mathbb E_O[W_1(q_\phi,p^\star)]$ for twelve completion
models, all below the bound. Circles have a closed-form $W_1$, squares use an upper bound on $W_1$. Right, the
ratio of bias to certificate, below one for every model.}
\label{fig:bias}
\end{figure}

\begin{table}[H]
\centering
\small
\setlength{\tabcolsep}{4pt}
\begin{tabular}{@{}lcccc@{}}
\toprule
completion model & $W_1$ & $|\mathcal L_\phi-\mathcal L_{\mathrm{FM}}|$ & $\mathbb E_O[W_1]$ & ratio \\
\midrule
oracle            & exact & 0.101 & 0.000 & -- \\
mean shift $0.05$ & exact & 0.096 & 0.140 & 0.057 \\
mean shift $0.1$  & exact & 0.118 & 0.279 & 0.035 \\
fitted, $n=4096$  & bound & 0.065 & 0.321 & 0.019 \\
mean shift $0.2$  & exact & 0.341 & 0.559 & 0.049 \\
variance $\times1.5$ & bound & 5.760 & 1.037 & 0.449 \\
variance $\times0.5$ & bound & 3.391 & 1.039 & 0.264 \\
mean shift $0.4$  & exact & 1.072 & 1.090 & 0.081 \\
marginal          & bound & 3.549 & 1.843 & 0.158 \\
fitted, $n=256$   & bound & 1.671 & 1.904 & 0.065 \\
conditional mean  & exact & 4.338 & 2.022 & 0.178 \\
mean shift $0.8$  & exact & 4.362 & 2.237 & 0.159 \\
\bottomrule
\end{tabular}
\caption{Completion models ordered by distance to the truth. The ratio is
$|\mathcal L_\phi-\mathcal L_{\mathrm{FM}}|$ divided by $L_g\,\mathbb E_O[W_1]$ from Lemma~\ref{lem:wbound} and is below one everywhere.
Rows marked exact have a closed-form $W_1$.}
\label{tab:bias}
\end{table}

\subsection{Trained Models Under Different Completion Strategies}
\label{app:exp:strategies}

Table~\ref{tab:trained} and Figure~\ref{fig:resamp} report the experiment behind
the collapse result in the main text. All seven strategies draw from the same
pool of incomplete observations under the same masks, so the only thing that
differs is how the missing part is supplied. The conditional standard deviation
of one coordinate given the rest is the diagnostic that matches
Remark~\ref{rem:pointimp}, since it is exactly the quantity a point fill drives
to zero.

\begin{table}[H]
\centering
\small
\setlength{\tabcolsep}{5pt}
\begin{tabular}{@{}lccc@{}}
\toprule
strategy & cov.\ err & sliced $W_2$ & cond.\ s.d.\ ratio \\
\midrule
complete data              & 0.123 & 0.050 & 0.97 \\
oracle resampled $K{=}4$   & 0.079 & 0.040 & 0.99 \\
oracle resampled $K{=}1$   & 0.106 & 0.053 & 0.98 \\
oracle frozen              & 0.091 & 0.044 & 0.98 \\
fitted resampled $K{=}1$   & 0.100 & 0.048 & 0.97 \\
fitted frozen              & 0.104 & 0.047 & 0.97 \\
conditional mean           & 0.250 & 0.164 & \textbf{0.64} \\
\bottomrule
\end{tabular}
\caption{Trained-model quality under different completion strategies on the
synthetic target. Fitted models are Gaussian conditionals estimated only from
incomplete data. Every stochastic strategy recovers the conditional spread.
Deterministic conditional-mean imputation alone collapses it. Mean over five
seeds.}
\label{tab:trained}
\end{table}

\begin{figure}[H]
\centering
\includegraphics[width=\linewidth]{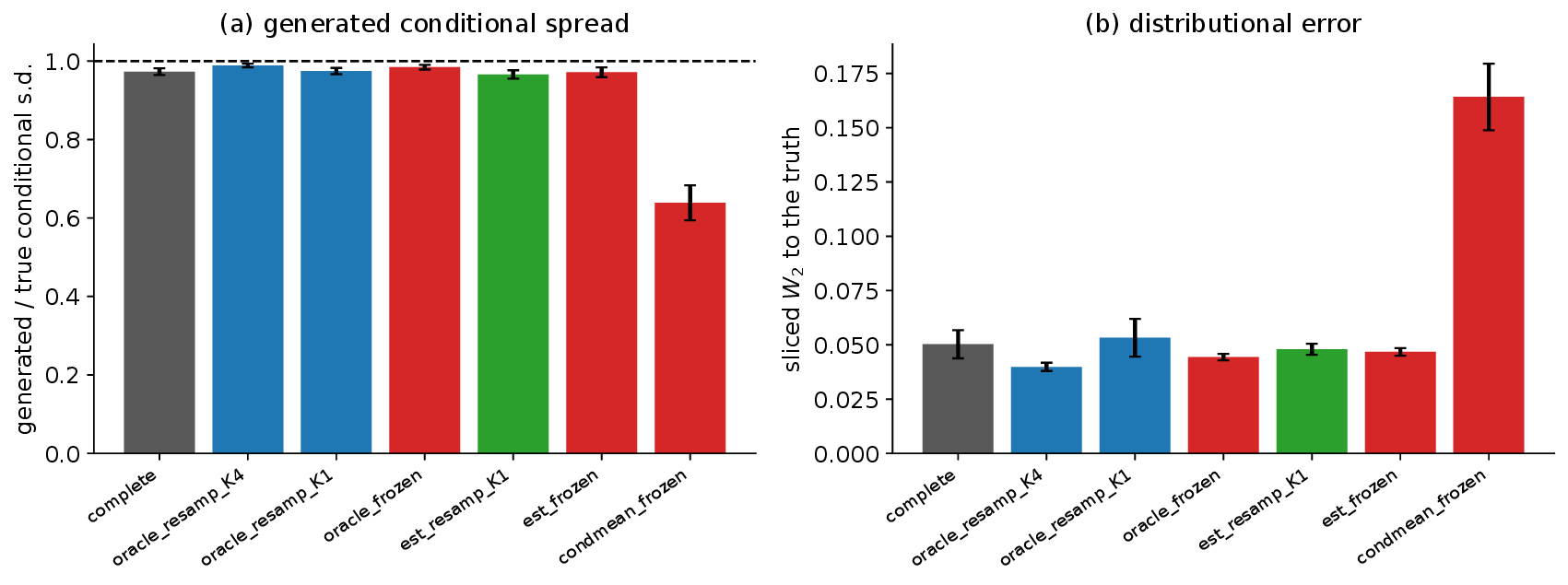}
\caption{Left, the generated conditional standard deviation as a fraction of the
true value, where the dashed line is the target. Right, the sliced Wasserstein
distance to the truth. Only deterministic conditional-mean imputation departs
from the others.}
\label{fig:resamp}
\end{figure}

\subsection{Allocation Under a Fixed Budget}
\label{app:exp:budget}

Figure~\ref{fig:budget} sweeps $K$ with the total evaluation budget $nK$ held
constant. The empirical variance rises with $K$ and tracks the prediction of
Corollary~\ref{cor:budget}, so a single completion is the right choice whenever
more incomplete observations are available.

\begin{figure}[H]
\centering
\includegraphics[width=0.75\linewidth]{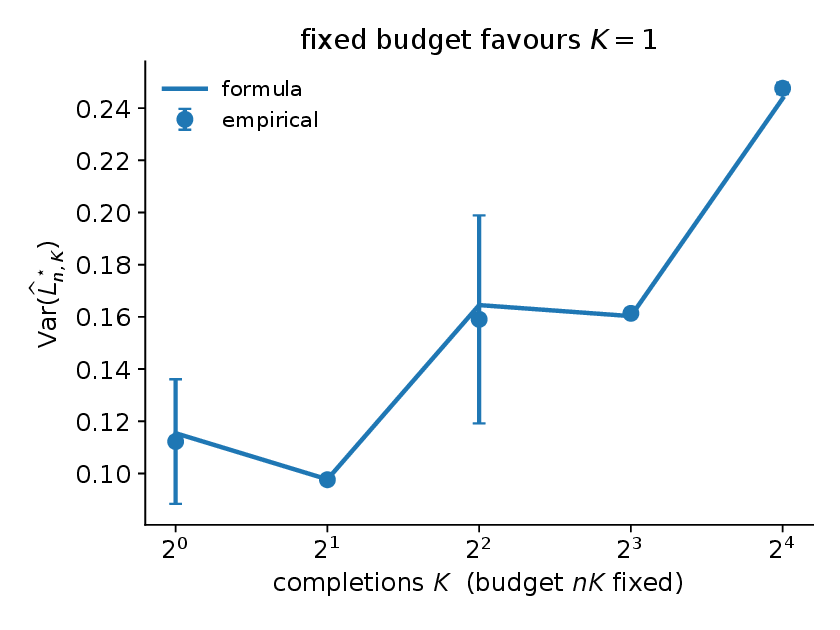}
\caption{Estimator variance against the number of completions $K$ with the budget
$nK$ fixed. Markers are the simulated variance and the line is
Corollary~\ref{cor:budget}.}
\label{fig:budget}
\end{figure}

\subsection{Objective Gap Under MCAR, MAR, and MNAR}
\label{app:exp:mechanisms}

Table~\ref{tab:mech} measures the gap between the naive objective, which
completes from the data conditional $p^\star(x_{-\Lambda}\mid x_\Lambda)$, and
the complete-data objective under all three mechanisms. This is the quantity
Proposition~\ref{prop:naive} predicts. Under MCAR and MAR the weight $w$ is
identically one and the gap should vanish, and both stay at the noise floor even
as the mechanism strengthens. For the MNAR mechanisms used here the weight is not constant, and the gap grows
with the strength of the dependence, reaching nearly five percent. The sign is
negative in all three MNAR settings, which is consistent with the reweighting picture, since
the tilt overweights regions where the observed coordinates make the pattern more
likely.

\begin{table}[H]
\centering
\small
\setlength{\tabcolsep}{6pt}
\begin{tabular}{@{}lcc@{}}
\toprule
mechanism & strength & $(\mathcal L_{\mathrm{naive}}-\mathcal L_{\mathrm{FM}})/\mathcal L_{\mathrm{FM}}$ \\
\midrule
MCAR & --  & $+0.14\%$ \\
MAR  & 1.0 & $+0.03\%$ \\
MAR  & 2.0 & $+0.18\%$ \\
MAR  & 3.0 & $+0.19\%$ \\
MNAR & 1.0 & $-1.91\%$ \\
MNAR & 2.0 & $-4.00\%$ \\
MNAR & 3.0 & $-4.77\%$ \\
\bottomrule
\end{tabular}
\caption{Relative gap between the naive objective and the complete-data objective
under the three mechanisms, mean over five seeds with standard errors below
$0.005$. MCAR and MAR stay at the noise floor as
Proposition~\ref{prop:naive} predicts. Only MNAR opens a growing gap.}
\label{tab:mech}
\end{table}

\begin{figure}[H]
\centering
\includegraphics[width=\linewidth]{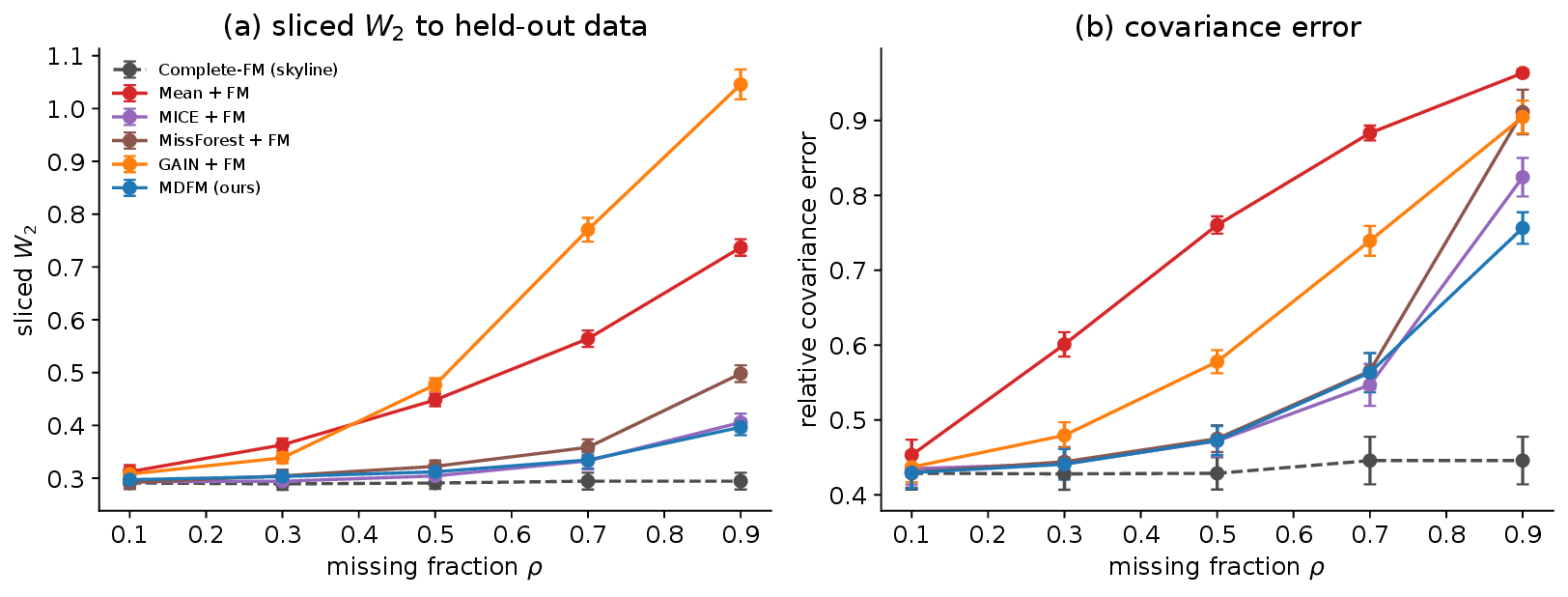}
\caption{Distributional distance to held-out real data against the missing rate,
averaged over seven datasets and five seeds. Left, sliced Wasserstein distance.
Right, covariance error, which most directly reflects joint dependence structure.
The skyline never sees a missing value, so it stays flat.}
\label{fig:realdata}
\end{figure}

\subsection{Real Data, Normalized Scores}
\label{app:exp:normalized}

Table~\ref{tab:realnorm} repeats the real-data comparison using a per-cell
normalized score instead of a rank. Within each dataset, missing rate, and seed
we map the best method to zero and the worst to one and then average. This
checks that the ranking in the main text is not an artifact of rank ties, and it
also shows the size of the gaps rather than only their order. The conclusion is
the same. MDFM and MICE lead, MDFM is best on covariance error among methods
trained on incomplete data, and mean imputation and GAIN trail by a wide margin.

\begin{table}[H]
\centering
\small
\setlength{\tabcolsep}{5pt}
\begin{tabular}{@{}lcccc@{}}
\toprule
method & sW$_2$ & energy & MMD & cov \\
\midrule
Complete-FM (skyline) & 0.131 & 0.084 & 0.057 & 0.099 \\
Mean + FM             & 0.713 & 0.588 & 0.881 & 0.934 \\
MICE + FM             & \textbf{0.204} & \textbf{0.123} & \textbf{0.078} & 0.334 \\
MissForest + FM       & 0.267 & 0.179 & 0.155 & 0.358 \\
GAIN + FM             & 0.816 & 0.844 & 0.758 & 0.564 \\
MDFM (ours)           & \underline{0.237} & \underline{0.134} & \underline{0.122} & \textbf{0.305} \\
\bottomrule
\end{tabular}
\caption{Real data with per-cell normalized scores, where zero is the best method
in that cell and one is the worst, averaged over datasets, missing rates, and
seeds. Best incomplete-data method in \textbf{bold}, second \underline{underlined}.}
\label{tab:realnorm}
\end{table}

\subsection{Scope of the Baseline Comparison}
\label{app:exp:scope}

Our baselines isolate the contribution of the method rather than survey the
field, since every method shares one vector-field backbone and differs only in
how the missing coordinates are supplied. CFMI and CSDI solve a related but
different problem, producing completions rather than a joint generative model. In
our framework they are natural choices for $q_\phi$ rather than baselines for the
joint task, and Lemma~\ref{lem:wbound} already characterizes how any such model
affects the objective through its Wasserstein distance to the truth. HyperImpute
selects among such models, so the same reasoning applies. Plugging them into our
framework is left to future work.

\end{document}